\newcommand{\indep}{\perp \!\!\! \perp}
\DeclareMathOperator{\pa}{Pa}
\newtheorem{theorem}{Theorem}
\newtheorem{definition}{Definition}
\newtheorem{corollary}{Corollary}
\newtheorem{proposition}{Proposition}
\definecolor{customred}{HTML}{CC0000}
\definecolor{customyellow}{HTML}{FCB900}
\definecolor{custompink}{HTML}{CC00CC}
\definecolor{customgreen}{HTML}{00CC66}
\definecolor{customblue}{HTML}{0066CC}
\newcommand{\xmarkred}{\textcolor{customred}{\ding{55}}\hfill}
\newcommand{\xmarkyellow}{\textcolor{customyellow}{\ding{55}}\hfill}
\newcommand{\xmarkpink}{\textcolor{custompink}{\ding{55}}\hfill}
\newcommand{\xmarkgreen}{\textcolor{customgreen}{\ding{55}}\hfill}
\newcommand{\xmarkblue}{\textcolor{customblue}{\ding{55}}\hfill}
\newcommand{\cmark}{\ding{51}\hfill}
\newcommand{\xmark}{\textcolor{lightgray}{\ding{55}}\hfill}
\newcolumntype{S}{>{\hsize=.7\hsize}X}
\newcolumntype{B}{>{\hsize=1.7\hsize}X}
\definecolor{mygreen}{RGB}{0, 102, 0}
\title{DECAF:  Generating Fair Synthetic Data Using Causally-Aware Generative Networks}
\author{%
  Boris van Breugel$^*$ \\
  University of Cambridge \\
  \texttt{bv292@cam.ac.uk} \\
  \And
  Trent Kyono\thanks{Equal contribution} \\
  University of California, Los Angeles\\
  \texttt{tmkyono@ucla.edu} \\
  \And
  Jeroen Berrevoets \\
  University of Cambridge \\
  \texttt{jb2384@cam.ac.uk} \\
  \And
  Mihaela van der Schaar \\
  University of Cambridge \\
  University of California, Los Angeles\\
  The Alan Turing Institute \\
  \texttt{mv472@cam.ac.uk} \\
}
\begin{document}

\maketitle


\begin{abstract}
Machine learning models have been criticized for reflecting unfair biases in the training data. 
Instead of solving for this by introducing fair learning algorithms directly, we focus on generating fair synthetic data, such that any downstream learner is fair.
Generating fair synthetic data from unfair data--- \emph{while remaining truthful to the underlying data-generating process (DGP)} ---is non-trivial.
In this paper, we introduce DECAF: a GAN-based {\it fair} synthetic data generator for tabular data. 
With DECAF we embed the DGP explicitly as a structural causal model in the input layers of the generator, allowing each variable to be reconstructed conditioned on its causal parents. 
This procedure enables \textit{inference-time} debiasing, where biased edges can be strategically removed for satisfying user-defined fairness requirements. 
The DECAF framework is versatile and compatible with several popular definitions of fairness. 
In our experiments, we show that DECAF successfully removes undesired bias and--- in contrast to existing methods ---is capable of generating high-quality synthetic data. 
Furthermore, we provide theoretical guarantees on the generator's convergence and the fairness of downstream models.
\end{abstract}

\section{Introduction}
Generative models are optimized to approximate the original data distribution as closely as possible. Most research focuses on three objectives \cite{alaa2021faithful}: fidelity, diversity, and privacy. The first and second are concerned with how closely synthetic samples resemble real data and how much of the real data's distribution is covered by the new distribution, respectively. The third objective aims to avoid simply reproducing samples from the original data, which is important if the data contains privacy-sensitive information \cite{dpgan, adsgan}. We explore a much-less studied concept: synthetic data fairness.

\textbf{Motivation.} 
Deployed machine learning models have been shown to reflect the bias of the data on which they are trained  \cite{tashea_17AD,dastin,Lu_contextual_bias,manela2021stereotype,kadambi2021achieving}. This has not only unfairly damaged the discriminated individuals but also society's trust in machine learning as a whole. A large body of work has explored ways of detecting bias and creating fair predictors \cite{kamiran2009classifying,feldman2015certifying,zhang2016causal,hardt2016equality, kusner2017counterfactual, kilbertus2017avoiding, zhang2018fairness}, while other authors propose debiasing the data itself \cite{kamiran2009classifying,feldman2015certifying,zhang2016causal,calmon2017optimized}. This work's aim is related to the work of \cite{xu2018fairgan}: to generate fair synthetic data based on unfair data. Being able to generate fair data is important because end-users creating models based on publicly available data might be unaware they are inadvertently including bias or insufficiently knowledgeable to remove it from their model. Furthermore, by debiasing the data prior to public release, one can guarantee {\it any} downstream model satisfies desired fairness requirements by assigning the responsibility of debiasing to the data generating entities.

{\bf Goal.} From a biased dataset $\mathcal{X}$, we are interested in learning a model $G$, that is able to generate an equivalent {\it synthetic} unbiased dataset $\mathcal{X}'$ with minimal loss of data utility. Furthermore, a downstream model trained on the synthetic data needs to make not only unbiased predictions on the synthetic data, but also on real-life datasets (as formalized in Section \ref{sec:dist_fairness}).



\textbf{Solution.}  We approach fairness from a causal standpoint because it provides an intuitive perspective on different definitions of fairness and discrimination \cite{zhang2016causal,kusner2017counterfactual,kilbertus2017avoiding,nabi2018fair,zhang2018fairness}.  We introduce DEbiasing CAusal Fairness (DECAF), a generative adversarial network (GAN) that leverages causal structure for synthesizing data.  Specifically, DECAF is comprised of $d$ generators (one for each variable) that learn the causal conditionals observed in the data. At inference-time, variables are synthesized topologically starting from the root nodes in the causal graph then synthesized sequentially, terminating at the leave nodes. Because of this, DECAF can remove bias at inference-time through targeted (biased) edge removal. As a result, various datasets can be created for desired (or evolving) definitions of fairness.

\textbf{Contributions.} We propose a framework of using causal knowledge for fair synthetic data generation. We make three main contributions: i) DECAF, 
a causal GAN-based model for generating synthetic data, ii) a flexible causal approach for modifying this model such that it can generate fair data, and iii) guarantees that downstream models trained on the synthetic data will also give fair predictions in other settings. Experimentally, we show how DECAF is compatible with several fairness/discrimination definitions used in literature while still maintaining high downstream utility of generated data.

\section{Related Works}
Here we focus on the related work concerned with data generation, in contrast to fairness definitions for which we provide a detailed overview in Section~\ref{sec:defining_fairness} and Appendix \ref{appx:fairness_defs_continued}. As an overview of how data generation methods relate to one another, we refer to Table~\ref{tab:related} which presents all relevant related methods.

\begin{table}[t]
    \centering
    \caption{{\bf Overview of related work for synthetic data.} We organize related work according to our key areas of interest: {\bf (1)} Allows post-hoc distribution changes, {\bf (2)} provides fairness, {\bf (3)} supports causal notion of fairness, {\bf (4)} allows inference-time fairness, {\bf (5)} requires minimal assumptions. We highlight the key contribution, and identify non-neural approaches with ``\textdagger''.}
    \label{tab:related}
    \begin{tabularx}{\textwidth}{l X *{5}{c} l}
    \toprule
        {\bf Model}& {\bf Reference}& {\bf (1)}& {\bf (2)}& {\bf (3)}& {\bf (4)}& {\bf (5)}& {\bf Goal}\\
    \rowcolor{black!5}\midrule\multicolumn{8}{c}{Standard {\it synthetic data generation}}\\

    VAE & \cite{vae}& \xmark & \xmark & \xmark & \xmark & \cmark & Realistic synth. data.\\
    GANs & \cite{goodfellow2014generative, gulrajani2017improved, adsgan, dpgan} & \xmark & \xmark & \xmark & \xmark & \cmark & Realistic synth. data.\\
    
    \rowcolor{black!5}\multicolumn{8}{c}{Methods that {\it detect and/or remove bias}}\\
    
    PSE-DD/DR\textsuperscript{\textdagger}& \cite{zhang2016causal}& \cmark& \cmark& \cmark& \xmark& \xmark& Discover/Remove bias.\\
    OPPDP\textsuperscript{\textdagger}& \cite{calmon2017optimized}& \xmark& \cmark& \xmark& \xmark& \xmark& Remove bias.\\
    DI\textsuperscript{\textdagger}& \cite{feldman2015certifying}& \xmark& \cmark& \xmark& \xmark& \xmark& Discover/Remove bias.\\
    LFR& \cite{zemel2013learning}& \xmark& \cmark& \xmark& \xmark& \cmark& Learn fair representation.\\
    FairGAN& \cite{xu2018fairgan}& \xmark& \cmark& \xmark& \xmark& \cmark& Realistic and fair synth. data.\\
    CFGAN& \cite{xu2019achieving}& \xmark& \cmark& \cmark& \xmark& \cmark& Realistic and fair synth. data.\\
    \midrule
    DECAF & (ours) & \cmark & \cmark & \cmark & \cmark & \cmark & Realistic and fair synth. data.\\

    \bottomrule
    \end{tabularx}
    \vspace{-0.5cm}
\end{table}

{\bf Non-parametric generative modeling.} The standard models for synthetic data generation are either based on VAEs \cite{vae} or GANs \cite{goodfellow2014generative, gulrajani2017improved, adsgan, dpgan}. While these models are well known for their highly realistic synthetic data, they are unable to alter the synthetic data distribution to encourage fairness (except for \cite{xu2018fairgan,xu2019achieving}, discussed below). Furthermore, these methods have no causal notion, which prohibits targeted interventions for synthesizing fair data (Section \ref{sec:defining_fairness}). We explicitly leave out CausalGAN \cite{kocaoglu2017causalgan} and CausalVAE \cite{yang2021causalvae}, which appear similar by incorporating causality-derived ideas but are different in both method and aim (i.e., image generation).

{\bf Fair data generation.} In the bottom section of Table~\ref{tab:related}, we present methods that, in some way, alter the training data of classifiers to adhere to a notion of fairness \cite{zhang2016causal, calmon2017optimized, xu2018fairgan,xu2019achieving, feldman2015certifying, zemel2013learning}. While these methods have proven successful, they lack some important features. For example, none of the related methods allow for post-hoc changes of the synthetic data distribution. This is an important feature, as each situation requires a different perspective on fairness and thus requires a flexible framework for selecting protected variables. Additionally, only \cite{zhang2016causal,xu2019achieving} allow a causal perspective on fairness, despite causal notions underlying multiple interpretations of what should be considered fair \cite{kusner2017counterfactual}. Furthermore, only \cite{xu2018fairgan,xu2019achieving,zemel2013learning} offer a flexible framework, while the others are limited to binary \cite{zhang2016causal,feldman2015certifying} or discrete \cite{calmon2017optimized} settings. \citet{xu2019achieving} also use a causal architecture for the generator, however their method is not as flexible---e.g. it does not easily extend to multiple protected attributes. Finally, in contrast to other methods DECAF is directly concerned with fairness of the downstream model---which is dependent on the setting in which the downstream model is employed (Section \ref{sec:dist_fairness}). In essence, from Table~\ref{tab:related} we learn that DECAF is the only method that combines all key areas of interest. At last, we would like to mention \cite{choi2020fair}, who aim to generate data that resembles a small unbiased reference dataset, by leveraging a large but biased dataset. This is very different to our aim, as we are interested in the downstream model's fairness and explicit notions of fairness.

\section{Preliminaries}
Let $X \in \mathcal{X} \subseteq \mathbb{R}^d$ denote a random variable with distribution $P_X(X)$, with protected attributes $A\in \mathcal{A}\subset \mathcal{X}$ and target variable $Y\in\mathcal{Y}\subset \mathcal{X}$, let $\hat{Y}$ denote a prediction of $Y$. Let the data be given by $\mathcal{D} = \{ \mathbf{x}^{(k)} \}_{k=1}^N$, where each $\mathbf{x}^{(k)} \in \mathcal{D}$ is a realization of $X$. We assume the data generating process can be represented by a directed acyclic graph (DAG)---such that the generation of features can be written as a structural equation model (SEM) \cite{pearl2009}---and that this DAG is causally sufficient. Let $X_i$ denote the $i$\textsuperscript{th} feature in $X$ with causal parents $\pa(X_i) \subset \{X_j:j\neq i\}$, the SEM is given by:
\begin{equation} \label{eq:SEM}
    X_i = f_i(\pa(X_i), Z_i), \forall i
\end{equation}
where $\{Z_i\}_{i=1}^d$ are independent random noise variables, that is $\pa(Z_i) = \emptyset,\ \forall i$. Note that each $f_i$ is a deterministic function that places all randomness of the conditional $P(X_i|\pa(X_i))$ in the respective noise variable, $Z_i$.

\section{Fairness of Synthetic Data} \label{sec:defining_fairness}
Algorithmic fairness is a popular topic (e.g., see \cite{barocas2016big,kusner2017counterfactual}), but \textit{fair synthetic data} has been much less explored. This section highlights how the underlying graphs of the synthetic and downstream data determine whether a model trained on the synthetic data will be fair in practice. We start with the two most popular definitions of fairness, relating to the legal concepts of \textit{direct} and \textit{indirect} discrimination. We also explore \textit{conditional fairness} \cite{kamiran2012}, which is a generalization of the two. In Appendix \ref{appx:fairness_defs_continued} we discuss how the ideas in this section transfer to other independence-based definitions \cite{barocas2017fairness}. Throughout this section, we separate $Y$ from $X$ by defining $\Bar{X}=X\backslash Y$, and we will write $X\leftarrow \Bar{X}$ for ease of notation.

\subsection{Algorithmic fairness} \label{sec:alg_fair}
The first definition is called \textit{Fairness Through Unawareness} (e.g. \cite{grgic2016case}).
\begin{definition}
(Fairness Through Unawareness (FTU): algorithm). A predictor $f:X\mapsto\hat{Y}$ is fair iff protected attributes $A$ are not explicitly used by $f$ to predict $\hat Y$.
\end{definition}
This definition prohibits \textit{disparate treatment} \cite{barocas2016big, zafar2017fairness}, and is related to the legal concept of {\it direct discrimination}, i.e., two equally qualified people deserve the same job opportunity independent of their race, gender, beliefs, among others. 

Though FTU fairness is commonly used, it might result in \textit{indirect discrimination}: covariates that influence the prediction $\hat Y$ might not be identically distributed across different groups $a, a'$, which means an algorithm might have \textit{disparate impact} on a protected group \cite{feldman2015certifying}. The second definition of fairness, \textit{demographic parity} \cite{zafar2017fairness}, does not allow this:
\begin{definition}
(Demographic Parity (DP): algorithm) A predictor $\hat{Y}$ is fair iff $A\indep \hat{Y}$, i.e. $\forall a,a': P(\hat{Y} |A =
a) = P(\hat{Y} |A = a')$.
\end{definition}
Evidently, DP puts stringent constraints on the algorithm, whereas FTU might be too lenient. The third definition we include is based on the work of \cite{kamiran2012}, related to \textit{unresolved discrimination} \cite{kilbertus2017avoiding}. The idea is that we do not allow indirect discrimination unless it runs through \textit{explanatory factors} $R\subset X$. For example, in Simpson's paradox \cite{simpson1951interpretation} there seems to be a bias between gender and college admissions, but this is only due to women applying to more competitive courses. In this case, one would want to regard fairness conditioned on the choice of study \cite{kilbertus2017avoiding}. Let us define this as \textit{conditional fairness}:
\begin{definition}
(Conditional Fairness (CF): algorithm) A predictor $\hat{Y}$ is fair iff $A\indep \hat{Y}|R$, i.e. $\forall r, a,a': P(\hat{Y} |R=r, A =
a) = P(\hat{Y} |R=r, A = a')$.
\end{definition}

{\bf CF generalizes FTU and DP} Note that conditional fairness is a generalization of FTU and DP, by setting $R=X\backslash A$ 
and $R=\emptyset$, respectively. In Appendix \ref{appx:fairness_defs_continued} we elaborate on the connection between these, and more, definitions.

\subsection{Synthetic data fairness} \label{sec:dist_fairness}
Algorithmic definitions can be extended to distributional fairness for synthetic data. 
Let $P(X),P'(X)$ be probability distributions with protected attributes $A\subset X$ and labels $Y\subset X$. Let $\mathcal{I}(A,Y)$ be a definition of algorithmic fairness (e.g., FTU). Note, that under CF, $\mathcal{I}(A,Y)$ is a function of $R$ as well. We propose $(\mathcal{I}(A,Y),P)$-fairness of distribution $P'(X)$:
\begin{definition}
(Distributional fairness) A probability distribution $P'(X)$ is $(\mathcal{I}(A,Y),P)$-fair, iff the optimal predictor $\hat{Y} = f^*(X)$ of $Y$ trained on $P'(X)$ satisfies $\mathcal{I}(A,Y)$ when evaluated on $P(X)$.  
\end{definition}
In other words, when we train a predictor on $(\mathcal{I}(A,Y), P)$-fair distribution $P'(X)$, we can only reach maximum performance if our model is fair. Note the explicit reference to $P(X)$, the distribution on which fairness is evaluated, which does not need to coincide with $P'(X)$. This is a small but relevant detail. For example, when training a model on data $\mathcal{D}'\sim P'(X)$ it could seem like the model is fair when we evaluate it on a hold-out set of the data (e.g., if we simply remove the protected attribute from the data). However, when we use the model for real-world predictions of data $\mathcal{D}\sim P(X)$, disparate impact is possibly observed due to a distributional shift.

By extension, we define synthetic data as $(\mathcal{I}(A,Y),P)$-fair, iff it is sampled from an $(\mathcal{I}(A,Y),P)$-fair distribution. Defining synthetic data as fair w.r.t. an optimal predictor is especially useful when we want to publish a dataset and do not trust end-users to consider anything but performance.\footnote{Finding the optimal predictor is possible if we assume the downstream user employs any universal function approximator (e.g., MLP) and the amount of synthetic data is sufficiently large.}

{\bf Choosing $\mathbf{P(X)}$.} The setting $P(X) = P'(X)$ corresponds to data being fair with respect to itself. For synthetic data generation, this setting is uninteresting as any dataset can be made fair by randomly sampling or removing $A$; if $A$ is random, the prediction should not directly or indirectly depend on it. This ignores, however, that a downstream user might use the trained model on a real-world dataset in which other variables $B$ are correlated with $A$, and thus their model (which is trained to use $B$ for predicting $Y$) will be biased. Of specific interest is the setting where $P(X)$ corresponds to the original data distribution $P_X(X)$ that contains unfairness. In this scenario, we construct $P'(X)$ by learning $P_X(X)$ and removing the unfair characteristics. The data from $P'(X)$ can be published online, and models trained on this data can be deployed fairly in real-life scenarios where data follows $P_X(X)$. Unless otherwise stated, henceforth, we assume $P(X)=P_X(X)$. 

\subsection{Graphical perspective}
\begin{wrapfigure}[12]{R}{3.1cm}
    \vspace{-0.3cm}
    \centering
    \vspace{-0.1cm}
    \includegraphics[width=2.3cm]{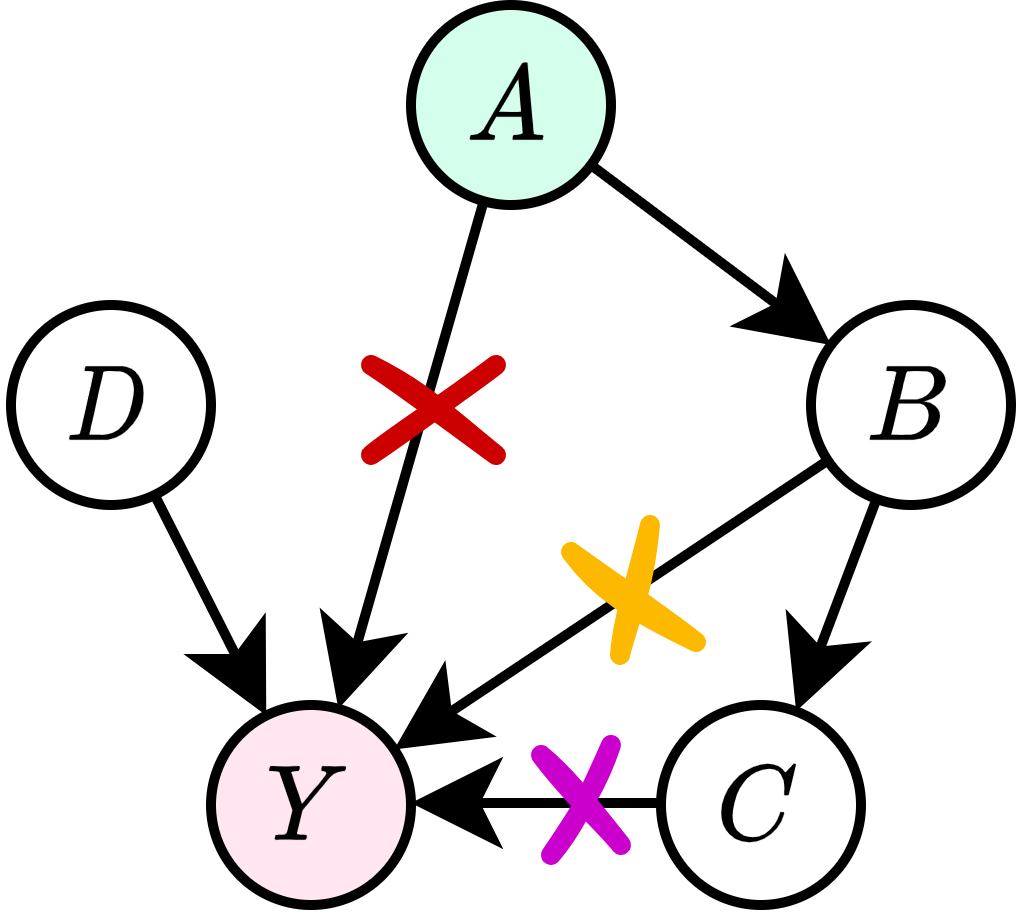}\label{fig:example_single}
    \vspace{-0.2cm}
    \caption[caption placeholder]{{\bf Edge removal for fairness.} FTU: \xmarkred; DP: \xmarkred\xmarkyellow\xmarkpink; CF when $R=C$: \xmarkred\xmarkyellow; CF when $B\in R$: \xmarkred.}
    \label{fig:example_simple}
    \vspace{-0.2cm}
    \rule{\linewidth}{.75pt}
\end{wrapfigure}
As reflected in the widely accepted terms direct versus indirect discrimination, it is natural to define distributional fairness from a causal standpoint. Let $\mathcal{G}'$ and $\mathcal{G}$ respectively denote the graphs underlying $P'(X)$ (the synthetic data distribution which we can control) and $P(X)$ (the evaluation distribution that we cannot control). Let $\partial_{\mathcal{G}}Y$ denote the Markov boundary of $Y$ in graph $\mathcal{G}$. We focus on the conditional fairness definition because it subsumes the definition of DP and FTU (Section \ref{sec:alg_fair}). Let $R\subset X$ be the set of explanatory features.
\begin{proposition} \label{proposition:cf_condition}
(CF: graphical condition) If for all $B\in \partial_{\mathcal{G}'}Y$, $A\indep_{\mathcal{G}} B|R$,\footnote{Where $\indep_{\mathcal{G}}$ denotes d-separation in $\mathcal{G}$. Here we define $A\indep_{\mathcal{G}} B|R$ to be true for all $B\in R$.} then distribution $P'(X)$ is CF fair w.r.t $P(X)$ given explanatory factors $R$.
\end{proposition}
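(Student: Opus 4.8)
The plan is to unwind the two definitions that feed into the statement---distributional $(\mathcal{I}(A,Y),P)$-fairness and the algorithmic CF condition $A\indep\hat{Y}\mid R$---and to reduce the claim to a single d-separation statement that can be transported from $\mathcal{G}$ to $P$ via the Markov property. Write $\mathbf{B}=\partial_{\mathcal{G}'}Y$ for the Markov boundary and let $f^*$ be the optimal predictor of $Y$ trained on $P'(X)$. By definition, CF distributional fairness of $P'$ w.r.t.\ $P$ requires exactly that $\hat{Y}=f^*(X)$ satisfy $A\indep\hat{Y}\mid R$ when the input $X$ is drawn from $P(X)$, so it suffices to establish this one conditional independence.

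First I would argue that $f^*$ is a deterministic function of the Markov-boundary variables alone, i.e.\ $\hat{Y}=g(\mathbf{B})$ for some measurable $g$. This is the defining screening-off property of the Markov boundary in $\mathcal{G}'$: since $Y\indep_{\mathcal{G}'}(X\setminus(\mathbf{B}\cup\{Y\}))\mid \mathbf{B}$ and $P'$ is Markov w.r.t.\ $\mathcal{G}'$, we get $P'(Y\mid X)=P'(Y\mid \mathbf{B})$; hence the Bayes-optimal predictor under a standard proper loss (e.g.\ $E_{P'}[Y\mid X]$ for squared error) depends on $X$ only through $\mathbf{B}$. Crucially $g$ is fixed once training on $P'$ is complete, so when inputs are later sampled from $P(X)$ the prediction is still $\hat{Y}=g(\mathbf{B})$, now with the values of $\mathbf{B}$ governed by $P$.

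Next I would upgrade the hypothesis from element-wise to set-wise d-separation. The assumption supplies $A\indep_{\mathcal{G}}B\mid R$ for every $B\in\mathbf{B}$; because the conditioning set $R$ is held fixed across all of these, blocking every path from $A$ to each individual $B$ is the same as blocking every path from $A$ to the whole set, so $A\indep_{\mathcal{G}}\mathbf{B}\mid R$. Since $P(X)$ is Markov with respect to its generating DAG $\mathcal{G}$ (soundness of d-separation), this transports to a genuine conditional independence $A\indep_P\mathbf{B}\mid R$. Finally, applying $g$ preserves conditional independence---any measurable function of $\mathbf{B}$ is conditionally independent of $A$ given $R$---so $A\indep_P g(\mathbf{B})\mid R$, i.e.\ $A\indep\hat{Y}\mid R$ under $P$, which is the desired CF fairness.

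I expect the main obstacle to lie in making the first step airtight rather than in the d-separation bookkeeping: one must fix a notion of ``optimal predictor'' for which the reduction $P'(Y\mid X)=P'(Y\mid\mathbf{B})$ genuinely forces $f^*$ to ignore $X\setminus\mathbf{B}$, and one must keep track of the fact that the Markov boundary is taken in $\mathcal{G}'$ (the controllable synthetic graph) whereas the independence $A\indep_P\mathbf{B}\mid R$ is required in $\mathcal{G}$ (the uncontrollable evaluation graph)---the whole subtlety being that these two graphs differ. The element-wise-to-set-wise d-separation merge and the Markov transport are then routine.
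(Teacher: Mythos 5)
Your proposal is correct and follows essentially the same route as the paper's own proof: identify the optimal predictor with $P'(Y\mid X)=P'(Y\mid\partial_{\mathcal{G}'}Y)$, then use d-separation of the Markov boundary from $A$ given $R$ in $\mathcal{G}$ to conclude $\hat{Y}\indep A\mid R$ under $P(X)$. The only difference is that you spell out the steps the paper leaves implicit (the element-wise to set-wise d-separation merge, the Markov transport from $\mathcal{G}$ to $P$, and that measurable functions preserve conditional independence), which makes your version more rigorous but not a different argument.
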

\vspace{-0.3cm}
\begin{proof}
Without loss of generality, let us assume the label is binary.\footnote{If $Y$ is continuous the same result holds, though the ``optimal'' predictor will depend on the statistic of interest, e.g. mode, mean, median or the entire distribution $f(X,Y)\approx P(Y|X)$.} The optimal predictor $f^*(X) = P(Y|X) = P(Y|\partial_{\mathcal{G}'}Y)$. Thus, if $\partial_{\mathcal{G}'}Y$ is d-separated from $A$ in $\mathcal{G}$ given $R$, prediction $\hat{Y}=f^*(X)$ is independent of $A$ given $R$ and CF holds.
\end{proof}
\begin{corollary}
\label{cor:cf}
(CF debiasing) Any distribution $P'(X)$ with graph $\mathcal{G}'$ can be made CF fair w.r.t. $P(X)$ and explanatory features $R$ by removing from $\mathcal{G}'$ edges $\tilde E=\{(B\rightarrow Y)$ and $(Y\rightarrow B): \forall B\in \partial_{\mathcal{G}'}Y$ with $B\not \indep_{\mathcal{G}} A|R$\}.
\end{corollary}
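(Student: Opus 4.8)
The plan is to reduce the corollary directly to Proposition~\ref{proposition:cf_condition}. Write $\mathcal{G}'' = \mathcal{G}' \setminus \tilde{E}$ for the graph obtained after deleting the prescribed edges, and let $P''(X)$ denote the re-factorized distribution that is Markov to $\mathcal{G}''$ (the one DECAF actually samples from once the edges are cut). Proposition~\ref{proposition:cf_condition} tells us that $P''(X)$ is CF fair w.r.t.\ $P(X)$ as soon as every element of the \emph{new} Markov boundary $\partial_{\mathcal{G}''}Y$ is d-separated from $A$ given $R$ in the fixed evaluation graph $\mathcal{G}$. So the whole task is to verify that the edge deletion purges the offending nodes from the Markov boundary of $Y$. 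A point I would make explicit up front is that the offending property $B \not\indep_{\mathcal{G}} A \mid R$ is a property of the fixed graph $\mathcal{G}$ alone: editing $\mathcal{G}'$ never changes whether a given node is offending, it only changes which nodes occupy $\partial_{\mathcal{G}'}Y$.

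Next I would unpack the Markov boundary in a DAG, namely $\partial_{\mathcal{G}'}Y = \pa(Y) \cup \mathrm{ch}(Y) \cup \{\text{co-parents of } Y\}$, where the co-parents (spouses) are the remaining parents of $Y$'s children. The set $\tilde{E}$ deletes every edge $B \to Y$ and $Y \to B$ with $B \in \partial_{\mathcal{G}'}Y$ offending. The easy part is the parent/child contribution: any offending parent or child of $Y$ has its direct edge to $Y$ removed and is therefore no longer adjacent to $Y$ in $\mathcal{G}''$. I would also record the monotonicity fact that deleting only edges incident to $Y$ cannot enlarge the boundary — the parent, child, and hence spouse sets can only shrink — so $\partial_{\mathcal{G}''}Y \subseteq \partial_{\mathcal{G}'}Y$ and no new offending node can appear. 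For a sink $Y$ (no outgoing edges, the typical situation for a predicted label) we then have $\partial_{\mathcal{G}''}Y = \pa_{\mathcal{G}''}(Y)$, which contains only non-offending parents, and invoking Proposition~\ref{proposition:cf_condition} on $\mathcal{G}''$ closes the argument.

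The step I expect to be the main obstacle is the treatment of spouses. An offending node $D$ that enters $\partial_{\mathcal{G}'}Y$ \emph{only} as a co-parent through a non-offending child $C$ (so $D \to C \leftarrow Y$ with $C$ clean) has no direct edge to $Y$; hence $\tilde{E}$ deletes nothing incident to $D$, the edge $Y \to C$ survives, and $D$ persists in $\partial_{\mathcal{G}''}Y$ despite being offending. The same gap afflicts an offending node that sits in the boundary simultaneously as a parent and a spouse, since cutting $B \to Y$ leaves its spouse status intact. To close these cases one must either argue them away structurally — as in the sink-$Y$ setting above, where spouses do not exist — or strengthen the deletion rule so that the spouse edge $D \to C$ (or the child edge $Y \to C$) is also cut whenever $D$ is offending. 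I would therefore make the ``only edges incident to $Y$'' scope explicit, present the sink-$Y$ case as the principal statement, and flag the general-boundary case as the point requiring this additional care.
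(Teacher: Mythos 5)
Your reduction to Proposition~\ref{proposition:cf_condition} is exactly the route the paper takes: its entire proof is the one-sentence assertion that $\tilde{E}$ is ``the necessary and sufficient set of edges to remove'' for every member of the post-removal Markov boundary of $Y$ to satisfy $A \indep_{\mathcal{G}} B | R$, after which Proposition~\ref{proposition:cf_condition} is invoked. On the parent/child portion of the boundary --- and in particular whenever $Y$ is a sink, the typical situation for a label --- your argument and the paper's coincide, and yours is the more explicit one: the monotonicity observation $\partial_{\mathcal{G}''}Y \subseteq \partial_{\mathcal{G}'}Y$ is the actual substance behind the paper's ``sufficient'' claim, and the paper never states it.

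The obstacle you flag concerning spouses is not a defect of your write-up relative to the paper; it is a genuine gap in the paper's own proof, which never discusses co-parents at all. Your counterexample pattern is right: if $D \to C \leftarrow Y$ in $\mathcal{G}'$ with $C$ non-offending and $D$ offending ($D \not\indep_{\mathcal{G}} A | R$), then $\tilde{E}$ as defined in Corollary~\ref{cor:cf} contains no edge incident to $D$, the edge $Y \to C$ survives because $C$ is clean, and $D$ persists in $\partial_{\mathcal{G}''}Y$; the optimal predictor $P'(Y \mid \partial_{\mathcal{G}''}Y)$ still depends on $D$ (conditioning on the collider $C$ couples $Y$ and $D$), so CF can fail on $P(X)$. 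Notably, the paper implicitly concedes this one corollary later: Corollary~\ref{cor:ftu} (FTU), which ought to be the special case $R = X \backslash A$ of Corollary~\ref{cor:cf}, additionally removes ``the edge $A \to C$ or $Y \to C$ for all shared children $C$'' --- edges that are not in $\tilde{E}$, since $A \to C$ is not incident to $Y$ at all and $Y \to C$ only enters $\tilde{E}$ when $C$ itself is offending. That extra removal is precisely the strengthened deletion rule you propose for the spouse case. So your assessment stands: the corollary as literally stated needs either the sink-$Y$ restriction or the amended edge set, and the paper's one-line proof glosses over this rather than resolving it.
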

\vspace{-0.3cm}
\begin{proof}
First note $\tilde{E}$ is the necessary and sufficient set of edges to remove for $(\forall B\in \partial_{\mathcal{G}'}Y$, $A\indep_{\mathcal{G}} B|R)$ to be true, subsequently the result follows from Proposition \ref{proposition:cf_condition}.
\end{proof}
For FTU (i.e. $R=X\backslash A$) and DP (i.e. $R=\emptyset$), this corollary simplifies to:
\begin{corollary}
\label{cor:ftu}
(FTU debiasing) Any distribution $P'(X)$ with graph $\mathcal{G}'$ can be made FTU fair w.r.t. any distribution $P(X)$ by removing, if present, i) the edge between $A$ and $Y$ and ii) the edge $A\rightarrow C$ or $Y\rightarrow C$ for all shared children $C$.
\end{corollary}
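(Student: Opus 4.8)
The plan is to specialise the conditional-fairness result to the FTU choice $R = X \setminus A$ and then read off which edges of $\mathcal{G}'$ must be deleted so that $A$ leaves the Markov boundary of $Y$. First I would recall from Section~\ref{sec:alg_fair} that FTU is precisely CF with explanatory set $R = X \setminus A$, so Proposition~\ref{proposition:cf_condition} applies directly: $P'(X)$ is FTU fair with respect to $P(X)$ whenever $A \indep_{\mathcal{G}} B \mid R$ holds for every $B \in \partial_{\mathcal{G}'} Y$.

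Next I would collapse this quantified condition. Every $B \in \partial_{\mathcal{G}'} Y$ is either $A$ itself or lies in $R = X \setminus A$; in the latter case the footnote convention makes $A \indep_{\mathcal{G}} B \mid R$ true automatically, while in the former case $A \indep_{\mathcal{G}} A \mid R$ is false since no variable is d-separated from itself. Hence the hypothesis of Proposition~\ref{proposition:cf_condition} reduces exactly to $A \notin \partial_{\mathcal{G}'} Y$. This characterisation refers only to $\mathcal{G}'$, which is what allows the conclusion to hold against \emph{any} evaluation graph $\mathcal{G}$: once $A$ is absent from the boundary, the optimal predictor $f^*(X) = P(Y \mid \partial_{\mathcal{G}'} Y)$ never reads $A$, so FTU is satisfied for every $P(X)$.

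It then remains to identify the edges whose removal guarantees $A \notin \partial_{\mathcal{G}'} Y$. I would expand the boundary as the union of the parents of $Y$, the children of $Y$, and the parents of the children of $Y$, so that $A \in \partial_{\mathcal{G}'} Y$ occurs in exactly three ways: $A$ is a parent of $Y$ (edge $A \to Y$), $A$ is a child of $Y$ (edge $Y \to A$), or $A$ and $Y$ share a child $C$ (edges $A \to C$ and $Y \to C$). Deleting the edge between $A$ and $Y$ eliminates the first two configurations, matching clause (i), and deleting either $A \to C$ or $Y \to C$ for each shared child $C$ eliminates the third, matching clause (ii). Since these are the only configurations placing $A$ in the boundary, their removal is also minimal.

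The step I expect to be the main obstacle is the co-parent (spouse) term of the Markov boundary. Unlike a direct parent or child, a shared-child co-parent $A$ is not adjacent to $Y$, so the generic edge deletion of Corollary~\ref{cor:cf}---which removes $B \to Y$ and $Y \to B$---does not reach it; the argument must treat the shared-child case on its own and verify that severing a single edge into each common child $C$ genuinely removes $A$ from the boundary, which is the content of clause (ii).
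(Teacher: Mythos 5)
Your proof is correct, and at the top level it follows the route the paper intends: the paper states this corollary without its own proof, presenting it as the specialization of Corollary~\ref{cor:cf} (and hence Proposition~\ref{proposition:cf_condition}) to $R = X\setminus A$. Your collapse of the hypothesis of Proposition~\ref{proposition:cf_condition} to the purely $\mathcal{G}'$-side condition $A \notin \partial_{\mathcal{G}'}Y$ is exactly right, and it also explains why the statement can say ``w.r.t.\ \emph{any} distribution $P(X)$'' while Corollaries~\ref{cor:cf} and~\ref{cor:dp} cannot. Where you go beyond the paper is the final step, and this is genuine added value rather than redundancy: the edge set $\tilde{E}$ of Corollary~\ref{cor:cf} contains only edges of the form $B\to Y$ or $Y\to B$ for offending boundary members $B$, and under $R = X\setminus A$ the only offending member is $A$ itself, so a mechanical specialization of Corollary~\ref{cor:cf} yields clause (i) alone and cannot produce clause (ii). Yet clause (ii) is needed: if $A$ and $Y$ share a child $C$ but are not adjacent, $A$ sits in $\partial_{\mathcal{G}'}Y$ as a co-parent, the optimal predictor $P(Y\mid \partial_{\mathcal{G}'}Y)$ still reads $A$, and FTU fails. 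The paper never derives clause (ii); your direct decomposition of the Markov boundary into parents, children, and parents-of-children supplies the missing justification, and your closing remark correctly pinpoints the co-parent case as exactly the configuration that the generic edge set of Corollary~\ref{cor:cf} does not reach.
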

\begin{corollary}
\label{cor:dp}
(DP debiasing) Any distribution $P'(X)$ with graph $\mathcal{G}'$ can be made DP fair w.r.t. $P(X)$ by removing, if present, the edge between $B$ and $Y$ for any $B\in \partial_{\mathcal{G}'}Y$ with $B\not \indep_{\mathcal{G}} A$.
\end{corollary}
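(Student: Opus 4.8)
The plan is to obtain this statement as the $R=\emptyset$ specialization of the conditional-fairness debiasing result, Corollary~\ref{cor:cf}. It was already observed that CF generalizes DP precisely by setting $R=\emptyset$, so the proof reduces to substituting $R=\emptyset$ throughout Corollary~\ref{cor:cf} and Proposition~\ref{proposition:cf_condition} and checking that the resulting edge set coincides with the one stated here. No new machinery should be needed; the entire content is a clean specialization.

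Concretely, I would first substitute $R=\emptyset$ into the d-separation hypothesis of Proposition~\ref{proposition:cf_condition}: conditioning on the empty set reduces $A\indep_{\mathcal{G}}B\mid R$ to the marginal statement $A\indep_{\mathcal{G}}B$, and likewise the debiasing criterion $B\not\indep_{\mathcal{G}}A\mid R$ becomes $B\not\indep_{\mathcal{G}}A$. Hence the removal set of Corollary~\ref{cor:cf} specializes to $\tilde E=\{(B\rightarrow Y),(Y\rightarrow B):\ B\in\partial_{\mathcal{G}'}Y,\ B\not\indep_{\mathcal{G}}A\}$. I would then reconcile this with the phrase ``the edge between $B$ and $Y$'': since $\mathcal{G}'$ is a DAG, at most one of the oriented edges $(B\rightarrow Y)$, $(Y\rightarrow B)$ is present for any pair, so deleting both whenever present is exactly the deletion of the single $B$--$Y$ edge ``if present.'' After these deletions every offending parent or child $B$ is no longer adjacent to $Y$; moreover, since edge removal never enlarges a Markov boundary, no new offending node can enter $\partial_{\mathcal{G}'}Y$. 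Thus the hypothesis $A\indep_{\mathcal{G}}B$ holds across the updated boundary, and Proposition~\ref{proposition:cf_condition} (equivalently, Corollary~\ref{cor:cf} with $R=\emptyset$) delivers CF fairness with $R=\emptyset$, which is DP fairness by definition.

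The step I expect to require the most care is the treatment of co-parents of $Y$: a member $B\in\partial_{\mathcal{G}'}Y$ that enters the Markov boundary only through a shared child has no direct $B$--$Y$ edge to delete, so ``the edge between $B$ and $Y$, if present'' removes nothing for it. For such $B$ one must verify that either $A\indep_{\mathcal{G}}B$ already holds, or that the relevant collider-path edge is severed as well---precisely the shared-child handling made explicit in the FTU case, Corollary~\ref{cor:ftu}. I would therefore make sure the specialization inherits this collider treatment from Corollary~\ref{cor:cf} rather than tacitly assuming $Y$ is childless, and flag this as the one place where the literal phrasing ``edge between $B$ and $Y$'' must be read together with the more general edge set of the conditional case.
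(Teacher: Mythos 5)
Your core reduction is exactly the paper's: Corollary~\ref{cor:dp} is presented in the paper as nothing more than the $R=\emptyset$ instance of Corollary~\ref{cor:cf} (no separate proof is given), and your two observations---that conditioning on $\emptyset$ turns $A\indep_{\mathcal{G}}B\mid R$ into $A\indep_{\mathcal{G}}B$, and that in a DAG at most one of $(B\rightarrow Y)$, $(Y\rightarrow B)$ exists, so removing ``both'' equals removing ``the edge between $B$ and $Y$, if present''---are a faithful rendering of that specialization. Your remark that edge deletion never enlarges $\partial_{\mathcal{G}'}Y$ is also correct and worth making explicit.

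The problem is in your final paragraph, where you correctly isolate the delicate case (a co-parent $B\in\partial_{\mathcal{G}'}Y$ with no direct $B$--$Y$ edge) but then discharge it by appeal to machinery that does not exist: Corollary~\ref{cor:cf} has no ``collider treatment'' to inherit. Its removal set $\tilde E$ consists exclusively of edges incident to $Y$; it never contains shared-child edges. Only the FTU corollary (Corollary~\ref{cor:ftu}) adds such edges explicitly, and it must, because under $R=X\backslash A$ a shared child $C$ lies in $R$ and hence, by the paper's convention, is never ``offending,'' so its edge to $Y$ would otherwise survive. For DP the co-parent case must instead be closed by an argument that neither you nor the paper writes down: if a spouse $B$ of $Y$ via shared child $C$ satisfies $B\not\indep_{\mathcal{G}}A$, and the edge $B\rightarrow C$ is also present in $\mathcal{G}$ (which holds in the paper's pipeline, where $\mathcal{G}'$ is obtained from $\mathcal{G}$ by deleting edges), then extending an open (collider-free) $A$--$B$ path in $\mathcal{G}$ by the edge $B\rightarrow C$---or truncating that path at $C$ if it already passes through $C$---yields an open $A$--$C$ path, so $C\not\indep_{\mathcal{G}}A$ as well. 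Hence $Y\rightarrow C$ is already in the removal set, and deleting it evicts both $C$ and the spouse $B$ from $\partial_{\mathcal{G}'}Y$, after which Proposition~\ref{proposition:cf_condition} with $R=\emptyset$ applies. Note that without some such relation between $\mathcal{G}$ and $\mathcal{G}'$ the corollary's literal edge set is genuinely insufficient (take $Y\rightarrow C\leftarrow B$ in $\mathcal{G}'$ while in $\mathcal{G}$ the node $C$ is isolated and $A\rightarrow B$: nothing is removed, yet the optimal predictor uses $B$, which correlates with $A$); that is a latent gap in the paper's own statement, and your proof inherits it unless you add the lemma above.
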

Figure \ref{fig:example_simple} shows how the different fairness definitions lead to different sets of edges to be removed.

{\bf Faithfulness.} Usually one assumes distributions are faithful w.r.t. their respective graphs, in which case the if-statement in Proposition \ref{proposition:cf_condition} become equivalence statements: fairness is \textit{only} possible when the graphical conditions hold.
\begin{theorem}
If $P(X)$ and $P'(X)$ are faithful with respect to their respective graphs $\mathcal{G}$ and $\mathcal{G}'$, then Proposition \ref{proposition:cf_condition} becomes an equivalence statement and Corollaries~\ref{cor:cf}, \ref{cor:ftu} and \ref{cor:dp} describe the necessary and sufficient sets of edges to remove for achieving CF, FTU and DP fairness, respectively.
\end{theorem}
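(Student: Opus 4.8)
The plan is to prove the single genuinely new ingredient---the converse of Proposition~\ref{proposition:cf_condition}---since sufficiency is already established there and the three corollaries are purely combinatorial restatements of the graphical condition. Concretely, I would show that under faithfulness, if $P'(X)$ is CF fair w.r.t.\ $P(X)$ given $R$, then $A\indep_{\mathcal{G}}B\mid R$ for every $B\in\partial_{\mathcal{G}'}Y$; combined with the Proposition this yields the equivalence for CF, and the corollaries then inherit necessity because each already names its edge set as the set of deletions equivalent to the corresponding graphical condition.

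I would argue by contraposition. Suppose the graphical condition fails, i.e.\ there is some $B^{*}\in\partial_{\mathcal{G}'}Y$ with $A\not\indep_{\mathcal{G}}B^{*}\mid R$; note this forces $B^{*}\notin R$, since $A\indep_{\mathcal{G}}B\mid R$ holds vacuously for $B\in R$ by the footnote convention. First, faithfulness of $P$ w.r.t.\ $\mathcal{G}$ converts this d-connection into a genuine probabilistic dependence, $A\not\indep_{P}B^{*}\mid R$. Second, faithfulness of $P'$ w.r.t.\ $\mathcal{G}'$ is used to show the optimal predictor depends nontrivially on $B^{*}$: because $\partial_{\mathcal{G}'}Y$ is the minimal Markov boundary, each boundary element is d-connected to $Y$ given the remaining boundary, so faithfulness gives $Y\not\indep_{P'}B^{*}\mid(\partial_{\mathcal{G}'}Y\setminus\{B^{*}\})$; hence $f^{*}(X)=P'(Y\mid\partial_{\mathcal{G}'}Y)$ is a non-constant function of $B^{*}$ when the other boundary coordinates are held fixed.

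The remaining and hardest step is to combine these two facts into $A\not\indep_{P}\hat{Y}\mid R$, i.e.\ to conclude that the dependence of $\hat{Y}$ on $B^{*}$ and of $B^{*}$ on $A$ given $R$ actually propagates to a dependence of $\hat{Y}$ on $A$ given $R$. The clean route is to pass to the augmented graph $\mathcal{G}^{+}$ obtained from $\mathcal{G}$ by adjoining the deterministic node $\hat{Y}$ with incoming edges $B\to\hat{Y}$ for each $B\in\partial_{\mathcal{G}'}Y$; the joint law of $(X,\hat{Y})$ under $P$ is Markov w.r.t.\ $\mathcal{G}^{+}$. Appending the edge $B^{*}\to\hat{Y}$ to any $R$-active path from $A$ to $B^{*}$ keeps the path active---at $B^{*}$ the extension yields a chain or fork, never a collider, and $B^{*}\notin R$---so $A\not\indep_{\mathcal{G}^{+}}\hat{Y}\mid R$. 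The obstacle is that the Markov property only turns d-separation into independence, not d-connection into dependence, so I must rule out a measure-zero cancellation in which the several active paths into $\hat{Y}$ destructively interfere. I would close this either by invoking faithfulness of the augmented law (the natural strengthening under which d-connection implies dependence) or, failing that, by a genericity argument leaning on the fact, established above, that $\hat{Y}$ genuinely varies in $B^{*}$. This is where essentially all the difficulty concentrates; everything else is bookkeeping.

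Finally I would specialize. Setting $R=X\setminus A$ and $R=\emptyset$ recovers FTU and DP via the ``CF generalizes FTU and DP'' remark, so the CF equivalence immediately delivers equivalence for FTU and DP. For the edge-set claims I would invoke the combinatorial identities already inside Corollaries~\ref{cor:cf}, \ref{cor:ftu} and~\ref{cor:dp}: each identifies exactly the deletions equivalent to its graphical condition, so once that condition is necessary and sufficient for fairness, so are the stated edge sets.
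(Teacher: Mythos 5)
Your proof takes essentially the same route as the paper's: contraposition, with faithfulness of $P$ converting d-connection of $A$ and $B^*$ given $R$ into the dependence $A\not\indep_{P} B^*\mid R$, and faithfulness of $P'$ guaranteeing that the optimal predictor $f^*(X)=P'(Y\mid \partial_{\mathcal{G}'}Y)$ genuinely varies with $B^*$. The only divergence is in the final step, and it is to your credit: the paper simply asserts that these two facts yield $\hat{Y}\not\indep A\mid R$, while you correctly identify that this propagation is the crux---functional dependence of $\hat{Y}$ on $B^*$ together with $A\not\indep B^*\mid R$ does not by itself rule out a cancellation among the several arguments of $f^*$, and closing it requires something like faithfulness of the augmented law of $(X,\hat{Y})$ (your augmented-graph construction, or equivalently the genericity argument you sketch). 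So your proposal matches the paper's argument and, if anything, is more explicit about an assumption that the paper's one-line conclusion leaves implicit.
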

\vspace{-0.3cm}
\begin{proof}
Faithfulness implies $A\indep_{P(X)} B|R\implies A\indep_{\mathcal{G}} B|R$, e.g. \cite{peters2017}. Thus, if $\exists B\in \partial_{\mathcal{G}'}Y$ for which $A\not\indep_{\mathcal{G}} B|R$, then $A\not\indep B|R$. Because $B\in \partial_{\mathcal{G}'}Y$ and $P'(X)$ is faithful to $\mathcal{G}'$, $\hat{Y}=f^*(X)$ depends on $B$, and thus $\hat{Y}\not\indep A|R$: CF does not hold.
\end{proof}
{\bf Other definitions.} Some authors define similar fairness measures in terms of directed paths (cf. d-separation) \citep{zhang2016causal,kilbertus2017avoiding,nabi2018fair}, which is a milder requirement as it allows correlation via non-causal paths. In Appendix \ref{appx:fairness_defs_continued} we highlight the graphical conditions for these definitions. 

\section{Method: DECAF} \label{sec:method}
The primary design goal of DECAF is to generate fair synthetic data from unfair data. We separate DECAF into two stages. The training stage learns the causal conditionals that are observed in the data through a causally-informed GAN. At the generation (inference) stage, we intervene on the learned conditionals via Corollaries \ref{cor:cf}-\ref{cor:dp}, in such a way that the generator creates fair data. We assume the underlying DGP's graph $\mathcal{G}$ is known; otherwise, $\mathcal{G}$ needs to be approximated first using any causal discovery method, see Section \ref{sec:experiments}.

\subsection{Training}
{\bf Overview.} This stage strives to learn the causal mechanisms $\{f_i(\pa(X_i),Z_i)\}$. Each structural equation $f_i$ (Eq. \ref{eq:SEM}) is modelled by a separate generator $G_i:\mathbb{R}^{|Pa(X_i)|+1}\rightarrow \mathbb{R}$. We achieve this by employing a conditional GAN framework with a causal generator. This process is illustrated in Figure~\ref{fig:architecture} and detailed below.

\begin{figure}[bt]
    \centering
    \includegraphics[width=\textwidth]{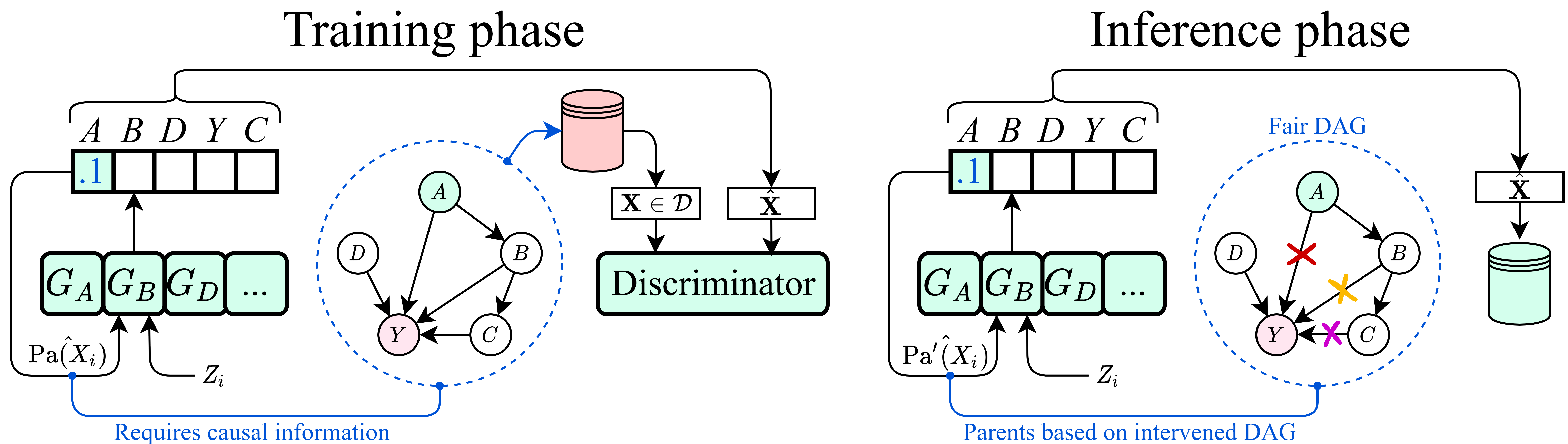}
    \vspace{-0.6cm}
    \caption{{\bf Architecture.} {\it Training phase---} Each component in $\hat{\mathbf{X}}$ is generated sequentially as a function (where the function is that component's generator $G_i$) of the component's parents. Parental knowledge is provided by the DAG governing the data. {\it Inference phase---} As the component-wise generation of the generator network is independent of the DAG governing the data, we can easily replace (or intervene on) the DAG governing parental information. The resulting synthetic data (right) will be governed by the intervened DAG. \textit{FTU is achieved by removing: \xmarkred; DP: \xmarkred\xmarkyellow\xmarkpink; e.g. CF when $R=C$: \xmarkred\xmarkyellow}.\hspace{12.2cm} }
    \label{fig:architecture}
    \vspace{-0.2cm}
    \rule{\linewidth}{.75pt}
    \vspace{-0.7cm}
\end{figure}


Features are generated sequentially following the topological ordering of the underlying causal DAG: first root nodes are generated, then their children (from generated causal parents), etc. Variable $\hat X_i$ is modelled by the associated generator $G_i$:
\begin{equation}
\label{eq:G_i}
    \hat{X}_i = G_i(\hat{\pa}(X_i), Z_i)\quad \forall i,
\end{equation}
where $\hat{\pa}(X_i)$ denotes the generated causal parents of $X_i$ (for root nodes the empty set), and each $Z_i$ is independently sampled from $P(Z)$ (e.g. standard Gaussian). We denote the full sequential generator by $G(Z) = [G_1(Z_1), ..., G_d(\cdot,Z_d)]$.

Subsequently, the synthetic sample $\hat{\mathbf{x}}$ is passed to a discriminator $D:\mathbb{R}^d\rightarrow \mathbb{R}$, which is trained to distinguish the generated samples from original samples. A typical minimax objective is employed for creating generated samples that confuse the discriminator most:
\begin{equation}
\label{eq:objective}
     \underset{\{G_i\}_{i=1}^d}{\max}\underset{D}{\min}\ \mathbb{E} [\log D(G(Z))+\log(1-D(X)],
\end{equation}
with $X$ sampled from the original data. We optimize the discriminator and generator iteratively and add a regularization loss to both networks. Network parameters are updated using gradient descent. 

If we assume $P_X(X)$ is compatible with graph $\mathcal{G}$, we can show that the sequential generator has the same theoretical convergence guarantees as standard GANs \cite{goodfellow2014generative}:
\begin{theorem}(Convergence guarantee) \label{theorem:conv}
Assuming the following three conditions hold:
\begin{enumerate}[(i)]
    \item data generating distribution $P_X$ is Markov compatible with a known DAG $\mathcal{G}$;
    \item generator $G$ and discriminator $D$ have enough capacity; and
    \item in every training step the discriminator is trained to optimality given fixed $G$, and $G$ is subsequently updated as to maximize the discriminator loss (Eq. \ref{eq:objective});
\end{enumerate}
then generator distribution $P_G$ converges to true data distribution $P_X$
\end{theorem}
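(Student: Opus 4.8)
The plan is to adapt the two-part convergence argument of \citet{goodfellow2014generative} --- first solving for the optimal discriminator, then identifying the global optimum of the resulting virtual criterion --- to the causally factorized generator. The only genuinely new ingredient is verifying that restricting the generator to DAG-factorized distributions does not move the global optimum away from $P_X$; everything else should reduce to the standard GAN argument.

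First I would characterize the family of distributions the sequential generator can represent. Since each $\hat{X}_i = G_i(\hat{\pa}(X_i), Z_i)$ is driven by its own independent noise $Z_i$, and features are generated in topological order, the induced joint density factorizes as $P_G(x) = \prod_i P_{G_i}(x_i \mid \pa(x_i))$; that is, $P_G$ is Markov with respect to $\mathcal{G}$ by construction. Under the capacity assumption (ii), each $G_i$ together with its noise input can represent an arbitrary conditional $P(X_i \mid \pa(X_i))$, so the representable family is exactly the set of distributions that factorize over $\mathcal{G}$. The crucial observation is then that assumption (i) --- Markov compatibility of $P_X$ with $\mathcal{G}$ --- places $P_X$ itself inside this family, $P_X(x) = \prod_i P(x_i \mid \pa(x_i))$. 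Hence $P_X$ is attainable by some admissible choice of $\{G_i\}_{i=1}^d$, which is what allows the restricted optimization to still reach the truth.

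Next I would run the inner/outer argument. For a fixed generator, assumption (iii) supplies the optimal discriminator $D_G^*(x) = p_X(x)/(p_X(x) + p_G(x))$; substituting it into Eq.~\ref{eq:objective} turns the outer problem into minimizing the virtual criterion $C(G) = -\log 4 + 2\,\mathrm{JSD}(P_X \,\|\, P_G)$. Because the Jensen--Shannon divergence is non-negative and vanishes iff its arguments coincide, $C(G)$ attains its global minimum $-\log 4$ precisely at $P_G = P_X$, and by the previous paragraph this minimizer is feasible for the factorized generator. Convergence of the alternating updates under assumption (iii) then mirrors \citet{goodfellow2014generative}: with the discriminator re-optimized at every step, the generator performs (sub)gradient updates on the functional $P_G \mapsto \sup_D U(G,D)$, a supremum of maps that are linear in $p_G$ and hence convex, whose unique global optimum at $P_X$ is reached.

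The main obstacle, and the place where care is needed beyond merely citing the standard result, is this last convergence step: the classical argument relies on the feasible set of densities being convex, whereas the set of distributions factorizing over a fixed DAG is in general a non-convex submanifold, so one cannot directly transplant convexity of the criterion over the feasible set. I would address this by arguing at the level of the conditional factors rather than the joint: the space of each conditional $P_{G_i}(\cdot \mid \pa(X_i))$ is convex, and $C(G) = 0$ (up to the additive constant) iff every factor matches $P(X_i \mid \pa(X_i))$, so the feasible global optimum $P_X$ is the unique target compatible with all factors simultaneously. Combined with the ``enough capacity plus optimal discriminator at each step'' idealization of assumption (iii), this reduces the restricted problem back to the unrestricted convex picture, letting the Goodfellow convergence conclusion carry through to $P_G \to P_X$.
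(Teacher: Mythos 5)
Your proposal follows essentially the same route as the paper's own proof: use Markov compatibility of $P_X$ with $\mathcal{G}$ to write $P_X = \prod_i P(X_i \mid \pa(X_i))$, conclude from the capacity assumption that the sequential generator can represent $P_X$ exactly, and then invoke the standard argument of Goodfellow et al.\ (the optimal discriminator yields the Jensen--Shannon criterion, whose unique global minimum $P_G = P_X$ is feasible for the factorized generator, and the idealized alternating updates converge to it). The one place you go beyond the paper is the convexity caveat: the paper's proof simply asserts ``by convexity of the loss functions and the existence of a unique global optimum, gradient descent is theoretically guaranteed to converge,'' whereas you correctly note that the set of DAG-factorized densities is in general non-convex and patch this by arguing factor-wise over the (convex) spaces of conditionals---a patch that remains at the same heuristic level as Goodfellow's Proposition~2, but flags a real subtlety the paper glosses over.
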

\begin{proof}
See Appendix \ref{sec:theory}
\end{proof}
Condition (i), compatibility with $\mathcal{G}$, is a weaker assumption than assuming perfect causal knowledge. For example, suppose the Markov equivalence class of the true underlying DAG has been determined through causal discovery. In that case, any graph $\mathcal{G}$ in the equivalence class is compatible with the data and can thus be used for synthetic data generation. However, we note that debiasing can require the correct directionality for some definitions of fairness, see Discussion.

{\bf Remark.} The causal GAN we propose, DECAF, is simple and extendable to other generative methods, e.g., VAEs. Furthermore, from the post-processing theorem \cite{dwork2014algorithmic} it follows that DECAF can be directly used for generating \textit{private} synthetic data by replacing the standard discriminator by a differentially private discriminator \cite{dpgan,pate-gan}.

\subsection{Inference-time Debiasing} \label{sec:synthetic}
The training phase yields conditional generators $\{G_i\}_{i=1}^d$, which can be sequentially applied to generate data with the same output distribution as the original data (proof in Appendix \ref{sec:theory}). 
The causal model allows us to go one step further: when the original data has characteristics that we do not want to propagate to the synthetic data (e.g., gender bias), individual generators can be modified to remove these characteristics. Given the generator's graph  $\mathcal{G}=(X,E)$, fairness is achieved by removing edges such that the fairness criteria are met, see Section \ref{sec:defining_fairness}. Let $\tilde{E}\in E$ be the set of edges to remove for satisfying the required fairness definition. For CF, FTU and DP,\footnote{Just like in Corollaries \ref{cor:cf} and \ref{cor:dp}, we assume the downstream evaluation distribution is the same as the biased training data distribution: a predictor trained on the synthetic debiased data, is required to give fair predictions in real-life settings with distribution $P_X(X)$.} the sets $\tilde{E}$ are given by Corollaries~\ref{cor:cf}, \ref{cor:ftu} and \ref{cor:dp}, respectively.

Removing an edge constitutes to what we call a ``surrogate'' $do$-operation \cite{pearl2009} on the conditional distribution. For example, suppose we only want to remove $(i\rightarrow j)$. For a given sample, $X_i$ is generated normally (Eq. \ref{eq:G_i}), but $X_j$ is generated using the modified:
\begin{equation}
\label{eq:interven}
    \hat{X}_j^{do(X_i)=\tilde{x}_{ij}} = G_j(..., X_i=\tilde{x}_{ij}),
\end{equation}
where $X_i=\tilde{x}_{ij}$ is the surrogate parent assignment. Value $\hat{X}_j^{do(X_i)}$ can be interpreted as the counterfactual value of $\hat{X}_j$, had $X_i$ been equal to $\tilde{x}_{ij}$ (see also \cite{zhang2018fairness}).

Choosing the value of surrogate variable $\tilde{x}_{ij}$ requires background knowledge of the task and bias at hand. For example, surrogate variable $\tilde{x}_{ij}$ can be sampled independently from a distribution for each synthetic sample (e.g., the marginal $P(X_i)$), be set to a fixed value for all samples in the synthetic data (e.g., if $X_i$: gender, always set $\tilde{x}_{ij}=male$ when generating feature $X_j$: job opportunity) or be chosen as to maximize/minimize some feature (e.g. $\tilde{x}_{ij}=\arg\max_x \hat{X}_j^{do(X_i)=x}$). We emphasize that we do not set $X_i=\tilde{x}_{ij}$ in the synthetic sample; $X_i=\tilde{x}_{ij}$ is only used for substitution of the removed dependence. We provide more details in Appendix \ref{appx:surrogate}.

More generally, we create surrogate variables for all edges we remove, \{$\tilde{x}_{ij}: (i\rightarrow j)\in \tilde{E}\}$. Each sample is sequentially generated by Eq. \ref{eq:interven}, with a surrogate variable for each removed incoming edge. 

{\bf Remark.} Multiple datasets can be created based on different definitions of fairness and/or different downstream prediction targets. Because debiasing happens at inference-time, this does not require retraining the model.


\section{Experiments} \label{sec:experiments}

In this section, we validate the performance of DECAF for synthesizing bias-free data based on two datasets: i) real data with existing bias and ii) real data with synthetically injected bias. The aim of the former is to show that we can remove real, existing bias. The latter experiment provides a ground-truth unbiased target distribution, which means we can evaluate the quality of the synthetic dataset with respect to this ground truth. 
For example, when historically biased data is first debiased, a model trained on the synthetic data will likely create better predictions in contemporary, unbiased/less-biased settings than benchmarks that do not debias first.  

In both experiments, the ground-truth DAG is unknown. We use causal discovery to uncover the underlying DAG and show empirically that the performance is still good.

\textbf{Benchmarks.}
We compare DECAF against the following benchmark generative methods: 
a GAN, a Wasserstein GAN with gradient penalty (WGAN-GP) \cite{gulrajani2017improved} 
and FairGAN \cite{xu2018fairgan}.  FairGAN is the only benchmark designed to generate synthetic fair data,\footnote{The works of \cite{zhang2016causal,calmon2017optimized} are not applicable here, as these methods are constrained to discrete data.} whereas GAN and WGAN-GP 
only aim to match the original data's distribution, regardless of inherent underlying bias.  For these benchmarks, fair data can be generated by naively removing the protected variable -- we refer to these methods with the PR (protected removal) suffix and provide more experimental results and insight into PR in Appendix~\ref{appx:protected_removal}.  We benchmark DECAF debiasing in four ways: i) with \textit{no inference-time debiasing} (DECAF-ND), ii) under FTU (DECAF-FTU), iii) under CF (DECAF-CF) and iv) under DP fairness (DECAF-DP).  We provide DECAF\footnote{\texttt{PyTorch Lightning} source code at \url{https://github.com/vanderschaarlab/DECAF}.} implementation details in Appendix~\ref{appx:implementation_details}.

\textbf{Evaluation criteria.}
We evaluate DECAF using the following metrics:  
\vspace{-0.3cm}
\begin{itemize}
    \item \textbf{Data quality} is assessed using metrics of precision and recall \cite{sajjadi2018assessing,kynkaanniemi2019improved,precision-recall}.  Additionally, we evaluate all methods in terms of AUROC of predicting the target variable using a downstream classifier (MLP in these experiments) trained on synthetic data.
    \vspace{-0.15cm}
    \item \textbf{FTU} is measured by calculating the difference between the predictions of a downstream classifier for setting $A$ to 1 and 0, respectively, such that $|P_{A=0}(\hat{Y}|X) - P_{A=1}(\hat{Y}|X)|$, while keeping all other features the same. This difference measures the direct influence of $A$ on the prediction.
    \vspace{-0.15cm}
    \item \textbf{DP} is measured in terms of the \textit{Total Variation} \cite{zhang2018fairness}: the difference between the predictions of a downstream classifier in terms of positive to negative ratio between the different classes of protected variable $A$, i.e., $|P(\hat{Y}|A=0) - P(\hat{Y}|A = 1)|$.
    \vspace{-0.3cm}
\end{itemize}


\begin{table}[!t]
\caption{Bias removal experiment on the Adult dataset \cite{uci}. The full table with protected attribute removal can be found in Appendix~\ref{appx:protected_removal}. } 
\vspace{0.1cm}
\centering
\resizebox{0.9\linewidth}{!}{
\centering
\begin{tabular}{l|ccc|ccc}
    \toprule
     &\multicolumn{3}{c}{\textbf{Data Quality}}  &\multicolumn{2}{c}{\textbf{Fairness}}   \\ 
    \cmidrule{2-6}  
     \textbf{Method} & Precision$\uparrow$ & Recall$\uparrow$ & AUROC$\uparrow$ & FTU$\downarrow$ & DP$\downarrow$\\ 
     \midrule
Original data $\mathcal{D}$&$0.920\pm0.006$&$0.936\pm0.008$&$0.807\pm0.004$&$0.116\pm0.028$&$0.180\pm0.010$\\
GAN&$0.607\pm0.080$&$0.439\pm0.037$&$0.567\pm0.132$&$0.023\pm0.010$&$0.089\pm0.008$\\
WGAN-GP&$0.683\pm0.015$&$0.914\pm0.005$&$0.798\pm0.009$&$0.120\pm0.014$&$0.189\pm0.024$\\
FairGAN&$0.681\pm0.023$&$0.814\pm0.079$&$0.766\pm0.029$&$0.009\pm0.002$&$0.097\pm0.018$\\
DECAF-ND&$0.780\pm0.023$&$0.920\pm0.045$&$0.781\pm0.007$&$0.152\pm0.013$&$0.198\pm0.013$\\
DECAF-FTU&$0.763\pm0.033$&$0.925\pm0.040$&$0.765\pm0.010$&$0.004\pm0.004$&$0.054\pm0.005$\\
DECAF-CF&$0.743\pm0.022$&$0.875\pm0.038$&$0.769\pm0.004$&$0.003\pm0.006$&$0.039\pm0.011$\\
DECAF-DP&$0.781\pm0.018$&$0.881\pm0.050$&$0.672\pm0.014$&$0.001\pm0.002$&$0.001\pm0.001$\\

    \bottomrule
\end{tabular}
}
\vspace{-.4cm}
\label{table:adult}
\end{table}

\subsection{Debiasing Census Data}
\vspace{-0.2cm}
In this experiment, we are given a biased dataset $\mathcal{D}\sim P(X)$ and wish to create a synthetic (and debiased) dataset $\mathcal{D}'$, with which a downstream classifier can be trained and subsequently be rolled out in a setting with distribution $P(X)$. 
We experiment on the Adult dataset \cite{uci}, with known bias between \texttt{gender} and \texttt{income} \citep{feldman2015certifying,zhang2016causal}.  The Adult dataset contains over 65,000 samples and has 11 attributes, such as \texttt{age}, \texttt{education}, \texttt{gender}, \texttt{income}, among others. Following \citep{zhang2016causal}, we treat \texttt{gender} as the protected variable and use \texttt{income} as the binary target variable representing whether a person earns over \$50K or not. For DAG $\mathcal{G}$, we use the graph discovered and presented by \citet{zhang2016causal}. In Appendix~\ref{appx:census_details}, we specify edge removals for DECAF-DP, DECAF-CF, and DECAF-FTU. 

Synthetic data is generated using each benchmark method, after which a separate MLP is trained on each dataset for computing the metrics; see Appendix \ref{appx:census_details} for details. We repeat this experiment 10 times for each benchmark method and report the average in Table~\ref{table:adult}.
As shown, DECAF-ND (no debiasing) performs amongst the best methods in terms of data utility. Because the data utility in this experiment is measured with respect to the original (biased) dataset, we see that the methods DECAF-FTU, DECAF-CF, and DECAF-DP score lower than DECAF-ND because these methods distort the distribution -- with DECAF-DP distorting the label's conditional distribution most and thus scoring worst in terms of AUROC. Note also that a downstream user who is only focused on performance would choose the synthetic data from WGAN-GP or DECAF-ND, which are also the most biased methods. Thus, we see that there is a trade-off between fairness and data utility when the evaluation distribution $P(X)$ is the original biased data. 

\vspace{-0.2cm}
\subsection{Fair Credit Approval} \label{sec:exp_ii}
\vspace{-0.2cm}
In this experiment, direct bias, which was not previously present, is synthetically injected into a dataset $\mathcal{D}$ resulting in a biased dataset $\tilde{\mathcal{D}}$. We show how DECAF can remove the injected bias, resulting in dataset $\mathcal{D}'$ that can be used to train a downstream classifier. This is a relevant scenario if the training data $\tilde{D}$ does not follow real-world distribution $P(X)$, but instead a biased distribution $\tilde{P}(X)$ (due to, e.g., historical bias). In this case, we want downstream models trained on synthetic data $\mathcal{D}'$ to perform well on the real-world data $\mathcal{D}$ instead of $\tilde{\mathcal{D}}$. We show that DECAF is successful at removing the bias and how this results in higher data utility than benchmarks methods trained on $\tilde{D}$.

We use the Credit Approval dataset from \cite{uci}, with graph $\mathcal{G}$ as discovered by the causal discovery algorithm FGES \cite{fges} using \texttt{Tetrad} \cite{tetrad} (details in Appendix~\ref{appx:credit_approval}). We inject direct bias by decreasing the probability that a sample will have their credit approved based on the chosen $A$.\footnote{We let $A$ equal (anonymized) \texttt{ethnicity} \citep{bias-lending2009,bias-lending2012, bias-lending2018, bias-lending2019}, with randomly chosen $A=4$ as the disadvantaged population.} The \texttt{credit\_approval} for this population was synthetically denied (set to 0) with some bias probability $\beta$, adding a directed edge between label and protected attribute. 

In Figure~\ref{fig:lineplots}, we show the results of running our experiment 10 times over various bias probabilities $\beta$.  We benchmark against FairGAN, as it is the only benchmark designed for synthetic debiased data.  Note that in this case, the causal DAG has only one indirect biased edge between the protected variable (see Appendix~\ref{appx:additionalexps}), and thus DECAF-DP and DECAF-CF remove the same edges and are the same for this experiment. The plots show that DECAF-FTU and DECAF-DP have similar performance to FairGAN in terms of debiasing; however, all of the DECAF-* methods have significantly better data quality metrics: precision, recall, and AUROC. DECAF-DP is one of the best performers across all 5 of the evaluation metrics and has better DP performance under higher bias.  As expected, DECAF-ND (no debiasing) has the same data quality performance in terms of precision and recall as DECAF-FTU and DECAF-DP and has diminishing performance in terms of downstream AUROC, FTU, and DP as bias strength increases.  See Appendix~\ref{appx:additionalexps} for other benchmarks, and the same experiment under hidden confounding in Appendix~\ref{appx:hiddenconfounder}.


\begin{figure}
    \centering
    \vspace{-0.2cm}
    \begin{subfigure}[b]{0.194\textwidth}
         \centering
         \includegraphics[width=\textwidth]{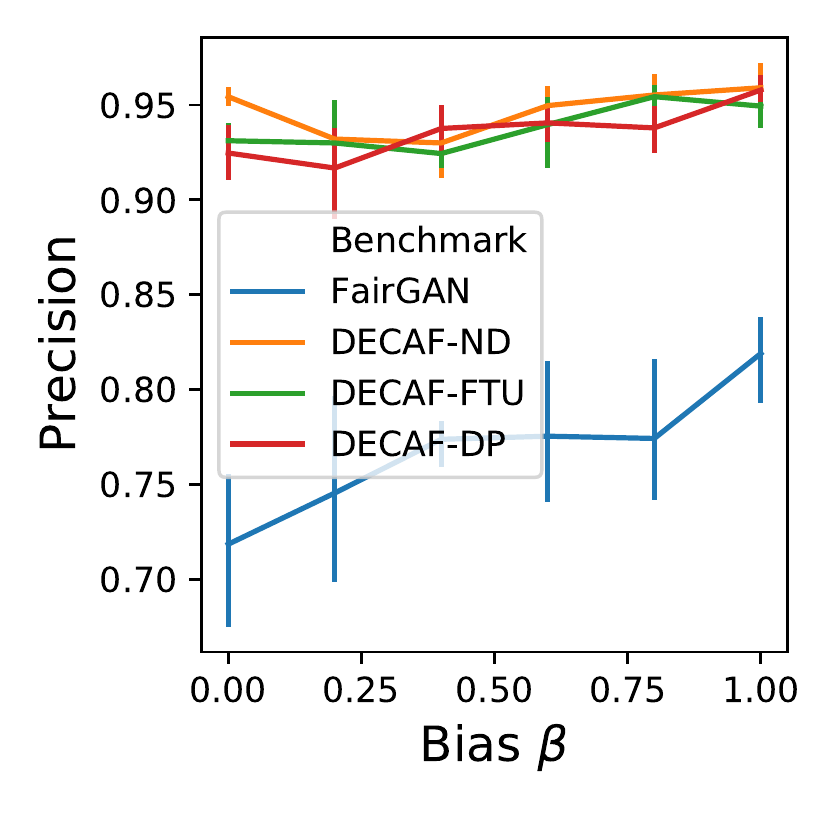}    \vspace{-0.7cm}
         \caption{Precision$\uparrow$}
         \label{fig:prec}
     \end{subfigure}
        \begin{subfigure}[b]{0.194\textwidth}
         \centering
         \includegraphics[width=\textwidth]{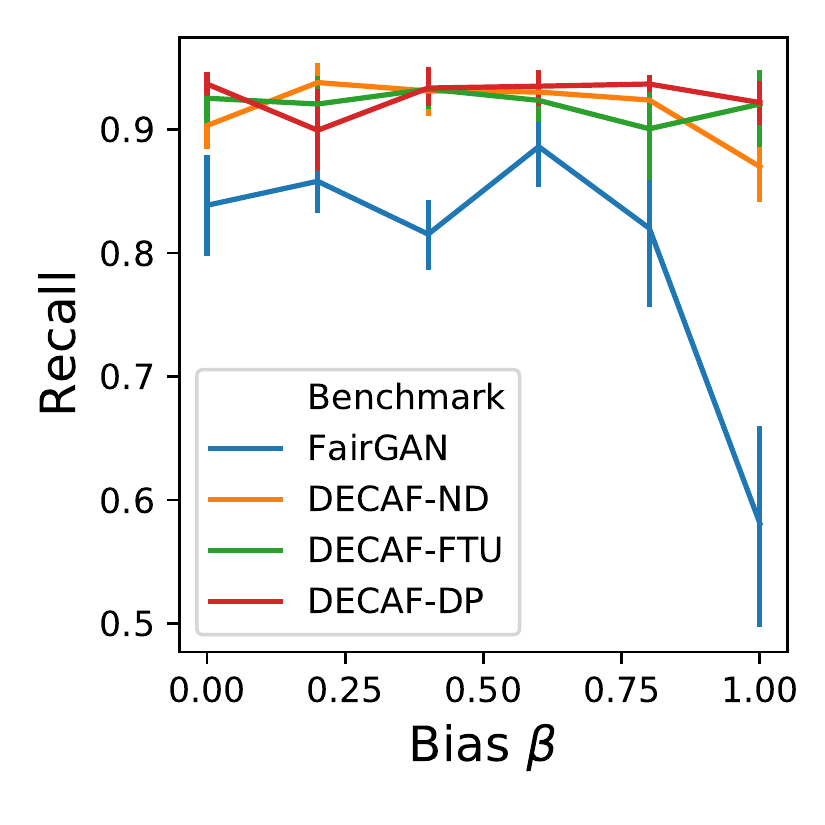}\vspace{-0.3cm}
         \caption{Recall$\uparrow$}
         \label{fig:rec}
     \end{subfigure}
    \begin{subfigure}[b]{0.194\textwidth}
         \centering
         \includegraphics[width=\textwidth]{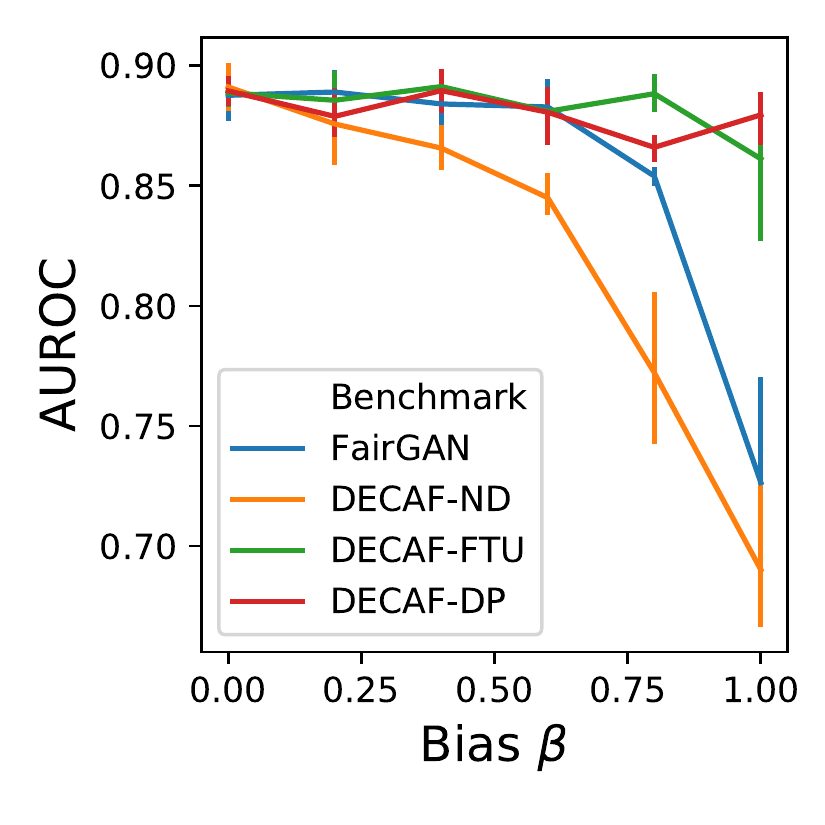}\vspace{-0.3cm}
         \caption{AUROC$\uparrow$}
         \label{fig:auc}
     \end{subfigure}
        \begin{subfigure}[b]{0.194\textwidth}
         \centering
         \includegraphics[width=\textwidth]{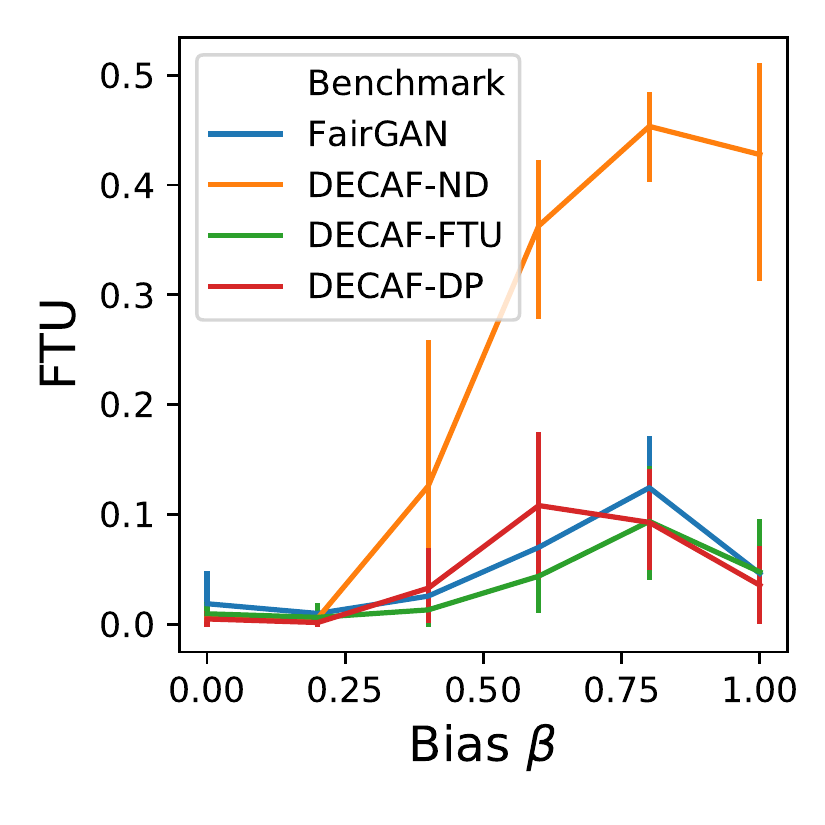}\vspace{-0.3cm}
         \caption{FTU$\downarrow$}
         \label{fig:ftu}
     \end{subfigure}    
    \begin{subfigure}[b]{0.194\textwidth}
         \centering
         \includegraphics[width=\textwidth]{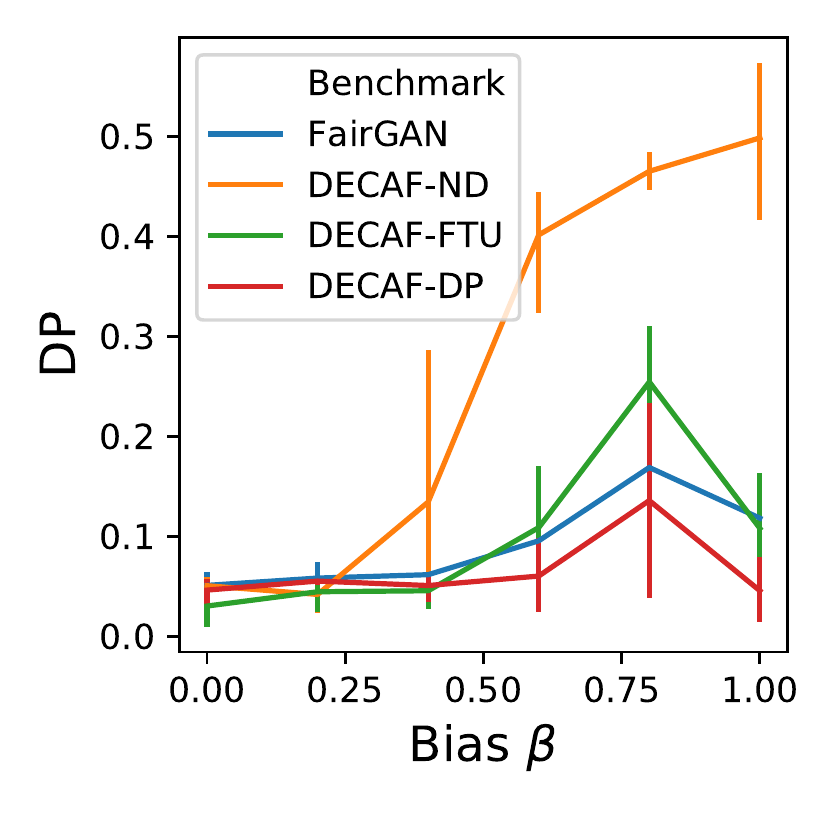}\vspace{-0.3cm}
         \caption{DP$\downarrow$}
         \label{fig:dp}
     \end{subfigure}  
     \vspace{-0.2cm}
    \caption{Plot of precision \textbf{(a)}, recall \textbf{(b)}, AUROC \textbf{(c)}, FTU \textbf{(d)}, and DP \textbf{(e)} over bias strength $\beta$.  FairGAN performs similarly in terms of DP and FTU, but DECAF-FTU and DECAF-DP have significantly better data quality as well as down stream prediction capability (AUROC).}
    \vspace{-0.5cm}
    \label{fig:lineplots}
\end{figure}

\vspace{-0.2cm}
\section{Discussion} \label{sec:discssuion}
\vspace{-0.2cm}
We have proposed DECAF, a causally-aware GAN that generates fair synthetic data. DECAF's sequential generation provides a natural way of removing these edges, with the advantage that the conditional generation of other features is left unaltered. We demonstrated on real datasets that the DECAF framework is both versatile and compatible with several popular definitions of fairness.  Lastly, we provided theoretical guarantees on the generator's convergence and fairness of downstream models. We next discuss limitations as well as applications and opportunities for future work.

\textbf{Definitions.} DECAF achieves fairness by removing edges between features, as we have shown for the popular FTU and DP definitions. Other independence-based \cite{barocas2017fairness} fairness definitions  can be achieved by DECAF too, as we show in Appendix \ref{appx:fairness_defs_continued}. Just like related debiasing works \cite{feldman2015certifying,calmon2017optimized,zhang2016causal,xu2018fairgan}, DECAF is not compatible with fairness definitions based on separation or sufficiency \cite{barocas2017fairness}, as these definitions depend on the downstream model more explicitly (e.g. Equality of Opportunity \cite{hardt2016equality}). More on this in Appendix \ref{appx:fairness_defs_continued}.

\textbf{Incorrect DAG specification.} Our method relies on the provision of causal structure in the form of a DAG for i) deciding the sequential order of feature generation and ii) deciding which edges to remove to achieve fairness. This graph need not be known a priori and can be discovered instead. If discovered, the DAG needs not equal the true DAG for many definitions of fairness, including FTU and DP, but only some (in)dependence statements are required to be correct (see Proposition 1). This is shown in the Experiments, where the DAG was discovered with the PC algorithm \cite{spirtes2000causation} and TETRAD \cite{tetrad}. Furthermore, in Appendix \ref{sec:theory} we prove that the causal generator converges to the right distribution for any graph that is Markov compatible with the data. We reiterate, however, that knowing (part of) the true graph is still helpful because i) it often leads to simpler functions $\{f_i\}_{i=1}^d$ to approximate,\footnote{Specifically, this is the case if modeling the causal direction is simpler than modeling the anti-causal direction. For many classes of models this is true when algorithmic independence holds, see \cite{peters2017}.} and ii) some causal fairness definitions do require correct directionality---see Appendix \ref{appx:fairness_defs_continued}. In Appendix \ref{appx:ablation}, we include an ablation study on how errors in the DAG specification affect data quality and downstream fairness.

\textbf{Causal sufficiency.} We have focused on just one type of graph: causally-sufficient directed graphs. Extending this to undirected or mixed graphs is possible as long as the generation order reflects a valid factorization of the observed distribution. This includes settings with hidden confounders. We note that for some definitions of bias, e.g., counterfactual bias, directionality is essential and hidden confounders would need to be corrected for (which is not generally possible).

\textbf{Time-series.} We have focused on the tabular domain. The method can be extended to other domains with causal interaction between features, e.g., time-series. Application to image data is non-trivial, partly because, in this instance, the protected attribute (e.g., skin color) does not correspond to a single observed feature. DECAF might be extended to this setting in the future by first constructing a graph in a disentangled latent space (e.g., \cite{kocaoglu2017causalgan,yang2021causalvae}).

\textbf{Social implications.} Fairness is task and context-dependent, requiring careful public debate. With that being said, DECAF empowers data issuers to take responsibility for downstream model fairness. We hope that this progresses the ubiquity of fairness in machine learning.


\section*{Acknowledgements}
We would like to thank the reviewers for their time and valuable feedback. This research was funded by the \textit{Office of Naval Research} and the \textit{WD Armstrong Trust}.

\bibliographystyle{unsrtnat}
\bibliography{neurips_main.bib}

\begin{thebibliography}{48}
\providecommand{\natexlab}[1]{#1}
\providecommand{\url}[1]{\texttt{#1}}
\expandafter\ifx\csname urlstyle\endcsname\relax
  \providecommand{\doi}[1]{doi: #1}\else
  \providecommand{\doi}{doi: \begingroup \urlstyle{rm}\Url}\fi

\bibitem[Alaa et~al.(2021)Alaa, van Breugel, Saveliev, and van~der
  Schaar]{alaa2021faithful}
Ahmed~M Alaa, Boris van Breugel, Evgeny Saveliev, and Mihaela van~der Schaar.
\newblock How faithful is your synthetic data? sample-level metrics for
  evaluating and auditing generative models.
\newblock \emph{arXiv preprint arXiv:2102.08921}, 2021.

\bibitem[Xie et~al.(2018)Xie, Lin, Wang, Wang, and Zhou]{dpgan}
Liyang Xie, Kaixiang Lin, Shu Wang, Fei Wang, and Jiayu Zhou.
\newblock Differentially private generative adversarial network.
\newblock \emph{CoRR}, abs/1802.06739, 2018.
\newblock URL \url{http://arxiv.org/abs/1802.06739}.

\bibitem[Yoon et~al.(2020)Yoon, Drumright, and van~der Schaar]{adsgan}
Jinsung Yoon, L.~Drumright, and M.~van~der Schaar.
\newblock Anonymization through data synthesis using generative adversarial
  networks (ads-gan).
\newblock \emph{IEEE Journal of Biomedical and Health Informatics},
  24:\penalty0 2378--2388, 2020.

\bibitem[Tashea(2017)]{tashea_17AD}
Jason Tashea.
\newblock Courts are using ai to sentence criminals. that must stop now.
\newblock \emph{WIRED}, Apr 2017.
\newblock URL
  \url{https://www.wired.com/2017/04/courts-using-ai-sentence-criminals-must-stop-now/}.

\bibitem[Dastin(2018)]{dastin}
Jeffrey Dastin.
\newblock Amazon scraps secret {AI} recruiting tool that showed bias against
  women.
\newblock \emph{Reuters}, 2018.

\bibitem[Lu et~al.(2018)Lu, Mardziel, Wu, Amancharla, and
  Datta]{Lu_contextual_bias}
Kaiji Lu, Piotr Mardziel, Fangjing Wu, Preetam Amancharla, and Anupam Datta.
\newblock Gender bias in neural natural language processing.
\newblock \emph{CoRR}, abs/1807.11714, 2018.
\newblock URL \url{http://arxiv.org/abs/1807.11714}.

\bibitem[de~Vassimon~Manela et~al.(2021)de~Vassimon~Manela, Errington, Fisher,
  van Breugel, and Minervini]{manela2021stereotype}
Daniel de~Vassimon~Manela, David Errington, Thomas Fisher, Boris van Breugel,
  and Pasquale Minervini.
\newblock Stereotype and skew: Quantifying gender bias in pre-trained and
  fine-tuned language models.
\newblock In \emph{Proceedings of the 16th Conference of the European Chapter
  of the Association for Computational Linguistics: Main Volume}, pages
  2232--2242, Online, April 2021. Association for Computational Linguistics.
\newblock URL \url{https://www.aclweb.org/anthology/2021.eacl-main.190}.

\bibitem[Kadambi(2021)]{kadambi2021achieving}
Achuta Kadambi.
\newblock Achieving fairness in medical devices.
\newblock \emph{Science}, 372\penalty0 (6537):\penalty0 30--31, 2021.

\bibitem[Kamiran and Calders(2009)]{kamiran2009classifying}
Faisal Kamiran and Toon Calders.
\newblock Classifying without discriminating.
\newblock In \emph{2009 2nd International Conference on Computer, Control and
  Communication}, pages 1--6. IEEE, 2009.

\bibitem[Feldman et~al.(2015)Feldman, Friedler, Moeller, Scheidegger, and
  Venkatasubramanian]{feldman2015certifying}
Michael Feldman, Sorelle~A Friedler, John Moeller, Carlos Scheidegger, and
  Suresh Venkatasubramanian.
\newblock Certifying and removing disparate impact.
\newblock In \emph{proceedings of the 21th ACM SIGKDD international conference
  on knowledge discovery and data mining}, pages 259--268, 2015.

\bibitem[Zhang et~al.(2017)Zhang, Wu, and Wu]{zhang2016causal}
Lu~Zhang, Yongkai Wu, and Xintao Wu.
\newblock A causal framework for discovering and removing direct and indirect
  discrimination.
\newblock In \emph{Proceedings of the Twenty-Sixth International Joint
  Conference on Artificial Intelligence, {IJCAI-17}}, pages 3929--3935, 2017.
\newblock \doi{10.24963/ijcai.2017/549}.
\newblock URL \url{https://doi.org/10.24963/ijcai.2017/549}.

\bibitem[Hardt et~al.(2016)Hardt, Price, and Srebro]{hardt2016equality}
Moritz Hardt, Eric Price, and Nathan Srebro.
\newblock Equality of opportunity in supervised learning.
\newblock In \emph{Proceedings of the 30th International Conference on Neural
  Information Processing Systems}, NIPS'16, page 3323–3331, Red Hook, NY,
  USA, 2016. Curran Associates Inc.
\newblock ISBN 9781510838819.

\bibitem[Kusner et~al.(2017)Kusner, Loftus, Russell, and
  Silva]{kusner2017counterfactual}
Matt~J Kusner, Joshua Loftus, Chris Russell, and Ricardo Silva.
\newblock Counterfactual fairness.
\newblock In I.~Guyon, U.~V. Luxburg, S.~Bengio, H.~Wallach, R.~Fergus,
  S.~Vishwanathan, and R.~Garnett, editors, \emph{Advances in Neural
  Information Processing Systems}, volume~30. Curran Associates, Inc., 2017.
\newblock URL
  \url{https://proceedings.neurips.cc/paper/2017/file/a486cd07e4ac3d270571622f4f316ec5-Paper.pdf}.

\bibitem[Kilbertus et~al.(2017)Kilbertus, Rojas~Carulla, Parascandolo, Hardt,
  Janzing, and Sch\"{o}lkopf]{kilbertus2017avoiding}
Niki Kilbertus, Mateo Rojas~Carulla, Giambattista Parascandolo, Moritz Hardt,
  Dominik Janzing, and Bernhard Sch\"{o}lkopf.
\newblock Avoiding discrimination through causal reasoning.
\newblock In I.~Guyon, U.~V. Luxburg, S.~Bengio, H.~Wallach, R.~Fergus,
  S.~Vishwanathan, and R.~Garnett, editors, \emph{Advances in Neural
  Information Processing Systems}, volume~30. Curran Associates, Inc., 2017.
\newblock URL
  \url{https://proceedings.neurips.cc/paper/2017/file/f5f8590cd58a54e94377e6ae2eded4d9-Paper.pdf}.

\bibitem[Zhang and Bareinboim(2018)]{zhang2018fairness}
Junzhe Zhang and Elias Bareinboim.
\newblock Fairness in decision-making—the causal explanation formula.
\newblock In \emph{Proceedings of the AAAI Conference on Artificial
  Intelligence}, volume~32, 2018.

\bibitem[Calmon et~al.(2017)Calmon, Wei, Ramamurthy, and
  Varshney]{calmon2017optimized}
Flavio~P. Calmon, Dennis Wei, Karthikeyan~Natesan Ramamurthy, and Kush~R.
  Varshney.
\newblock Optimized data pre-processing for discrimination prevention, 2017.

\bibitem[Xu et~al.(2018)Xu, Yuan, Zhang, and Wu]{xu2018fairgan}
Depeng Xu, Shuhan Yuan, Lu~Zhang, and Xintao Wu.
\newblock Fairgan: Fairness-aware generative adversarial networks.
\newblock In \emph{2018 IEEE International Conference on Big Data (Big Data)},
  pages 570--575. IEEE, 2018.

\bibitem[Nabi and Shpitser(2018)]{nabi2018fair}
Razieh Nabi and Ilya Shpitser.
\newblock Fair inference on outcomes.
\newblock In \emph{Proceedings of the AAAI Conference on Artificial
  Intelligence}, volume~32, 2018.

\bibitem[Kingma and Welling(2014)]{vae}
Diederik~P. Kingma and M.~Welling.
\newblock Auto-encoding variational bayes.
\newblock \emph{ICLR}, abs/1312.6114, 2014.

\bibitem[Goodfellow et~al.(2014)Goodfellow, Pouget-Abadie, Mirza, Xu,
  Warde-Farley, Ozair, Courville, and Bengio]{goodfellow2014generative}
Ian Goodfellow, Jean Pouget-Abadie, Mehdi Mirza, Bing Xu, David Warde-Farley,
  Sherjil Ozair, Aaron Courville, and Y.~Bengio.
\newblock Generative adversarial networks.
\newblock \emph{Advances in Neural Information Processing Systems}, 3, 06 2014.
\newblock \doi{10.1145/3422622}.

\bibitem[Gulrajani et~al.(2017)Gulrajani, Ahmed, Arjovsky, Dumoulin, and
  Courville]{gulrajani2017improved}
Ishaan Gulrajani, Faruk Ahmed, Martin Arjovsky, Vincent Dumoulin, and Aaron~C
  Courville.
\newblock Improved training of wasserstein gans.
\newblock In I.~Guyon, U.~V. Luxburg, S.~Bengio, H.~Wallach, R.~Fergus,
  S.~Vishwanathan, and R.~Garnett, editors, \emph{Advances in Neural
  Information Processing Systems}, volume~30. Curran Associates, Inc., 2017.
\newblock URL
  \url{https://proceedings.neurips.cc/paper/2017/file/892c3b1c6dccd52936e27cbd0ff683d6-Paper.pdf}.

\bibitem[Zemel et~al.(2013)Zemel, Wu, Swersky, Pitassi, and
  Dwork]{zemel2013learning}
Rich Zemel, Yu~Wu, Kevin Swersky, Toni Pitassi, and Cynthia Dwork.
\newblock Learning fair representations.
\newblock In \emph{International conference on machine learning}, pages
  325--333. PMLR, 2013.

\bibitem[Xu et~al.(2019)Xu, Wu, Yuan, Zhang, and Wu]{xu2019achieving}
Depeng Xu, Yongkai Wu, Shuhan Yuan, Lu~Zhang, and Xintao Wu.
\newblock Achieving causal fairness through generative adversarial networks.
\newblock In \emph{Proceedings of the Twenty-Eighth International Joint
  Conference on Artificial Intelligence}, 2019.

\bibitem[Kocaoglu et~al.(2017)Kocaoglu, Snyder, Dimakis, and
  Vishwanath]{kocaoglu2017causalgan}
Murat Kocaoglu, Christopher Snyder, Alexandros~G Dimakis, and Sriram
  Vishwanath.
\newblock Causalgan: Learning causal implicit generative models with
  adversarial training.
\newblock \emph{arXiv preprint arXiv:1709.02023}, 2017.

\bibitem[Yang et~al.(2021)Yang, Liu, Chen, Shen, Hao, and
  Wang]{yang2021causalvae}
Mengyue Yang, Furui Liu, Zhitang Chen, Xinwei Shen, Jianye Hao, and Jun Wang.
\newblock Causalvae: Disentangled representation learning via neural structural
  causal models, 2021.

\bibitem[Choi et~al.(2020)Choi, Grover, Singh, Shu, and Ermon]{choi2020fair}
Kristy Choi, Aditya Grover, Trisha Singh, Rui Shu, and Stefano Ermon.
\newblock Fair generative modeling via weak supervision.
\newblock In \emph{International Conference on Machine Learning}, pages
  1887--1898. PMLR, 2020.

\bibitem[Pearl(2009)]{pearl2009}
Judea Pearl.
\newblock \emph{Causality}.
\newblock Cambridge university press, 2009.

\bibitem[Barocas and Selbst(2016)]{barocas2016big}
Solon Barocas and Andrew~D Selbst.
\newblock Big data's disparate impact.
\newblock \emph{Calif. L. Rev.}, 104:\penalty0 671, 2016.

\bibitem[Kamiran et~al.(2012)Kamiran, Žliobaitė, and Calders]{kamiran2012}
Faisal Kamiran, Indrė Žliobaitė, and Toon Calders.
\newblock Quantifying explainable discrimination and removing illegal
  discrimination in automated decision making.
\newblock \emph{Knowledge and Information Systems}, 1:\penalty0 in press, 06
  2012.
\newblock \doi{10.1007/s10115-012-0584-8}.

\bibitem[Barocas et~al.(2017)Barocas, Hardt, and
  Narayanan]{barocas2017fairness}
Solon Barocas, Moritz Hardt, and Arvind Narayanan.
\newblock Fairness in machine learning.
\newblock \emph{Nips tutorial}, 1:\penalty0 2, 2017.

\bibitem[Grgic-Hlaca et~al.(2016)Grgic-Hlaca, Zafar, Gummadi, and
  Weller]{grgic2016case}
Nina Grgic-Hlaca, Muhammad~Bilal Zafar, Krishna~P Gummadi, and Adrian Weller.
\newblock The case for process fairness in learning: Feature selection for fair
  decision making.
\newblock In \emph{NIPS Symposium on Machine Learning and the Law}, volume~1,
  page~2, 2016.

\bibitem[Zafar et~al.(2017)Zafar, Valera, Rogriguez, and
  Gummadi]{zafar2017fairness}
Muhammad~Bilal Zafar, Isabel Valera, Manuel~Gomez Rogriguez, and Krishna~P
  Gummadi.
\newblock Fairness constraints: Mechanisms for fair classification.
\newblock In \emph{Artificial Intelligence and Statistics}, pages 962--970.
  PMLR, 2017.

\bibitem[Simpson(1951)]{simpson1951interpretation}
Edward~H Simpson.
\newblock The interpretation of interaction in contingency tables.
\newblock \emph{Journal of the Royal Statistical Society: Series B
  (Methodological)}, 13\penalty0 (2):\penalty0 238--241, 1951.

\bibitem[Peters et~al.(2017)Peters, Janzing, and Sch{\"o}lkopf]{peters2017}
Jonas Peters, Dominik Janzing, and Bernhard Sch{\"o}lkopf.
\newblock \emph{Elements of causal inference: foundations and learning
  algorithms}.
\newblock The MIT Press, 2017.

\bibitem[Dwork et~al.(2014)Dwork, Roth, et~al.]{dwork2014algorithmic}
Cynthia Dwork, Aaron Roth, et~al.
\newblock The algorithmic foundations of differential privacy.
\newblock \emph{Foundations and Trends in Theoretical Computer Science},
  9\penalty0 (3-4):\penalty0 211--407, 2014.

\bibitem[Jordon et~al.(2019)Jordon, Yoon, and Schaar]{pate-gan}
James Jordon, Jinsung Yoon, and M.~Schaar.
\newblock Pate-gan: Generating synthetic data with differential privacy
  guarantees.
\newblock In \emph{ICLR}, 2019.

\bibitem[Sajjadi et~al.(2018)Sajjadi, Bachem, Lucic, Bousquet, and
  Gelly]{sajjadi2018assessing}
Mehdi S.~M. Sajjadi, Olivier Bachem, Mario Lucic, Olivier Bousquet, and Sylvain
  Gelly.
\newblock Assessing generative models via precision and recall.
\newblock In S.~Bengio, H.~Wallach, H.~Larochelle, K.~Grauman, N.~Cesa-Bianchi,
  and R.~Garnett, editors, \emph{Advances in Neural Information Processing
  Systems}, volume~31. Curran Associates, Inc., 2018.

\bibitem[Kynk\"{a}\"{a}nniemi et~al.(2019)Kynk\"{a}\"{a}nniemi, Karras, Laine,
  Lehtinen, and Aila]{kynkaanniemi2019improved}
Tuomas Kynk\"{a}\"{a}nniemi, Tero Karras, Samuli Laine, Jaakko Lehtinen, and
  Timo Aila.
\newblock Improved precision and recall metric for assessing generative models.
\newblock In H.~Wallach, H.~Larochelle, A.~Beygelzimer, F.~d\textquotesingle
  Alch\'{e}-Buc, E.~Fox, and R.~Garnett, editors, \emph{Advances in Neural
  Information Processing Systems}, volume~32. Curran Associates, Inc., 2019.

\bibitem[Flach and Kull(2015)]{precision-recall}
Peter Flach and Meelis Kull.
\newblock Precision-recall-gain curves: Pr analysis done right.
\newblock In C.~Cortes, N.~Lawrence, D.~Lee, M.~Sugiyama, and R.~Garnett,
  editors, \emph{Advances in Neural Information Processing Systems}, volume~28.
  Curran Associates, Inc., 2015.

\bibitem[Dua and Graff(2020)]{uci}
Dheeru Dua and Casey Graff.
\newblock {UCI} machine learning repository, 2020.
\newblock URL \url{http://archive.ics.uci.edu/ml}.

\bibitem[Ramsey et~al.(2017)Ramsey, Glymour, Sanchez-Romero, and Glymour]{fges}
Joseph Ramsey, Madelyn Glymour, Ruben Sanchez-Romero, and Clark Glymour.
\newblock A million variables and more: the fast greedy equivalence search
  algorithm for learning high-dimensional graphical causal models, with an
  application to functional magnetic resonance images.
\newblock \emph{International Journal of Data Science and Analytics},
  3\penalty0 (2):\penalty0 121--129, Mar 2017.
\newblock ISSN 2364-4168.
\newblock \doi{10.1007/s41060-016-0032-z}.

\bibitem[Glymour et~al.(2019)Glymour, Scheines, Spirtes, and Ramsey]{tetrad}
Clark Glymour, Richard Scheines, Peter Spirtes, and Joseph Ramsey.
\newblock Tetrad, 2019.
\newblock URL \url{http://www.phil.cmu.edu/tetrad/index.html}.

\bibitem[Avery et~al.(2009)Avery, Brevoort, and Canner]{bias-lending2009}
Robert~B. Avery, Kenneth~P. Brevoort, and Glenn Canner.
\newblock Credit scoring and its effects on the availability and affordability
  of credit.
\newblock \emph{Journal of Consumer Affairs}, 43\penalty0 (3):\penalty0
  516--537, 2009.
\newblock \doi{https://doi.org/10.1111/j.1745-6606.2009.01151.x}.

\bibitem[Avery et~al.(2012)Avery, Brevoort, and Canner]{bias-lending2012}
Robert~B. Avery, Kenneth~P. Brevoort, and Glenn Canner.
\newblock Does credit scoring produce a disparate impact?
\newblock \emph{Real Estate Economics}, 40\penalty0 (s1):\penalty0 S65--S114,
  2012.
\newblock \doi{https://doi.org/10.1111/j.1540-6229.2012.00348.x}.

\bibitem[Dobbie et~al.(2018)Dobbie, Liberman, Paravisini, and
  Pathania]{bias-lending2018}
Will Dobbie, Andres Liberman, Daniel Paravisini, and Vikram Pathania.
\newblock Measuring bias in consumer lending.
\newblock Working Paper 24953, National Bureau of Economic Research, August
  2018.
\newblock URL \url{http://www.nber.org/papers/w24953}.

\bibitem[{Lohia} et~al.(2019){Lohia}, {Natesan Ramamurthy}, {Bhide}, {Saha},
  {Varshney}, and {Puri}]{bias-lending2019}
P.~K. {Lohia}, K.~{Natesan Ramamurthy}, M.~{Bhide}, D.~{Saha}, K.~R.
  {Varshney}, and R.~{Puri}.
\newblock Bias mitigation post-processing for individual and group fairness.
\newblock In \emph{ICASSP 2019 - 2019 IEEE International Conference on
  Acoustics, Speech and Signal Processing (ICASSP)}, pages 2847--2851, 2019.
\newblock \doi{10.1109/ICASSP.2019.8682620}.

\bibitem[Spirtes et~al.(2000)Spirtes, Glymour, Scheines, and
  Heckerman]{spirtes2000causation}
Peter Spirtes, Clark~N Glymour, Richard Scheines, and David Heckerman.
\newblock \emph{Causation, prediction, and search}.
\newblock MIT press, 2000.

\bibitem[Alessandra(1988)]{alessandra1988doctrines}
Anita~M Alessandra.
\newblock When doctrines collide: Disparate treatment, disparate impact, and
  watson v. fort worth bank \& trust.
\newblock \emph{U. Pa. L. Rev.}, 137:\penalty0 1755, 1988.

\end{thebibliography}

\clearpage
\newpage

\appendix
\section{Protected variable removal} \label{appx:protected_removal}

A trivial method for satisfying FTU fairness, is to remove the protected attribute from downstream learners.  We first provide a motivating example explaining why this is sub-optimal.  We then follow this with an experiment on the Adult dataset.


\begin{wrapfigure}{R}{4.4cm}
\vspace{-0.1cm}
\flushright
\includegraphics[width=4.5cm]{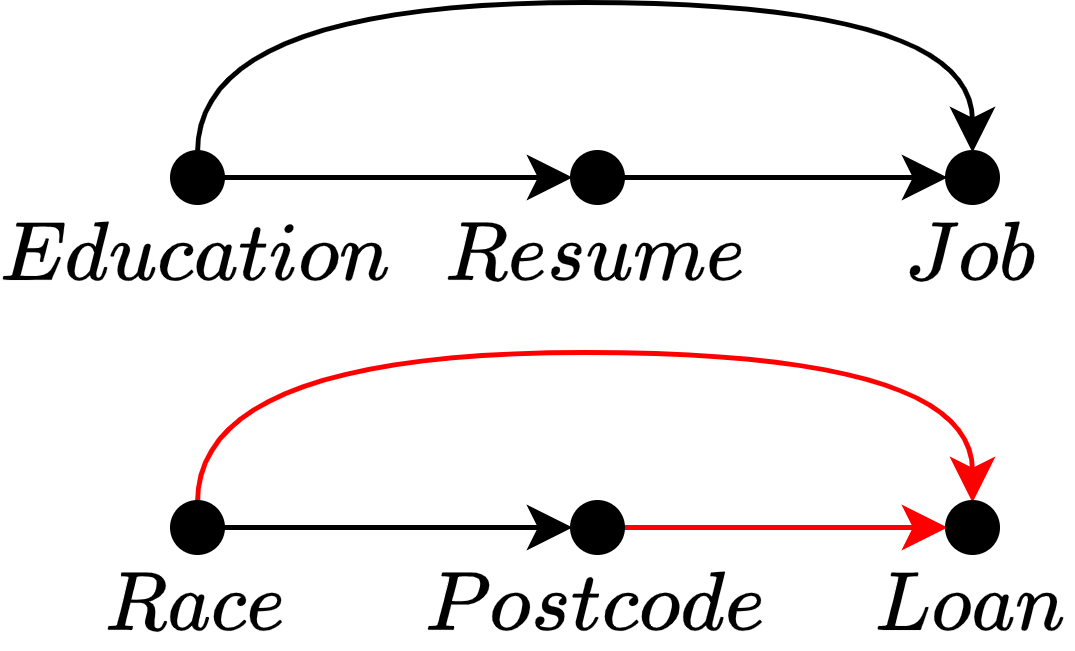}\label{fig:example_bias}
\caption{Human knowledge is essential for defining fairness.}
\label{fig:two_examples}
\vspace{-0.2cm}
\rule{\linewidth}{.75pt}
\end{wrapfigure}

\subsection{Example} Defining fairness is task and data dependent. For example, let us assume two datasets are generated by the graphical models in Figure \ref{fig:two_examples}. Data generated by the top graph is considered fair: $Education$ affects past experience ($Resume$), which together affect future job prospects ($Job$). The bottom graph is a historical example  of unfairness: even if there would be no bias between $Loan$ and $Race$, \textit{redlining} (i.e. the practice of refusing a loan to people living in certain areas) would discriminate indirectly based on race \citep{bias-lending2009,bias-lending2012, bias-lending2018, bias-lending2019}. Human knowledge is thus essential for defining fairness correctly, and making sure (e.g., historical) bias is not propagated by the models we deploy. This example also shows why simply removing or not measuring a sensitive attribute does not suffice: not only does this ignore indirect bias, but hiding the protected attribute leads to an (additional) correlation between $Postcode$ and $Loan$ due to confounding. A smart debiasing method is required that can distinguish fair from unfair relations.

\subsection{Experiment}
As explained in the previous example, simply removing the protected attribute is a naive and sub-optimal solution to FTU fairness due to confounding.  
Let us test this experimentally. We use the same experimental setup described in Section 6 for the Adult dataset, but we include additional metrics for protected attribute removal.  We denote protected attribute removal by the *-PR suffix. In Table~\ref{table:adult_withpr}, we observe that naively removing the protected attribute only ensures FTU fairness, as shown by: GAN-PR, WGAN-GP-PR, and DECAF-PR.  Furthermore, we observe that synthetic data quality diminishes as well for WGAN-GP-PR and DECAF-PR across precision, recall, and AUROC. For GAN-PR we see a slight improvement in data quality over GAN, however this improvement is very minimal in comparison to DECAF.

\begin{table}[!h]
\caption{Full table of bias removal experiment on Adult dataset \cite{uci} including protected removal (PR) metrics. For methods *-PR, we remove the protected attribute from the dataset before synthesizing data. $\ddagger$Note that the FTU values for the *-PR values will be zero since they are removed from the data generation method.} 

\centering
\resizebox{0.95\linewidth}{!}{
\centering
\begin{tabular}{l|ccc|ccc}
    \toprule
     &\multicolumn{3}{c}{\textbf{Data Quality}}  &\multicolumn{2}{c}{\textbf{Fairness}}   \\ 
    \cmidrule{2-6}  
     \textbf{Method} & Precision$\uparrow$ & Recall$\uparrow$ & AUROC$\uparrow$ & FTU$\downarrow$ & DP$\downarrow$\\ 
     \midrule
Original data $\mathcal{D}$&$0.920\pm0.006$&$0.936\pm0.008$&$0.807\pm0.004$&$0.116\pm0.028$&$0.180\pm0.010$\\
GAN&$0.607\pm0.080$&$0.439\pm0.037$&$0.567\pm0.132$&$0.023\pm0.010$&$0.089\pm0.008$\\
WGAN-GP&$0.683\pm0.015$&$0.914\pm0.005$&$0.798\pm0.009$&$0.120\pm0.014$&$0.189\pm0.024$\\
FairGAN&$0.681\pm0.023$&$0.814\pm0.079$&$0.766\pm0.029$&$0.009\pm0.002$&$0.097\pm0.018$\\
GAN-PR&$0.632\pm0.077$&$0.509\pm0.110$&$0.612\pm0.106$&$\ddagger0.0\pm0.0$&$0.120\pm0.012$\\
WGAN-GP-PR&$0.640\pm0.019$&$0.848\pm0.028$&$0.739\pm0.034$&$\ddagger0.0\pm0.0$&$0.078\pm0.014$\\
DECAF-PR&$0.717\pm0.021$&$0.839\pm0.033$&$0.769\pm0.020$&$\ddagger0.0\pm0.0$&$0.044\pm0.013$\\
DECAF-ND&$0.780\pm0.023$&$0.920\pm0.045$&$0.781\pm0.007$&$0.152\pm0.013$&$0.198\pm0.013$\\
DECAF-FTU&$0.763\pm0.033$&$0.925\pm0.040$&$0.765\pm0.010$&$0.004\pm0.004$&$0.054\pm0.005$\\
DECAF-CF&$0.743\pm0.022$&$0.875\pm0.038$&$0.769\pm0.004$&$0.003\pm0.006$&$0.039\pm0.011$\\
DECAF-DP&$0.781\pm0.018$&$0.881\pm0.050$&$0.672\pm0.014$&$0.001\pm0.002$&$0.001\pm0.001$\\


    \bottomrule
\end{tabular}
}

\label{table:adult_withpr}
\end{table}
\section{Convergence guarantees DECAF GAN} \label{sec:theory}
Assuming the correct underlying data generating DAG is known, well-known theoretical results for GANs transfer to DECAF. We highlight the main results.
The typical GAN minimax objective (Eq. 3 paper) is optimized by iteratively updating the discriminator and generator, with respective losses:
\begin{align}
    \mathcal{L}_D(\hat{X}, X) &= \log D(\hat{X}) + \log(1-D(X))\\
    \label{eq:loss_G}
    \mathcal{L}_G(\hat{X}) &= -\log D(\hat{X})
\end{align}

First, we reiterate the following theorem from \cite{goodfellow2014generative}. Let $P_G$ and $P_X$ denote generator and original data distributions, respectively. 
\begin{theorem} \label{theorem:goodfellow}
Given fixed optimal discriminator $D^*$, the global minimum of the generator loss (Eq. \ref{eq:loss_G}) is achieved if and only if $P_G = P_X$.
\end{theorem}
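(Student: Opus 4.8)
The plan is to follow the classical two-stage argument of \cite{goodfellow2014generative}, specialized to the notation here. Since the statement concerns a \emph{fixed} optimal discriminator, the first task is to characterize what optimality means for a given generator, and the second is to minimize the resulting criterion over the induced distribution $P_G$.

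First I would fix the generator (hence the induced distribution $P_G$) and solve for the optimal discriminator. Writing the discriminator's objective $\mathbb{E}_{x\sim P_X}[\log D(x)] + \mathbb{E}_{x\sim P_G}[\log(1-D(x))]$ as a single integral $\int\big(P_X(x)\log D(x) + P_G(x)\log(1-D(x))\big)\,dx$, I would maximize the integrand pointwise. For each fixed $x$ the map $t\mapsto P_X(x)\log t + P_G(x)\log(1-t)$ is strictly concave on $(0,1)$ and attains its maximum at $t = P_X(x)/(P_X(x)+P_G(x))$, giving
\[
D^*(x) = \frac{P_X(x)}{P_X(x)+P_G(x)}.
\]

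Next I would substitute $D^*$ back into the value function to obtain the virtual training criterion $C(G)$ that the generator effectively minimizes. Inserting $D^*$ and adding and subtracting $\log 2$ inside each expectation, the criterion rearranges into
\begin{align}
C(G) &= -\log 4 + D_{\mathrm{KL}}\!\left(P_X \,\Big\|\, \tfrac{P_X+P_G}{2}\right) + D_{\mathrm{KL}}\!\left(P_G \,\Big\|\, \tfrac{P_X+P_G}{2}\right) \notag\\
&= -\log 4 + 2\,\mathrm{JSD}(P_X \,\|\, P_G). \notag
\end{align}
Because the Jensen--Shannon divergence is non-negative and vanishes if and only if its two arguments coincide, $C(G)\ge -\log 4$ with equality exactly when $P_G = P_X$, which identifies $P_G = P_X$ as the unique global minimizer.

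The only delicate point is justifying the pointwise maximization of the integral, i.e. that the supremum over discriminator functions is actually attained by the pointwise-optimal $D^*$; this is legitimate precisely under the capacity assumption (condition (ii) of Theorem~\ref{theorem:conv}) that $D$ can represent an arbitrary map into $[0,1]$. Everything else is routine algebra, and recognizing the $\mathrm{JSD}$ form is the single key rewriting. I would emphasize that this lemma is a verbatim restatement of the standard GAN result, so the genuinely new content for DECAF does not lie here but in the companion convergence theorem (Theorem~\ref{theorem:conv}): there one must argue that restricting $G$ to the causal, sequential factorization induced by a Markov-compatible $\mathcal{G}$ does not shrink the family of representable distributions $P_G$, so that $P_G = P_X$ remains attainable and this optimum is actually reached by the proposed architecture.
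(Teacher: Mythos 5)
Your proposal is correct and takes essentially the same approach as the paper: the paper's proof of this statement is simply a one-line citation to Theorem~1 of \cite{goodfellow2014generative} (noting that DECAF leaves the discriminator unchanged), and your argument---deriving the pointwise-optimal discriminator $D^*(x) = P_X(x)/(P_X(x)+P_G(x))$ and rewriting the resulting criterion as $-\log 4 + 2\,\mathrm{JSD}(P_X\,\|\,P_G)$---is precisely the proof of that cited theorem, including the capacity caveat needed for the pointwise maximization. You have merely unpacked the citation rather than diverged from it.
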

\begin{proof}
Noting that we have made no changes to the GAN discriminator, we refer to Theorem 1 of \cite{goodfellow2014generative}.
\end{proof}

\begin{theorem}(Convergence guarantee) \label{theorem:conv_app}
Assuming the following three conditions hold:
\begin{enumerate}[(i)]
    \item data generating distribution $P_X$ is Markov compatible with a known DAG $\mathcal{G}=(V,E)$;
    \item generator $G$ and discriminator $D$ have enough capacity; and
    \item in every training step the discriminator is trained to optimality given fixed $G$, and $G$ is subsequently updated as to maximise the discriminator loss (Eq. 3 paper);
\end{enumerate}
then generator distribution $P_G$ converges to true data distribution $P_X$
\end{theorem}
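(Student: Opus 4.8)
The plan is to reduce the statement to the standard GAN convergence result (Theorem \ref{theorem:goodfellow}, i.e.\ Theorem 1 of \cite{goodfellow2014generative}) and then supply the one ingredient that is genuinely new to DECAF: verifying that the \emph{factored} sequential generator $G = [G_1,\dots,G_d]$ is expressive enough to realise the true joint $P_X$. Theorem \ref{theorem:goodfellow} already tells us that, for a fixed optimal discriminator, the generator loss is globally minimised exactly when $P_G = P_X$; condition (iii) together with the convexity argument of \cite{goodfellow2014generative} then guarantees that the iterative training procedure drives $P_G$ to this global optimum. Since we have not modified the discriminator or the minimax objective (Eq.\ \ref{eq:objective}), the only gap to close is whether $P_G = P_X$ lies in the range of the constrained generator at all. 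If it does, the standard theory transfers verbatim.

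First I would record how the sequential generator factorises. Because each coordinate is produced as $\hat X_i = G_i(\hat{\pa}(X_i), Z_i)$ (Eq.\ \ref{eq:G_i}) with the $Z_i$ mutually independent, the generator is itself a structural causal model over the DAG $\mathcal{G}$, so its induced law factorises as $P_G(x) = \prod_{i=1}^d P_{G_i}\!\left(x_i \mid \pa(X_i)\right)$, where $P_{G_i}(\cdot\mid \pa(X_i))$ denotes the pushforward of $P(Z_i)$ through $G_i$ at fixed parent values. By condition (i), $P_X$ is Markov compatible with the same graph, hence $P_X(x) = \prod_{i=1}^d P\!\left(x_i \mid \pa(X_i)\right)$ over the identical conditional structure. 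Consequently the two joints coincide as soon as every conditional kernel is matched, $P_{G_i}(\cdot\mid\pa(X_i)) = P(\cdot\mid\pa(X_i))$ for all $i$; no inductive bookkeeping over how the parents were generated is needed, since the kernels are compared at fixed parent values and both factorisations run over the same topological structure.

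Next I would argue that each kernel is attainable. The SEM assumption (Eq.\ \ref{eq:SEM}) states that the true conditional $P(X_i\mid\pa(X_i))$ is exactly the law of $f_i(\pa(X_i),Z_i)$ for independent noise $Z_i$; with enough generator capacity (condition (ii)) each $G_i$ can reproduce $f_i$ to arbitrary accuracy, so the target kernel is realisable. Combining the two previous steps yields a generator parameterisation with $P_G = P_X$, placing the true distribution in the attainable set, and feeding this back into Theorem \ref{theorem:goodfellow} and the convergence argument of \cite{goodfellow2014generative} completes the proof. The main obstacle I anticipate is precisely the expressiveness step: one must be careful that Markov compatibility (a strictly weaker hypothesis than knowing the exact structural functions) already suffices, and that matching the local conditionals genuinely forces the joints to agree even though the generator feeds on its own generated parents $\hat{\pa}(X_i)$ rather than the true ones --- the factorisation identity above is what resolves this, and I would state it carefully rather than rely on an informal ``each stage is correct'' argument.
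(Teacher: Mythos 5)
Your proposal is correct and follows essentially the same route as the paper's own proof: factorize $P_X$ over $\mathcal{G}$ via Markov compatibility, use the capacity assumption to conclude the sequential generator can realise every conditional kernel (hence the joint), and then invoke Theorem~\ref{theorem:goodfellow} together with the convexity/convergence argument of Proposition~2 of \cite{goodfellow2014generative}. Your write-up is in fact more explicit than the paper's (spelling out the generator's own factorisation and the kernel-matching step, and noting that realisability must follow from Markov compatibility alone rather than from the true structural functions $f_i$), but it contains no genuinely different idea.
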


\begin{proof}
This is the direct result of the construction of generator $G$ and follows a similar argument as Proposition 2 of \cite{goodfellow2014generative}. Note that by the definition of compatibility of $P_X$ and $\mathcal{G}=(V,E)$, we can write:
\begin{equation*}
    P_X(X) = \prod_{X_i\in V} P(X_i|\{X_j:(X_j\rightarrow X_i)\in E\}) 
\end{equation*}
Given each $G_i$ (see Eq. 2 paper) has enough capacity, $G$ can thus express the full distribution $P_X(X)$. 
By convexity of the loss functions and the existence of a unique global optimum (Theorem \ref{theorem:goodfellow}), gradient descent is theoretically guaranteed to converge, $P_G\rightarrow P_X$ \cite{goodfellow2014generative}. 
\end{proof}

Note that for condition (i) of Theorem \ref{theorem:conv_app} to be valid, we do not require that graph $\mathcal{G}$ equals the true underlying DAG of the data generating distribution $P_X$; $P_G$ is only required to disentangle into the causal factors implied by $\mathcal{G}$. This is highly beneficial, as it enables generation of perfect synthetic data without perfect causal knowledge. For example, if the Markov equivalence class of the true underlying DAG has been determined through causal discovery, any graph $\mathcal{G}$ in the equivalence class satisfies condition (i) of Theorem \ref{theorem:conv_app}.

{\bf Remarks} The convergence guarantees do not necessarily hold in practice. First, finite data means there there is no guarantee the algorithm converges to the true underlying data distribution instead of, for example, the observed empirical data distribution. Second, in practice each generator $G_i$ will have limited capacity and $P(X_i|\pa(X_i))$ might not lie in its support. On a more positive note, these limitations are not specific for DECAF and generally GANs have done well in the past. Additionally, our method is directly extendable to the more stable WGAN-GP \cite{gulrajani2017improved} and other generative models. 

\section{Compatibility different fairness definitions} \label{appx:fairness_defs_continued}
{\bf Related definitions} In the paper we have discussed FTU, DP and CF, which are independence-based definitions and do not take directionality explicitly into account when defining fairness. Some authors use similar definitions, but instead of looking at (conditional) independencies of $A$ and $Y$, they consider (blocked) directed paths from protected attribute $A$ to $Y$. These definitions are compatible with DECAF, but mean less edges need to be removed. See Table \ref{tab:other_defs} and Figure \ref{fig:diff_defs}. \citet{zhang2016causal} consider direct and indirect discrimination, which can be understood as the ``directed path'' equivalents of FTU and DP.\footnote{Note: the legal definitions of direct and indirect discrimination are in fact defined as FTU and DP.} Assuming faithfulness and not allowing any discrimination---i.e. $\tau=0$ in \cite{zhang2016causal}---direct and indirect discrimination prohibit the existence of edge $A\rightarrow Y$ and directed path $A$ to $Y$, respectively. \citet{zhang2018fairness} disentangle the total effect of $A$ on $Y$ into direct, indirect and spurious relations. This leads to the same definition for direct discrimination as \citep{zhang2016causal}, but a different definition of indirect discrimination as it \emph{does} allow for direct influence of $A$ on $Y$. A very similar definition, coined counterfactual fairness, is proposed by \cite{kusner2017counterfactual}. \citet{kilbertus2017avoiding} introduce \textit{unresolved discrimination} (UD) as the path-equivalent version of conditional fairness. They define \textit{proxy} discrimination as well, which can be considered the dual of UD \citep{kilbertus2017avoiding}. 

\textbf{\bf Incompatible definitions} Some definitions are not compatible with fair synthetic data generation because they rely on the final prediction, e.g. equality of opportunity \cite{hardt2016equality} and calibration (e.g. see \cite{barocas2017fairness}). As a consequence, DECAF cannot be used for these. Furthermore, we note that all our fairness definitions are binary: a distribution is fair or unfair. In practice some level of unfairness might be tolerated. For example, the US Supreme Court's 80\% rule \cite{alessandra1988doctrines} essentially states that a prediction has disparate impact if for disadvantaged group $A=1$ and positive outcome $\hat{Y}=1$, $\frac{P(\hat{Y}=1|A=1)}{P(\hat{Y}=1|A=0)}<0.8$ \citep{feldman2015certifying}. Some authors (e.g. \citet{feldman2015certifying}) have explored this continuous definition, but because it requires quantification of path-specific effects work is limited by a linearity assumption. Extending this to nonlinear path-specific effects is an interesting direction for future work, with great relevance for real-life applications.

\begin{table}[!htbp]
    \centering
    \caption{Different definitions of fairness that are compatible with DECAF and which edges need removal when evaluation distribution $P(X)=P_X(X)$. The first three definitions are non-causal, the others only prohibit causal paths. $A,Y,P, R$ denote respectively the protected attribute, label, proxy variables and explanatory variables. Let $\pi_{A\rightarrow Y}$ denote a directed path from $A$ to $Y$ that ends with $B\rightarrow Y$ for some $B$.}
    \label{tab:other_defs}
    \scalebox{1}{
    \begin{tabularx}{\textwidth}{l l}
    \toprule
        {\bf Definition}& Edges to remove \\ \midrule
    Demographic Parity (DP) \cite{zafar2017fairness}                                  & $B\leftrightarrow Y: \forall B\in Bl_{\mathcal{G}'}(Y)$ with $A\not\indep B$\\
    Conditional Fairness (CF)  & $B\leftrightarrow Y: \forall B\in Bl_{\mathcal{G}'}(Y)$ with $A\not\indep B|R$\\
    Fairness through Unawareness (FTU)                                               & $A\leftrightarrow Y$ and ($A\rightarrow C$ or $Y\rightarrow C: \forall C$ with $A\rightarrow C \leftarrow Y$)\\
    \midrule 
    No Indirect Discrim. ($\neg$ ID) \cite{zhang2016causal}                    & $B\rightarrow Y$ if there exists $\pi_{A\rightarrow Y}$ \\
    No Proxy Discrim. ($\neg$PD) \cite{kilbertus2017avoiding}                  & $B\rightarrow Y$ if there exists $\pi_{A\rightarrow Y}$ that is blocked by $P$\\
    No Unresolved Discrim. ($\neg$UD) \cite{kilbertus2017avoiding}             & $B\rightarrow Y$ if there exists $\pi_{A\rightarrow Y}$ that is not blocked by $R$ \\
    No Direct Discrim. ($\neg$ DD) \cite{zhang2016causal,zhang2018fairness}                      & $A\rightarrow Y$\\

    \bottomrule
    \end{tabularx}}
\end{table}

\begin{figure}[h]
    \centering
    \includegraphics[width=0.8\textwidth]{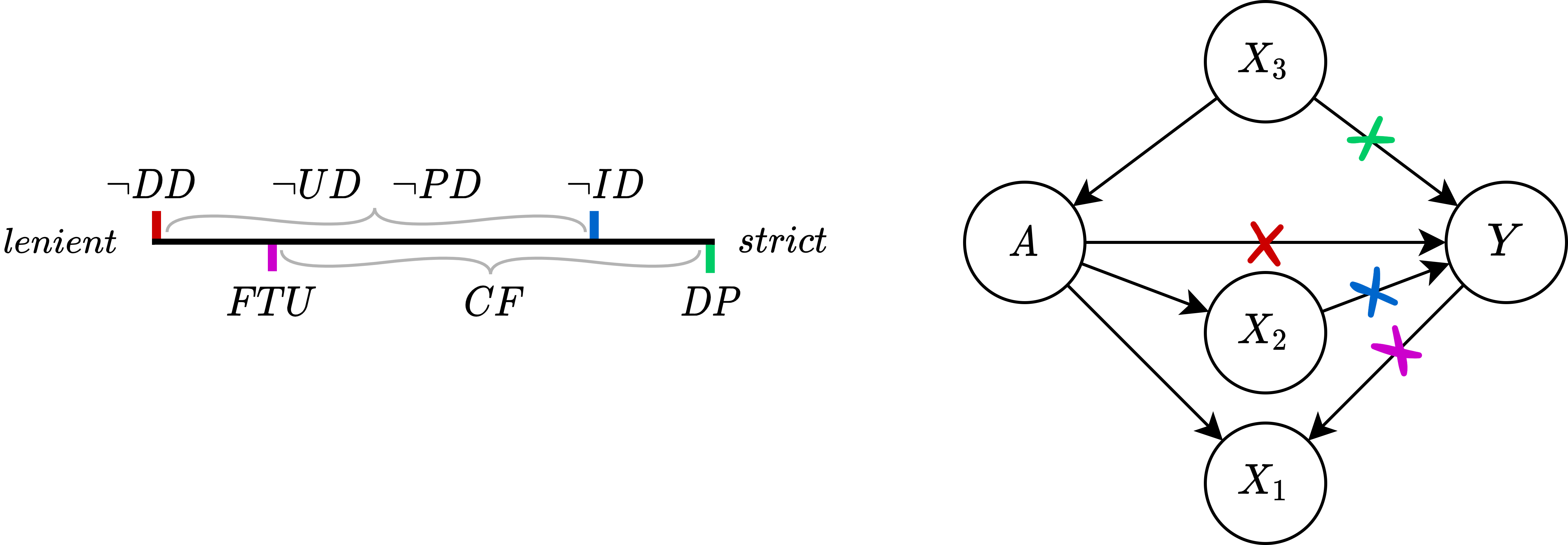}
    \caption{(Left) Typical strictness of different definitions. Note that the strictness of CF, $\neg$ UD and $\neg PD$ depends on the choice of explanatory variables/proxies. (Right) Example showing different definitions and required edge removals. $\neg$ DD: \xmarkred; FTU: \xmarkred\xmarkpink; $\neg$ ID: \xmarkred \xmarkblue; DP: \xmarkred \xmarkpink\xmarkblue\xmarkgreen. Note that for FTU, $A\rightarrow X_1$ could have been removed instead of $Y\rightarrow X_1$.}
    \label{fig:diff_defs}
\end{figure}


\section{Additional Details and Results}\label{appx:additionalexps}

\subsection{Implementation details.} \label{appx:implementation_details}
We instantiate the generator of DECAF with $d$ sub-networks with shared hidden layers.  Both the generator and discriminator are constructed having 2 hidden layers with $2d$ neurons and initialized with random uniform weights.  Each benchmark is initialized with the same random weights and published hyperparameters.  For preprocessing, all continuous variables are standardized.  We use the Adam optimizer with a learning rate of $0.001$ for up to 50 epochs.  We update the generator once for every 10 discriminator updates.  We implement DECAF using \texttt{PyTorch Lightning}\footnote{Source code is available at \url{https://github.com/vanderschaarlab/DECAF}}. 

\paragraph{Computational hardware.}  All models were trained on an Ubuntu 18.04 OS with 64GB of RAM (Intel Core i7-6850K CPU @ 3.60GHz) and 2 NVidia 1080 Ti GPUs. 

\textbf{Scalability} Due to the sequential feature generation, DECAF's run time scales linearly with the number of variables. In practice---for the larger Communities and Crime dataset---this comes down to an average training time of just about 35s per epoch when run on a machine with hexacore Intel i7-6850K CPU. Practical improvements can be made to speed this up further: when the graph is sparse one can parallelize calculations and often one can cluster (some) variables and model clusters together using a single generator network.  

\paragraph{Generating discrete variables} In both datasets the only non-binary discrete variable is the protected attribute, which for simplicity we have binarised (discriminated vs non-discriminated). All variables are generated in the same way, but binary variables are rounded off after generation. 

\begin{table}[hbt]
\centering
\caption{Overview datasets}
\begin{tabular}{llll} \toprule
                           & Credit & Census & Communities \\ \midrule
Number of features         & 15     & 10     & 128         \\ 
- Continuous               & 3      & 4      & 120         \\
- Discrete                 & 12     & 6      & 8           \\
Target type                & Binary & Binary & Binary      \\
Number of samples          & 379    & 32,561 & 1994        \\
Number of discovered edges & 40     & 22     & 1288       \\ \bottomrule
\end{tabular}
\end{table}

\subsection{Census Dataset Details} \label{appx:census_details}

DECAF supports both FTU and DP debiasing, i.e. respectively direct and indirect discrimination removal. We use the DAG from \cite{feldman2015certifying,zhang2016causal} as shown in Figure~\ref{fig:censusDAG}. FTU is achieved by removing the directed edge between between \texttt{sex} and \texttt{income} (see Corollary 3), DP is achieved by removing\footnote{Specifically, we focus on the scenario of $P(X)$ being the original biased data distribution; we want a model trained on synthetic data $\mathcal{D}\sim P'(X)$ to be DP-fair when evaluated on $P(X)$, see remark Section 4.2.} all incoming edges into the target variable that have the protected variable as an ancestor (Corollary 2)- these include edges between the target variable \texttt{income} and each of \texttt{occupation}, \texttt{hours\_per\_week}, \texttt{occupation}, \texttt{workclass}, \texttt{education}, \texttt{relationship}, \texttt{marital\_status}, and \texttt{sex}. DP fairness is overly strict, so to satisfy CF fairness, we allow the variables \texttt{occupation}, \texttt{hours\_per\_week}, \texttt{workclass}, and \texttt{education} while removing the edges from \texttt{sex}, \texttt{marital\_status}, and \texttt{relationship}. 

We generate synthetic data from the ground truth dataset using each benchmark generator.  We randomly hold out a sample of 2000 samples as a test set.
We train an MLP using default \texttt{scikit-learn} hyperparameters on the generated dataset to use as our downstream classifier.  
We use a hidden layer with 100 neurons and ReLU activation functions.  For the output layer we use a softmax activation and binary cross entropy loss. We use Adam as the optimizer with a learning rate of 0.001.

\begin{figure}[!htbp]
    \centering
    \includegraphics[width=0.9\linewidth]{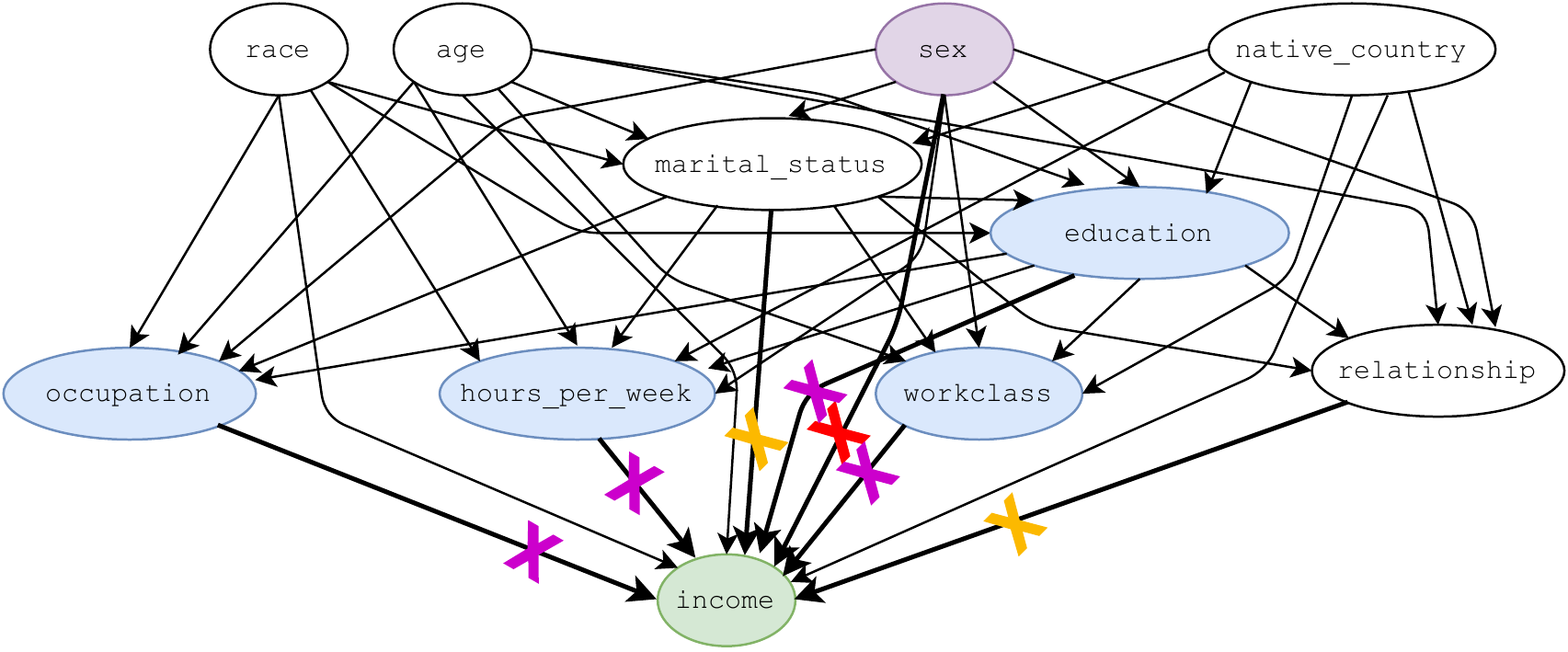}
    \caption{Adult dataset DAG from \cite{feldman2015certifying,zhang2016causal}.  
    The target variable is in green, the protected attribute in purple, and the allowed CF variables in blue. \textit{FTU is achieved by removing: \xmarkred; DP: \xmarkred\xmarkyellow\xmarkpink; CF: \xmarkred\xmarkyellow}.
    In this particular instance, we follow \cite{xu2018fairgan}, and remove gender discrimination.  However, our method generalizes to removing the highly problematic variable \texttt{race} to \texttt{income}.}
    \label{fig:censusDAG}
\end{figure}

\subsection{Fair Credit Details}\label{appx:credit_approval}

We use the Credit Approval dataset from \cite{uci} as our GT dataset.  
We synthetically add bias by decreasing the probability that a sample will be have their credit approved based on the chosen $A$.  We induce bias by choosing $A$ to be \texttt{ethnicity} \citep{bias-lending2009,bias-lending2012, bias-lending2018, bias-lending2019}, with a discriminated population having a value of 4\footnote{Note that the values have been anonymized in this dataset.}. The \texttt{credit\_approval} for this population was synthetically denied (set to 0) with some bias probability $\beta$ -- see Section~6.2 for more details.

The causal DAG used in this experiment is shown in Figure~\ref{fig:creditDAG}.  This DAG was found using the Fast Greedy Equivalence Search (FGES) \cite{fges} with the \texttt{pycausal} library \cite{tetrad}.  We provide the prior knowledge that \texttt{age} and \texttt{ethnicity} are root nodes to the FGES algorithm.  

We train an MLP using default \texttt{scikit-learn} hyperparameters on the generated dataset to use as our downstream classifier.  
We use a hidden layer with 100 neurons and ReLU activation functions.  For the output layer we use a softmax activation and binary cross entropy loss. We use Adam as the optimizer with a learning rate of 0.001.

In Table~\ref{table:bias_removal}, we show the results of running this experiment 10 times over our biased dataset. Note that our method was able to generate synthetic examples that had the highest AUROC (demonstrating FTU fairness). Table ~\ref{table:bias_removal} shows that our method can perform debiasing without performance hits to the synthetic data metrics -- i.e., there are no significant difference (outside of a standard deviation) between the top methods.

\begin{figure}[!htbp]
    \centering
    \includegraphics[width=0.9\linewidth]{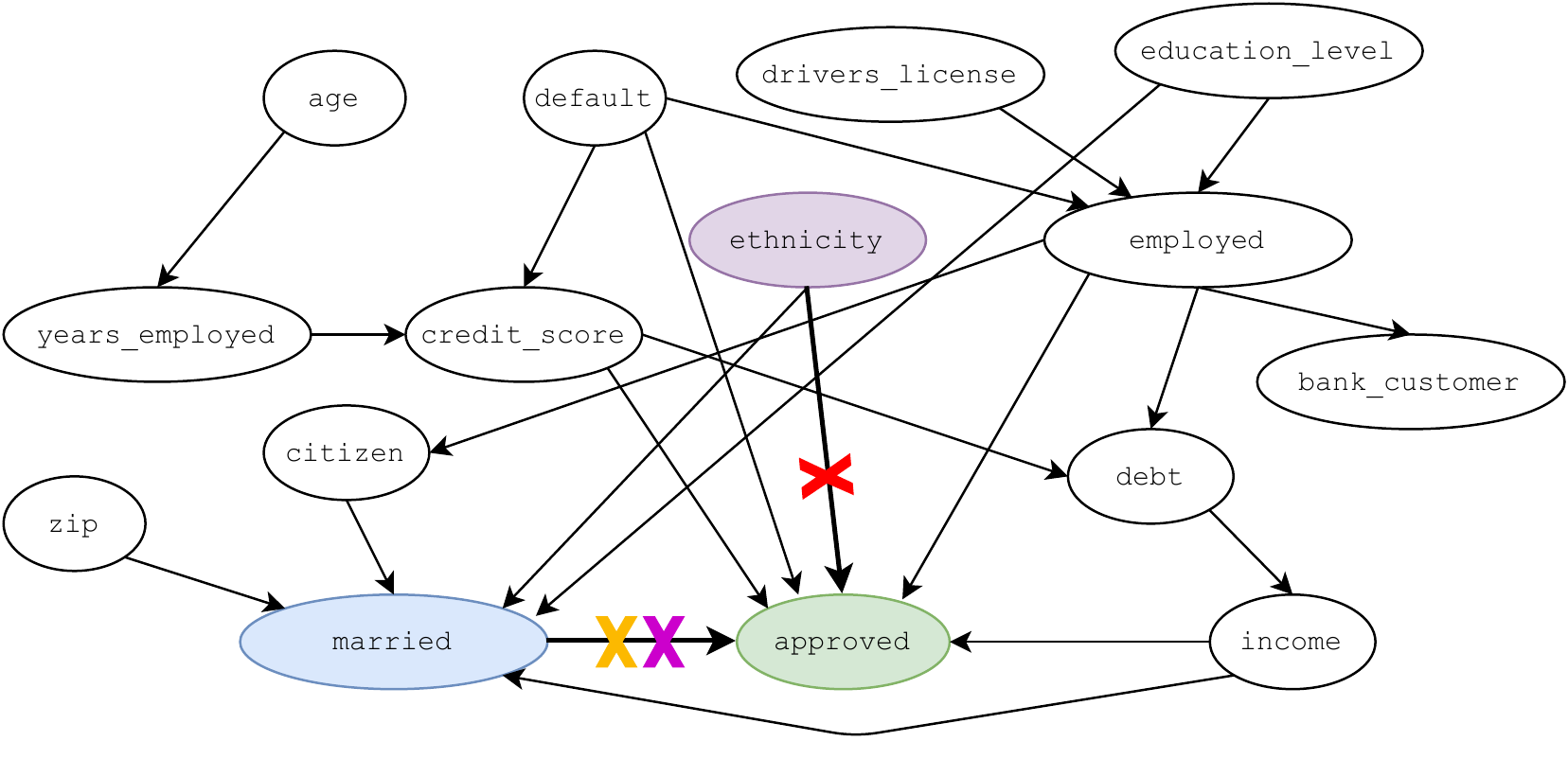}
    \caption{Credit Approval DAG discovered using FGES \cite{fges} and \texttt{Tetrad} \cite{tetrad}.  The target variable is in green, the protected attribute in purple, and the allowed CF variables in blue. \textit{FTU is achieved by removing: \xmarkred; DP: \xmarkred\xmarkyellow\xmarkpink; CF: \xmarkred\xmarkyellow}. Also, note that in this case CF fairness and DP fairness are the same.}
    \label{fig:creditDAG}
\end{figure}

\begin{table}[t]
\caption{Bias removal experiment on Credit Approval dataset.  Here we train an MLP on the listed dataset, and report the testing AUROC for credit approval prediction on the ground truth (GT) dataset for the biased population. Methods denoted *-PR represent modifications to the dataset by dropping the protected variable (PR). Note that there the FTU is zero for *-PR methods since the protected variable, P, has been removed.  
} 
\centering
\resizebox{1.0\linewidth}{!}{
\centering
\begin{tabular}{l|ccc|cc}
    \toprule
    &\multicolumn{3}{c}{\textbf{Data Quality}}  &\multicolumn{2}{c}{\textbf{Fairness}}  \\ 
    \cmidrule{2-6} 
     \textbf{Method} & Precision$\uparrow$ & Recall$\uparrow$& AUROC$\uparrow$ & DP$\downarrow$ & FTU$\downarrow$ \\ 
     \midrule
GAN&$0.921\pm0.036$&$0.335\pm0.029$&$0.743\pm0.047$&$0.405\pm0.077$&$0.194\pm0.058$\\
WGAN&$0.970\pm0.007$&$0.804\pm0.057$&$0.698\pm0.009$&$0.520\pm0.036$&$0.461\pm0.029$\\
ADSGAN&$0.963\pm0.009$&$0.841\pm0.052$&$0.708\pm0.009$&$0.506\pm0.013$&$0.429\pm0.059$\\

GAN-PR&$0.794\pm0.117$&$0.368\pm0.080$&$0.727\pm0.047$&$0.203\pm0.196$&$0.0\pm0.0$\\
WGAN-PR&$0.941\pm0.004$&$0.880\pm0.017$&$0.814\pm0.019$&$0.406\pm0.022$&$0.0\pm0.0$\\
ADSGAN-PR&$0.945\pm0.008$&$0.880\pm0.019$&$0.827\pm0.008$&$0.413\pm0.029$&$0.0\pm0.0$\\
FairGAN&$0.951\pm0.012$&$0.663\pm0.046$&$0.680\pm0.008$&$0.510\pm0.075$&$0.474\pm0.054$\\
DECAF&$0.954\pm0.012$&$0.601\pm0.015$&$0.713\pm0.045$&$0.511\pm0.130$&$0.432\pm0.127$\\
DECAF-FTU&$0.936\pm0.017$&$0.901\pm0.034$&$0.877\pm0.009$&$0.099\pm0.065$&$0.014\pm0.012$\\
DECAF-DP&$0.940\pm0.007$&$0.922\pm0.024$&$0.875\pm0.010$&$0.011\pm0.029$&$0.015\pm0.017$\\
    \bottomrule
\end{tabular}
}
\label{table:bias_removal}
\end{table}

\section{Surrogate variables} \label{appx:surrogate}

Debiasing in DECAF relies on removing edges from a trained model. As highlighted in Section 5.2, we need surrogate variables with which to replace the removed edges (Eq. 4 paper). In this section, we compare two surrogate variable mechanisms. The aim is show i) that debiasing is successful independent of the choice of surrogate variables, and ii) how prior knowledge helps in choosing surrogate variable mechanism, which leads to better data quality.

{\bf Mechanisms} Let $\tilde{X}_{ij}$ denote the surrogate variable used for the removed edge $(i\rightarrow j)$, i.e. the surrogate variable that replaces the influence of $X_i$ on $X_j$.  Here, we compare two surrogate mechanisms for this setting: 
\begin{enumerate}
    \item $\tilde{X}_{ij}\sim P(X_i)$, i.e. we sample from the parent's marginal distribution,
    \item $\tilde{X}_{ij} = \tilde{x}_{ij}$, where $\tilde{x}_{ij}$ is a fixed value. 
\end{enumerate}

Mechanism 1 is straightforward and most applicable when one does not know anything about the bias of a particular edge. By sampling from the marginal, each sample might use a different value of $\tilde{X}_{ij}$ when generating feature $X_j$, which means the diversity of the generated $X_j$ is retained better compared to mechanism 2. Mechanism 1 for all experiments in Section 6.

On the other hand, mechanism 2 is more suitable when we know explicitly that there is bias for some values of $X_i$, e.g. if $X_i$ is the protected attribute we might know there is a group $A=0$ that is being discriminated. In this case, sampling $\tilde{X}_{ij}$ from the marginal of $A$ is not desired: even though this means we remove direct bias from $A$ to $Y$, it still means we disadvantage some individuals randomly, i.e. every time we sample $\tilde{x}_{ij}=0$. We can employ the second mechanism instead, i.e. set $\tilde{x}_{ij}=1$ for all individuals. This corresponds to treating everyone like they are from the advantaged group.

{\bf Experiments} We repeat the experiment from Section 6.2, in which we insert direct bias from $A$ to $Y$ by denying loans for a disadvantaged group $A=0$ with probability $\beta$. Our aim is to remove the direct bias from $A$ to $Y$ and we evaluate the synthetic data quality and bias with respect to the original, unbiased dataset. As we will see, in this setting mechanism 2 is more appropriate: we want to treat everyone from group $A=0$ like they are from group $A=1$, thereby removing the bias we inserted. Meanwhile, we do not want to change the way we generate data for the advantaged group. More specifically, even though it would not be considered discrimination against a protected group, randomly denying loans to individuals of any group should still be considered unfair.

In Figure \ref{fig:appx_surrogate} we plot the quality metrics and FTU for three generation methods: DECAF-ND (no debiasing), DECAF-FTU1 (DECAF-FTU with surrogate mechanism 1) and DECAF-FTU2 (DECAF-FTU with mechanism 2). We plot three columns; on the left we plot the metrics for all generated data, in the middle we plot the metrics as computed on the discriminated group and on the right for the non-discriminated group. 

As we can see in the FTU plots (bottom), both debiasing mechanisms are equally valid for removing the injected bias from $A$ to $Y$. However, the precision metric tells a different story. Mechanism 1 disadvantages individuals randomly whenever it samples $\tilde{x}_{ij}=0$, but this is not in line with what we want the data to be like (no disadvantage like this at all). As a result, we see that the quality of both the discriminated group goes down. The same result can be observed in the recall and AUROC plot, though the overlapping error bars prohibit strong conclusions.

In a nutshell, these results indicate that for different mechanisms for surrogate variables, data fairness is guaranteed. However, knowledge about the origins of the bias can help increase the data utility.

\begin{figure}[hbt]
    \centering
    \begin{subfigure}[b]{0.31\textwidth}
         \centering
         \includegraphics[width=\textwidth]{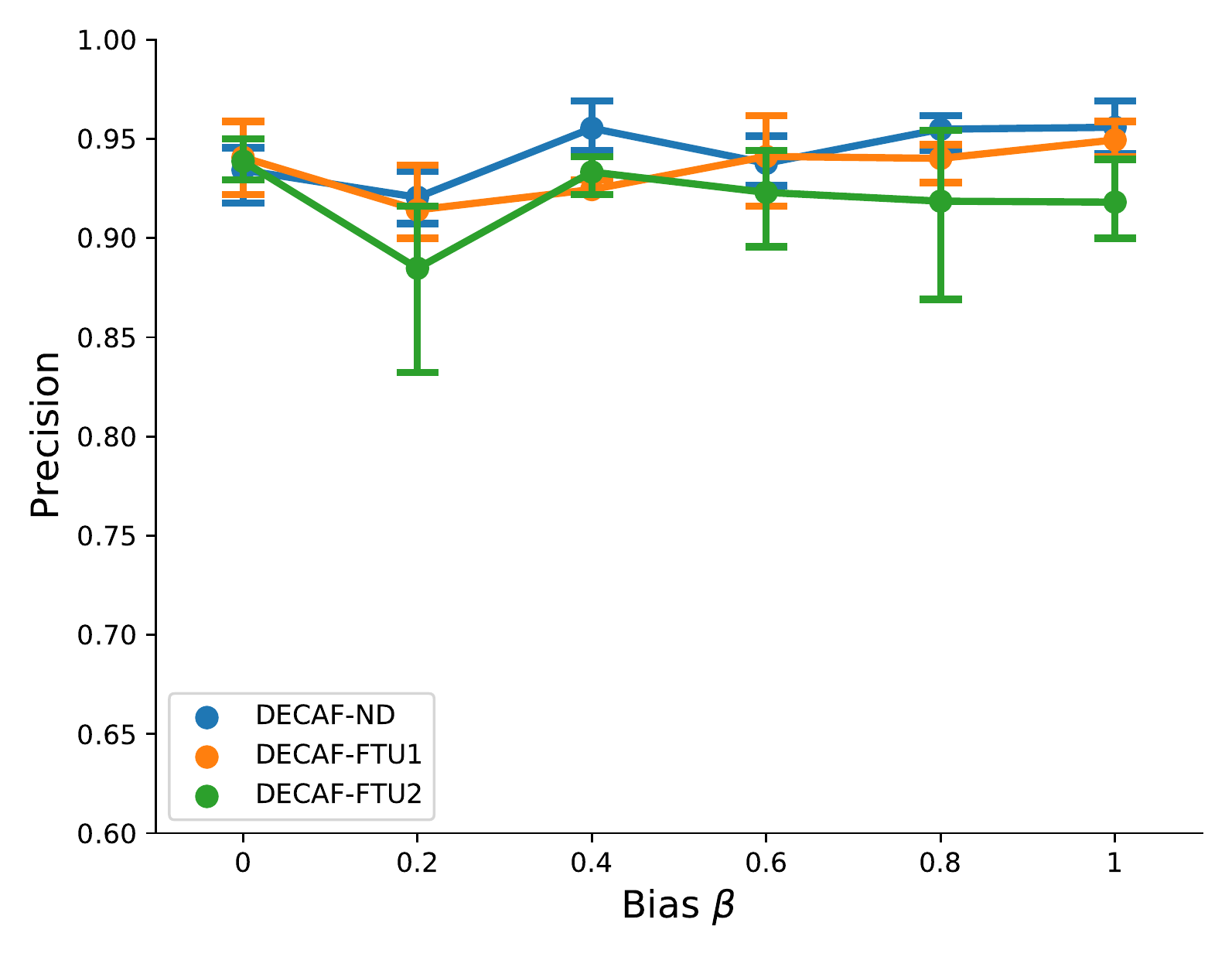}    
         \includegraphics[width=\textwidth]{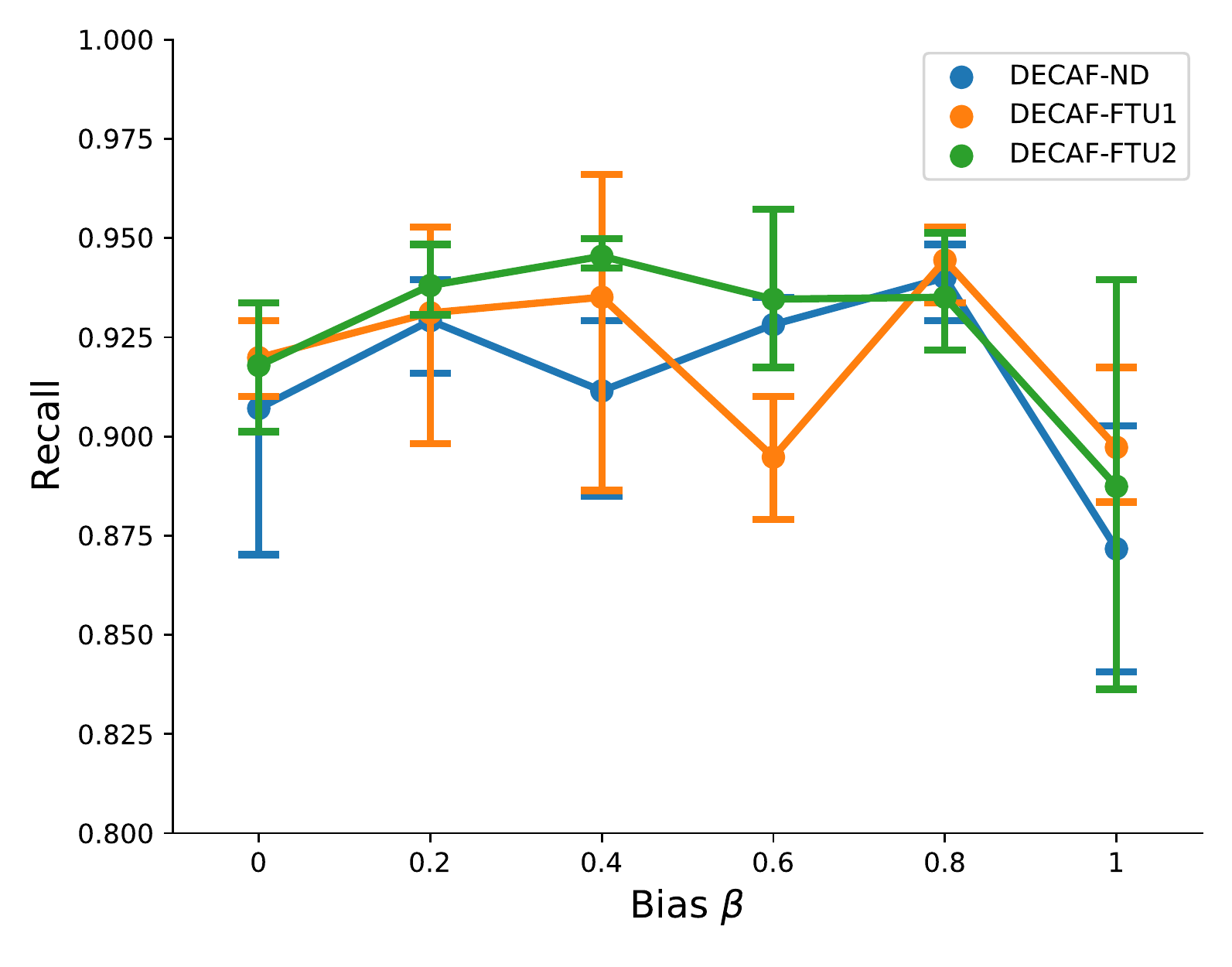}   
         \includegraphics[width=\textwidth]{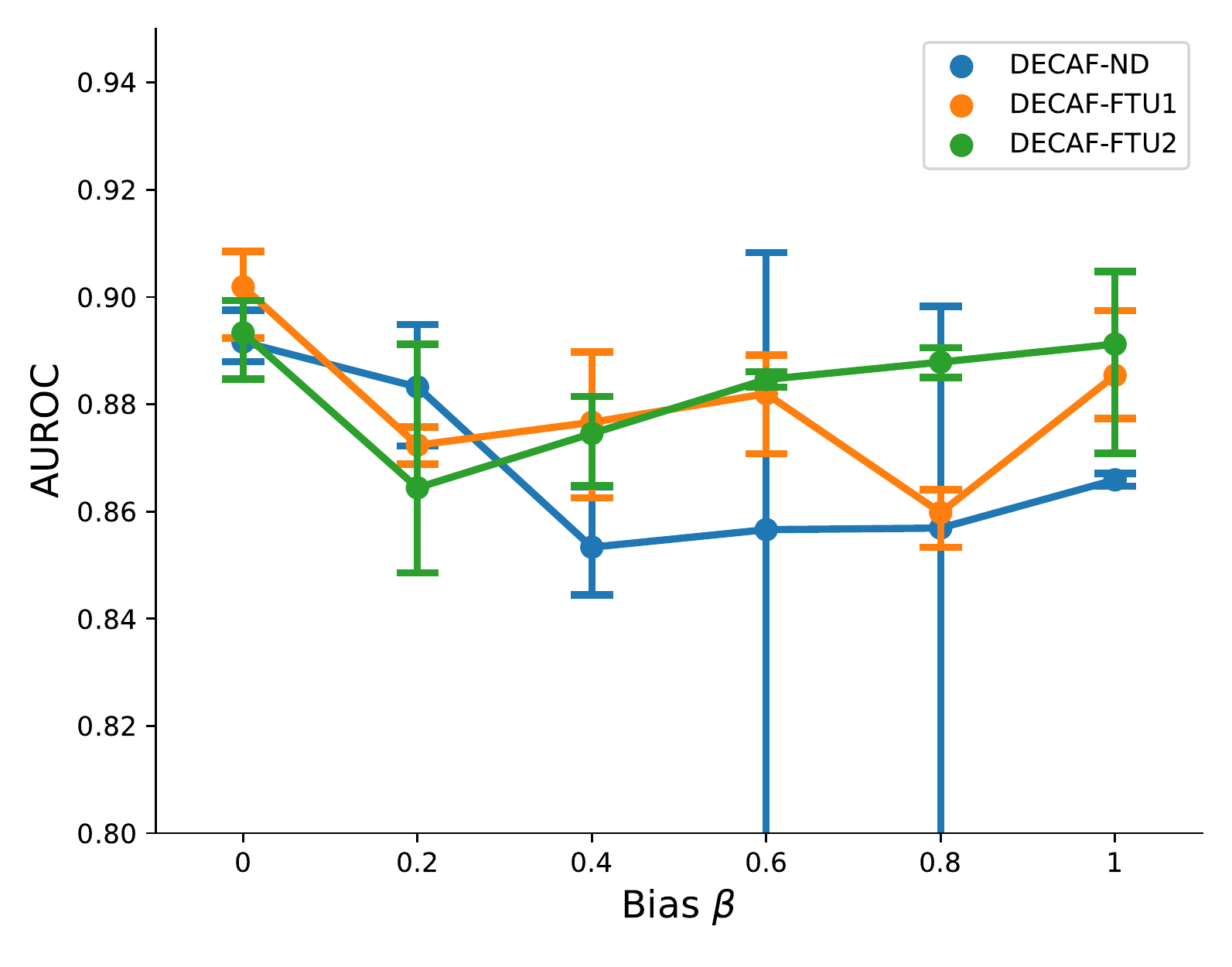} 
         \includegraphics[width=\textwidth]{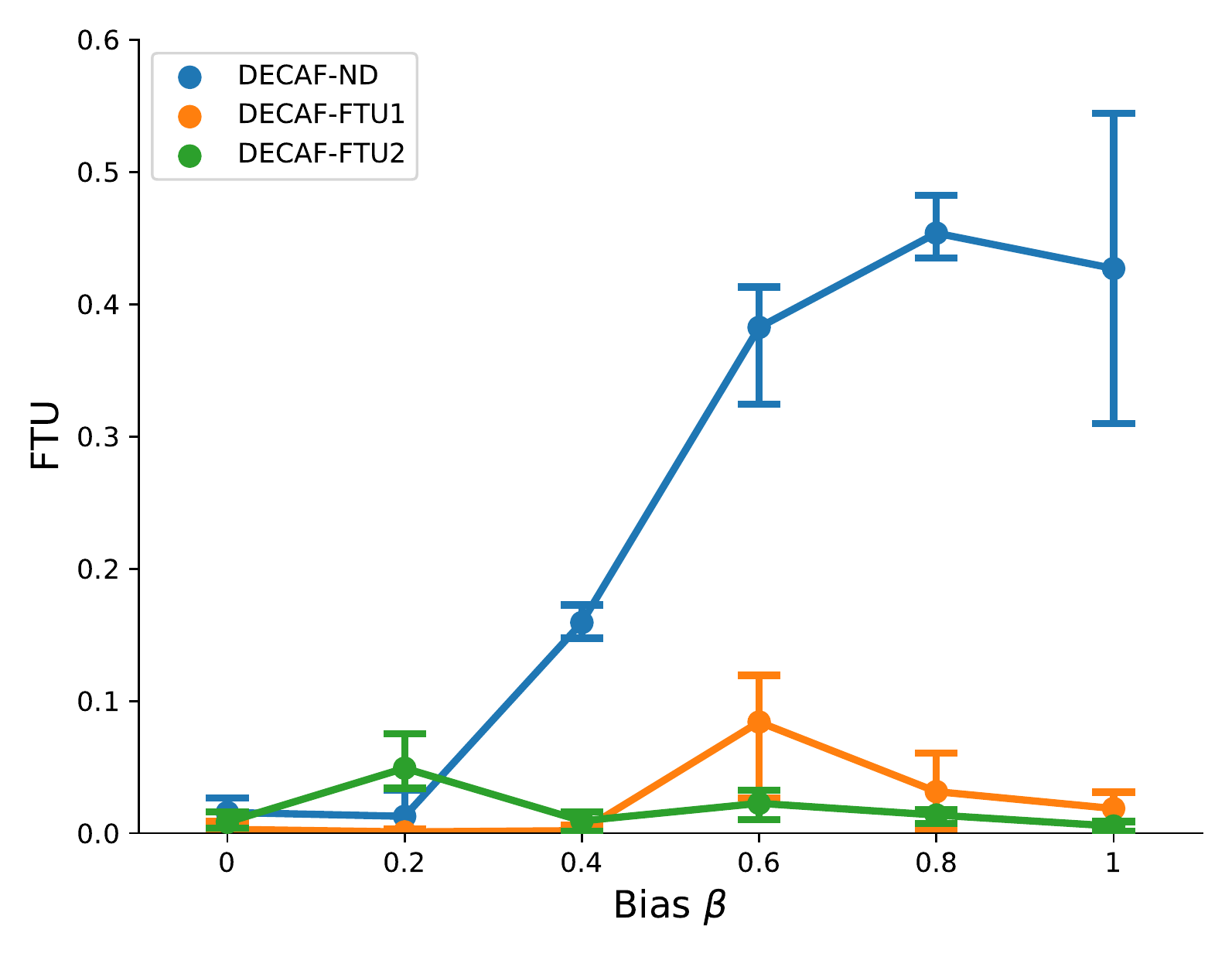} 
         \caption{All}
     \end{subfigure}
        \begin{subfigure}[b]{0.31\textwidth}
         \centering
         \includegraphics[width=\textwidth]{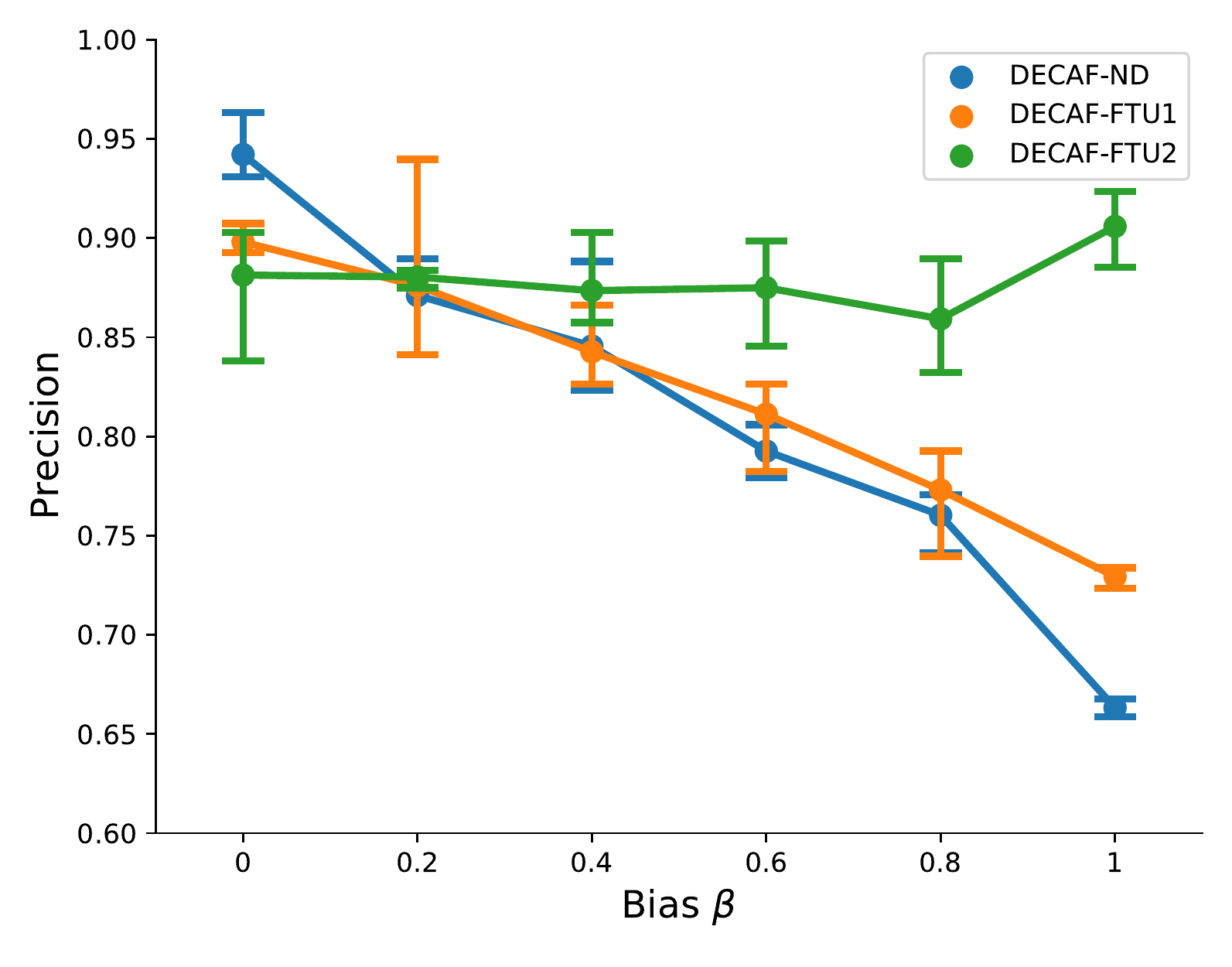}    
         \includegraphics[width=\textwidth]{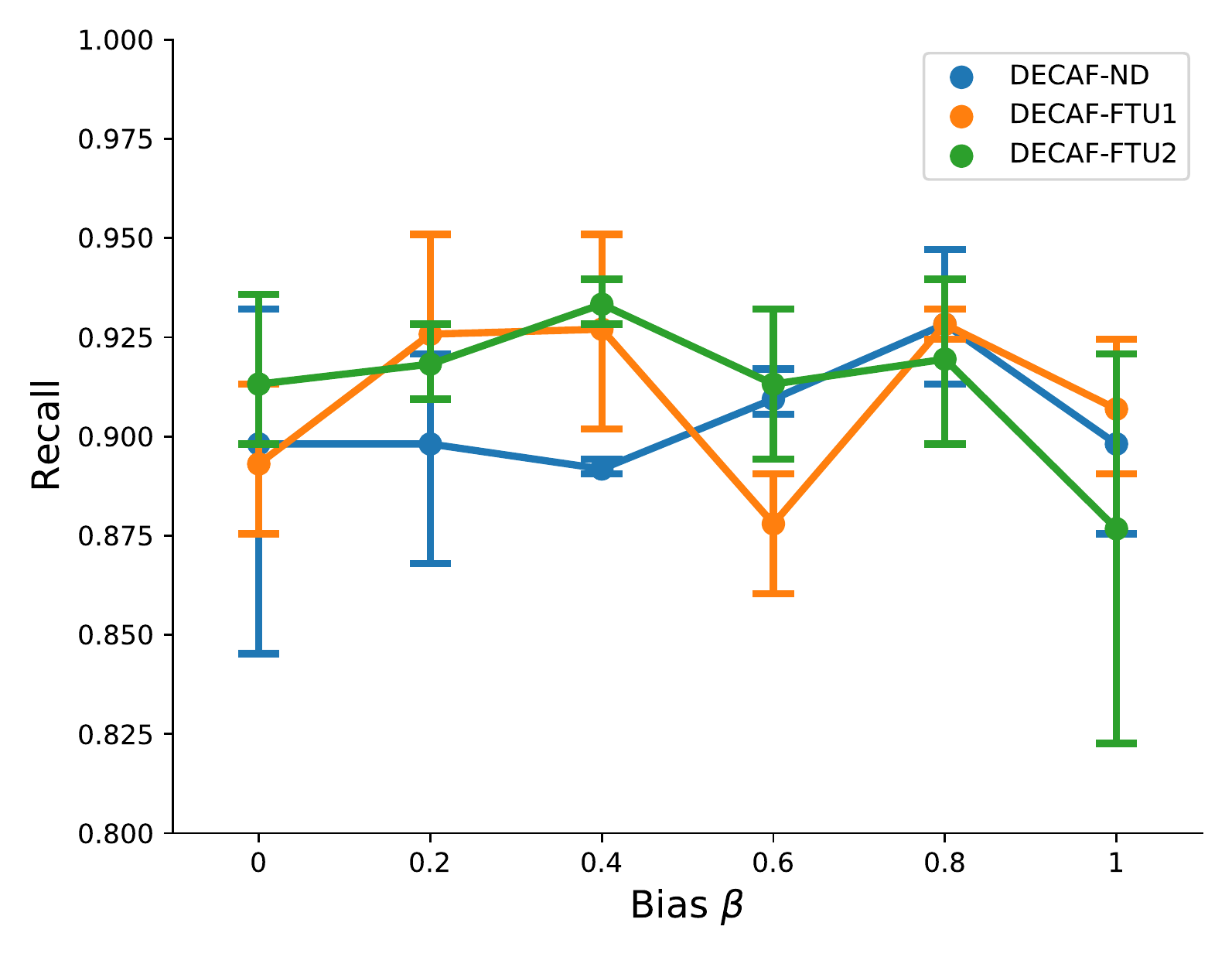}   
         \includegraphics[width=\textwidth]{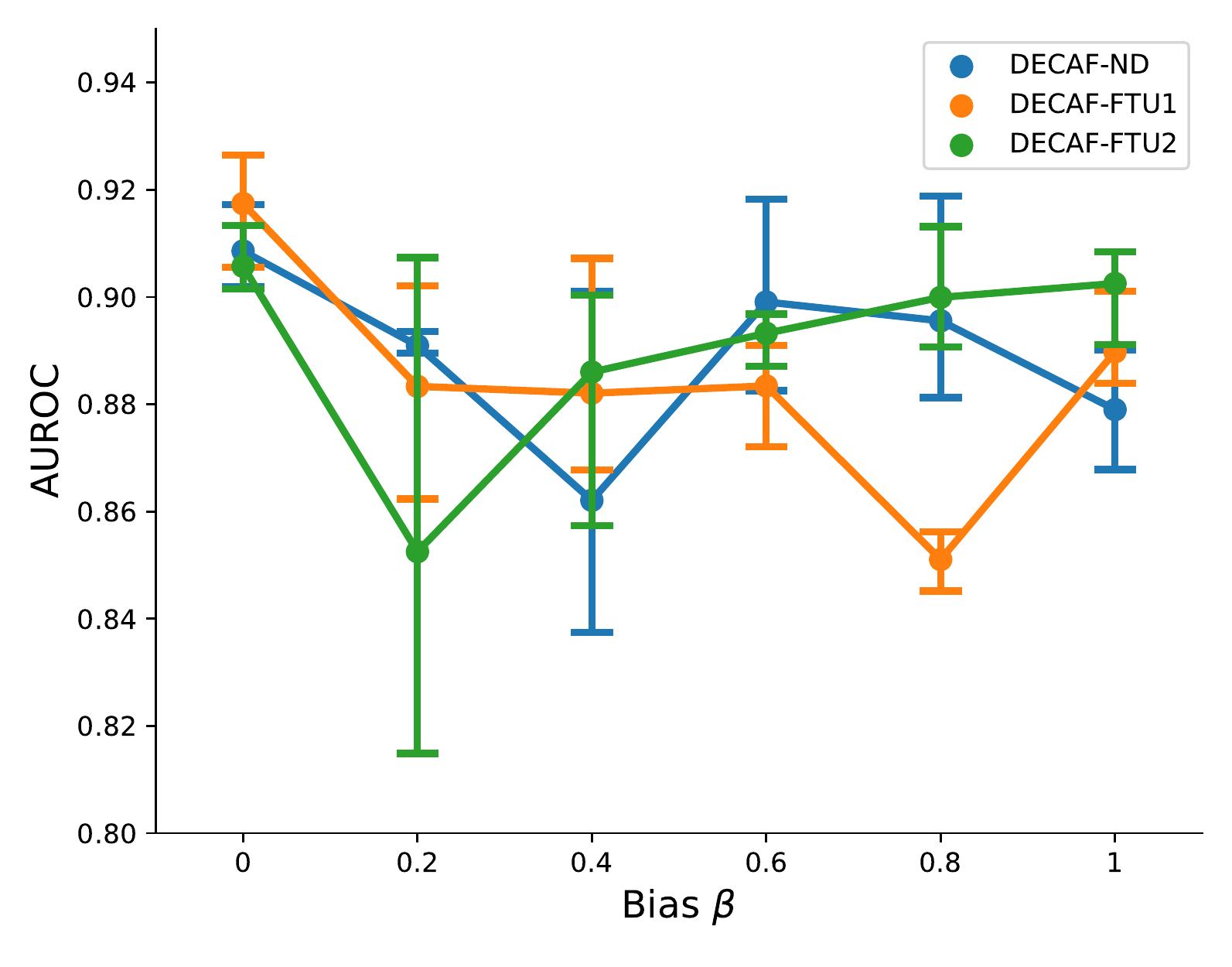} 
         \includegraphics[width=\textwidth]{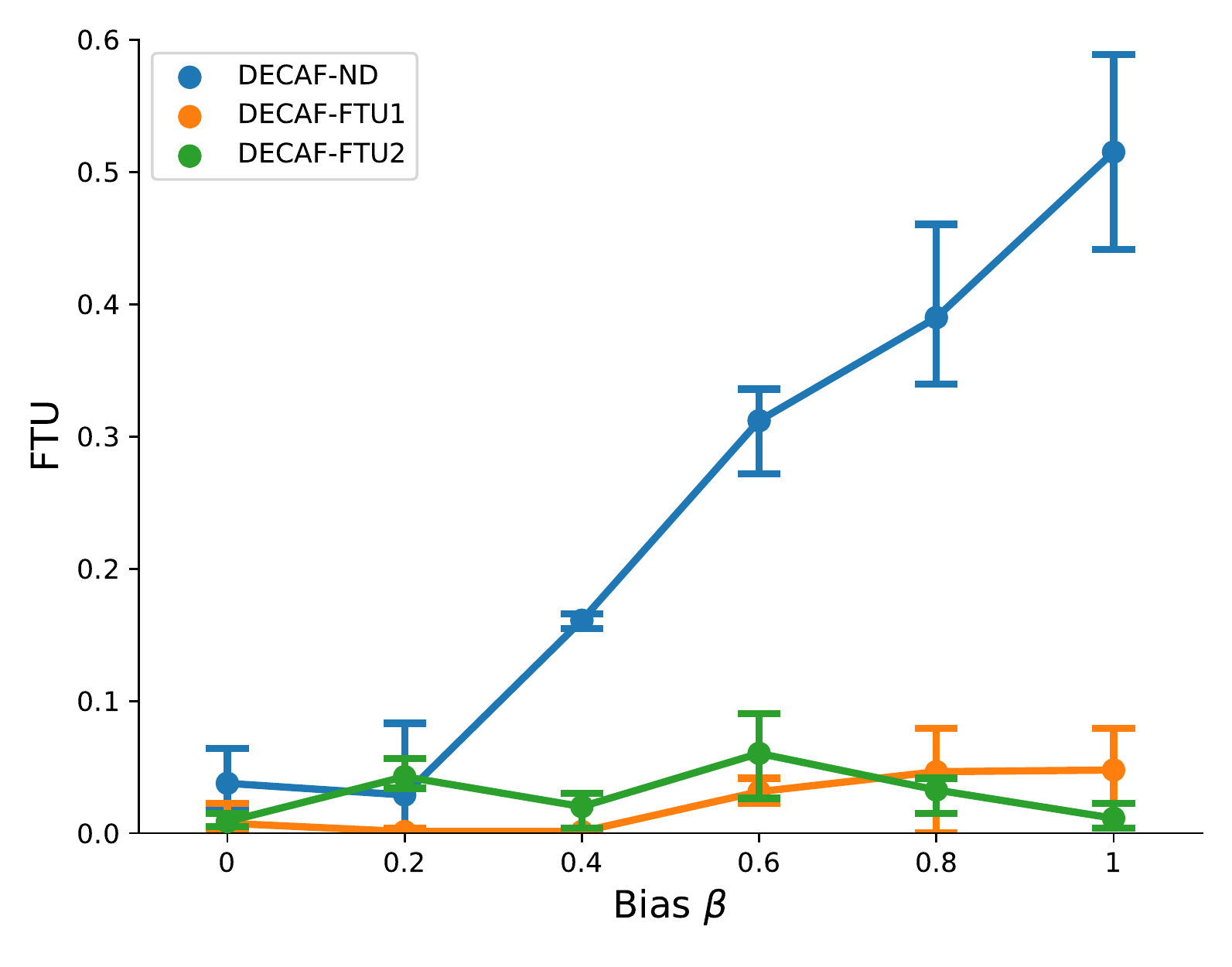} 
         \caption{Discriminated}
     \end{subfigure}
    \begin{subfigure}[b]{0.31\textwidth}
         \centering
         \includegraphics[width=\textwidth]{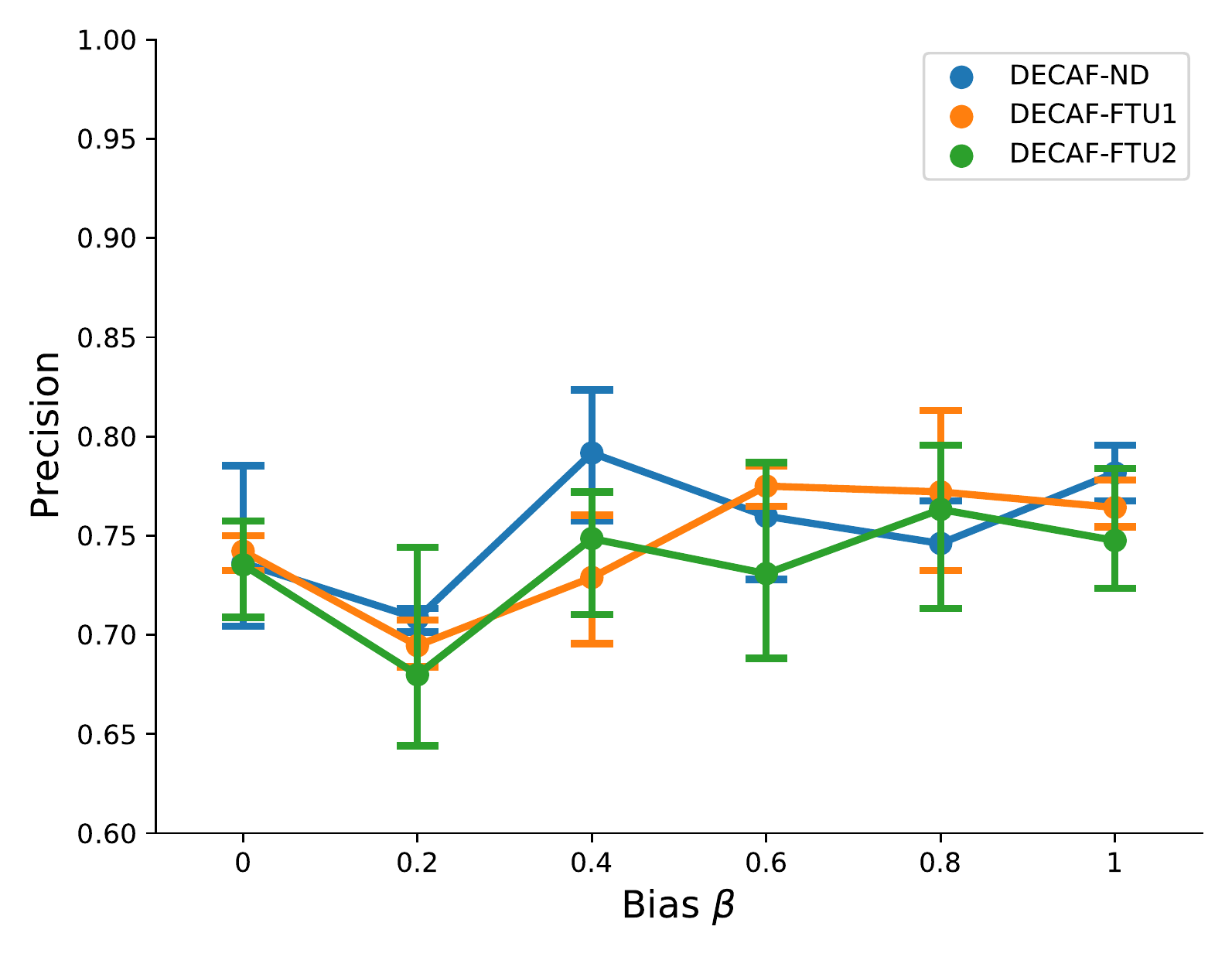}    
         \includegraphics[width=\textwidth]{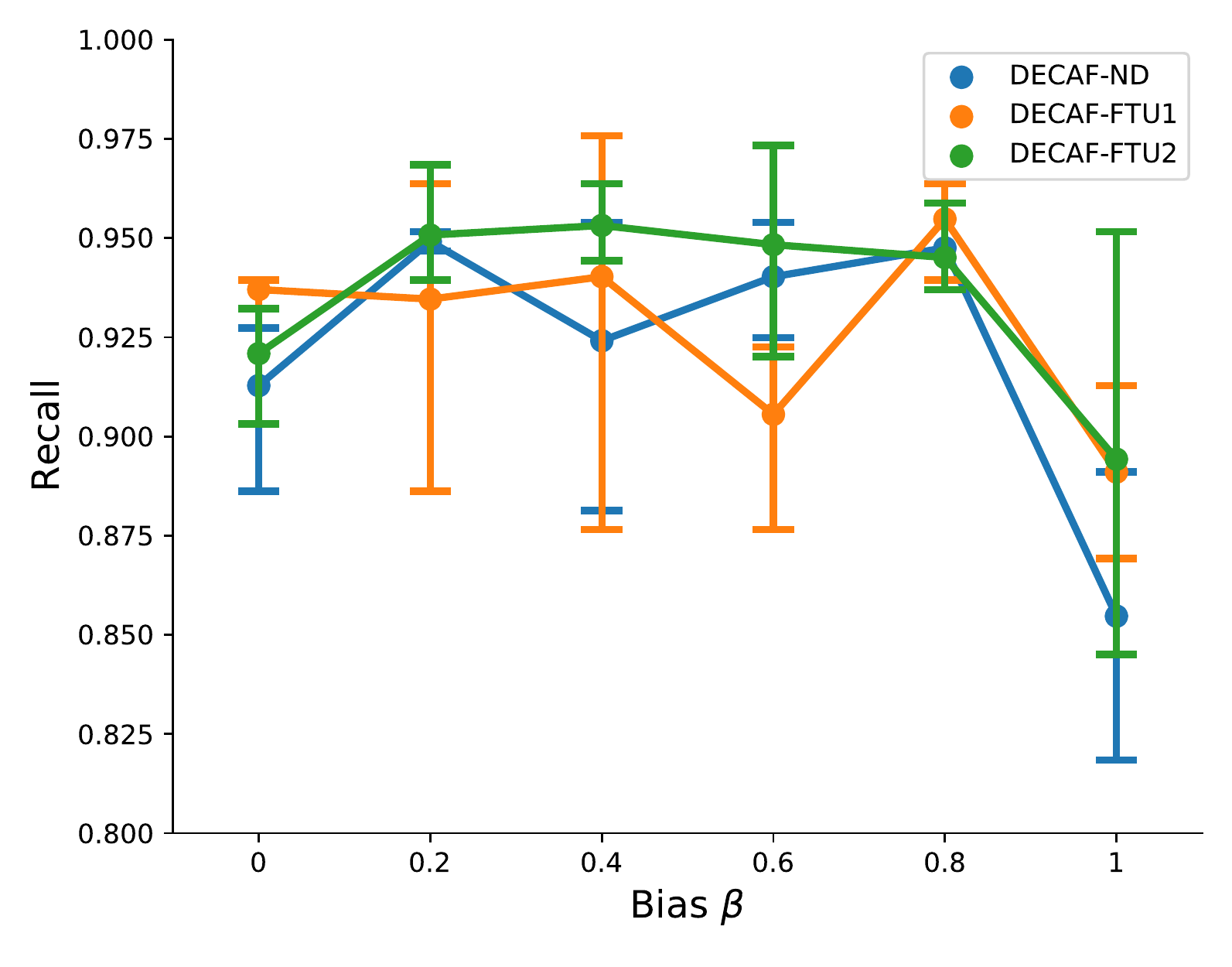}   
         \includegraphics[width=\textwidth]{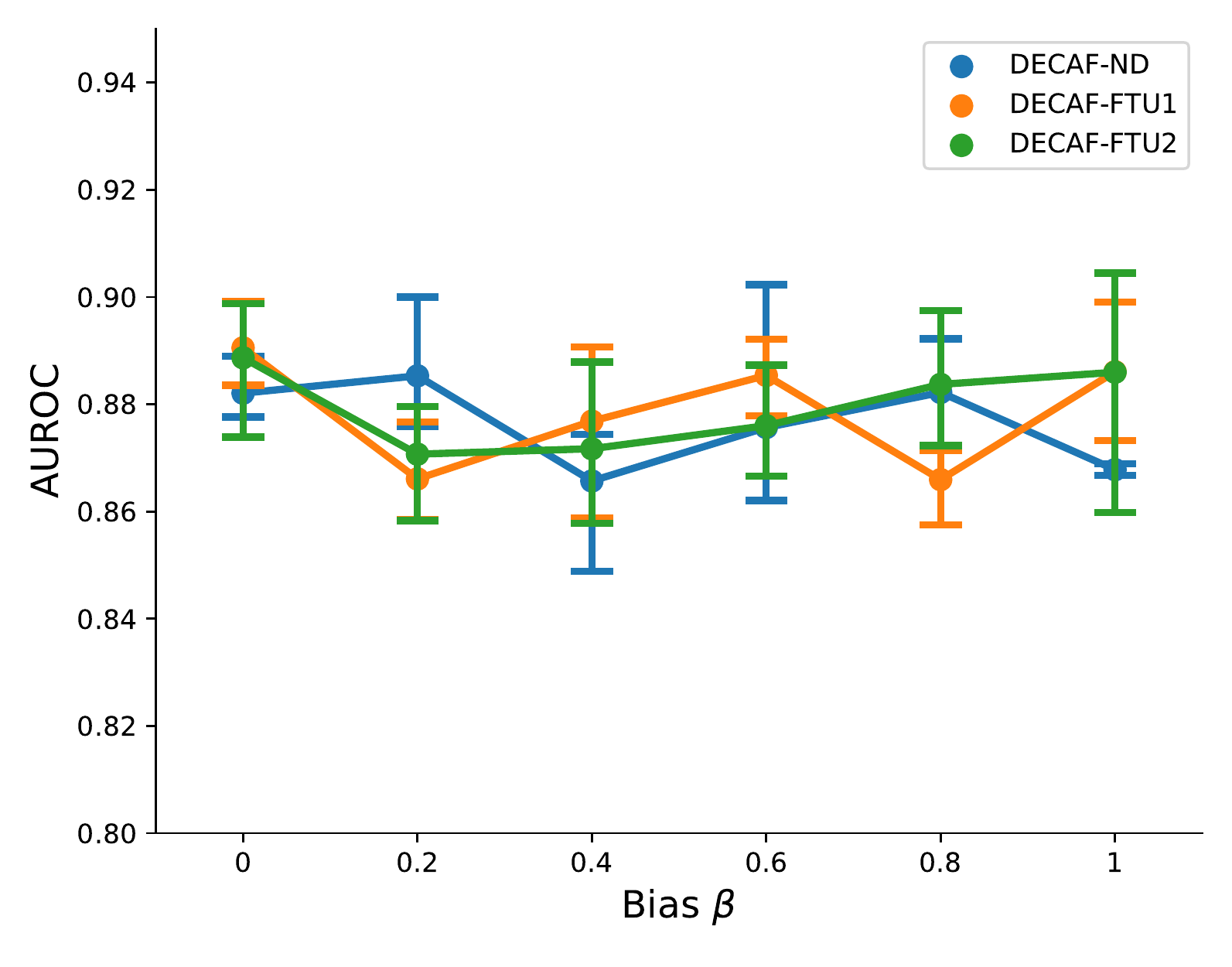} 
         \includegraphics[width=\textwidth]{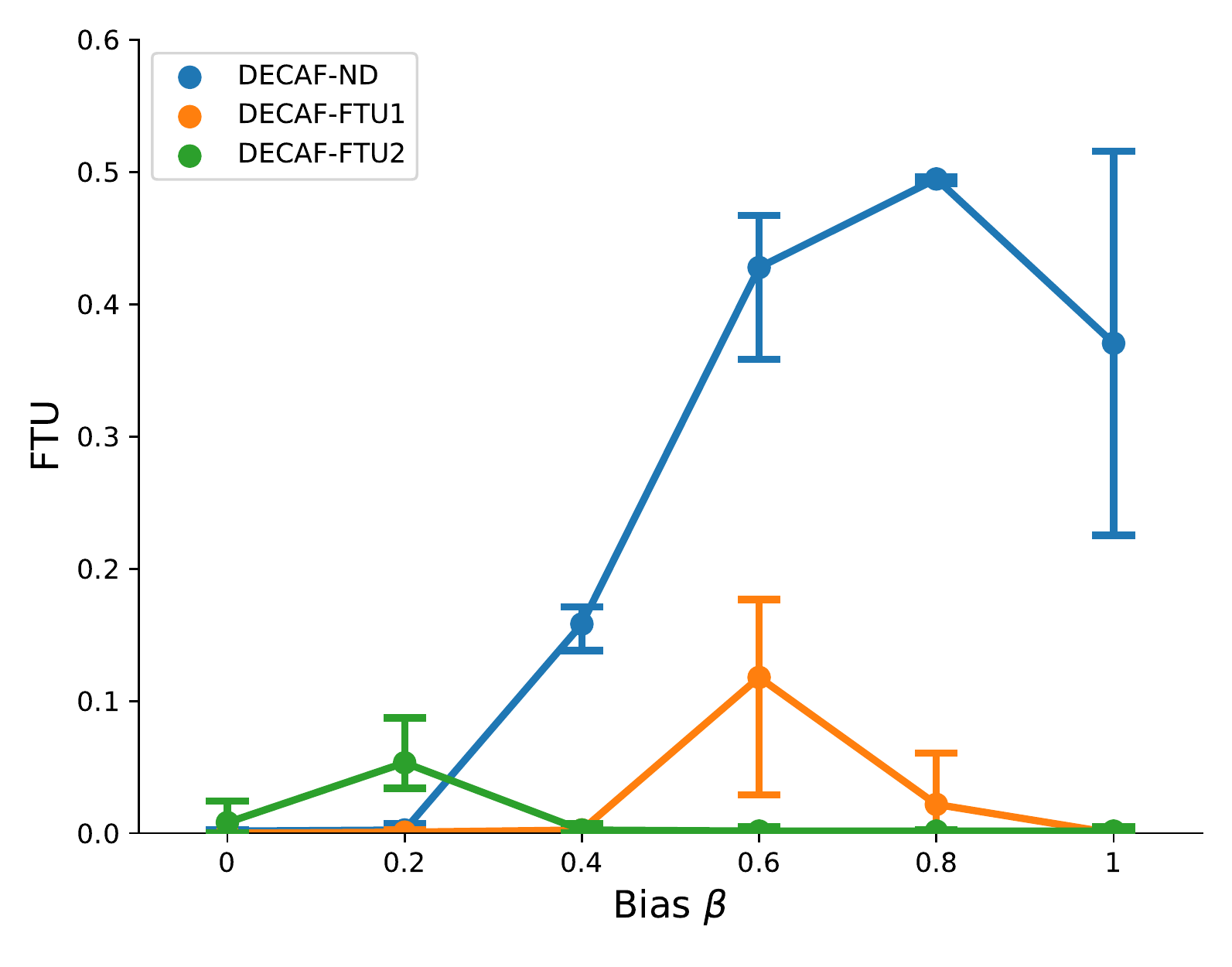} 
         \caption{Non-discriminated}
     \end{subfigure}

    \caption{Plot of precision, recall, AUROC, and FTU over various bias strengths for  \textbf{(a)} both populations (discriminated and non-discriminated), \textbf{(b)} discriminated population, and \textbf{(c)} non-discriminated population.}
    \label{fig:appx_surrogate}
\end{figure}

\section{DAG Sensitivity} \label{appx:ablation}

In this section, we investigate DECAF under imperfect knowledge.  Here, we are curious to understand what happens when our causal knowledge has: 1) has missing edges, 2) has spurious edges, i.e., edges that we assumed falsely, and 3) edges that are reversed in directionality.

We perform this experiment on the credit approval dataset \cite{uci}, with the known DAG used in the manuscript.  Using an identical experimental setup as described in Section 6.2 and a bias of $\beta=0.8$, we run our experiment 10 times each under random DAG perturbations.  Starting with the baseline DAG used in our credit approval experiment, we perform a sensitivity analysis to the following DAG perturbations:
\begin{itemize}
    \item \textbf{Edge removal} is done by randomly edges from the baseline DAG.
    \item \textbf{Edge addition} is done by randomly adding edges that are constrained by the following two criteria: 1) it does not create any cycles, and 2) it does not create any new indirect bias measures.  For the second condition, we ensure this by asserting that an edge is not added between the protected attribute \texttt{ethnicity} and an ancestor of \texttt{approved}.  We do this to ensure that the indirect bias is held consistent across each DAG instantiation and experimental run.
    \item \textbf{Edge reversal} is done by randomly reversing edges in the baseline DAG while preserving acyclicity.

\end{itemize}

Results for this experiment are shown in Figure~\ref{fig:appx_lineplots}. 
As expected, we see that edge removal degrades synthetic data quality (precision, recall, and AUROC) as the number of edges removed increases; this is not the case for adding and reversing edges -- where stable synthetic data quality is preserved.
In terms of debiasing, we see that DECAF-FTU and DECAF-ND is still able to debias consistently across all DAG perturbations.

\begin{figure}[!htbp]
    \centering
    \begin{subfigure}[b]{0.31\textwidth}
         \centering
         \includegraphics[width=\textwidth]{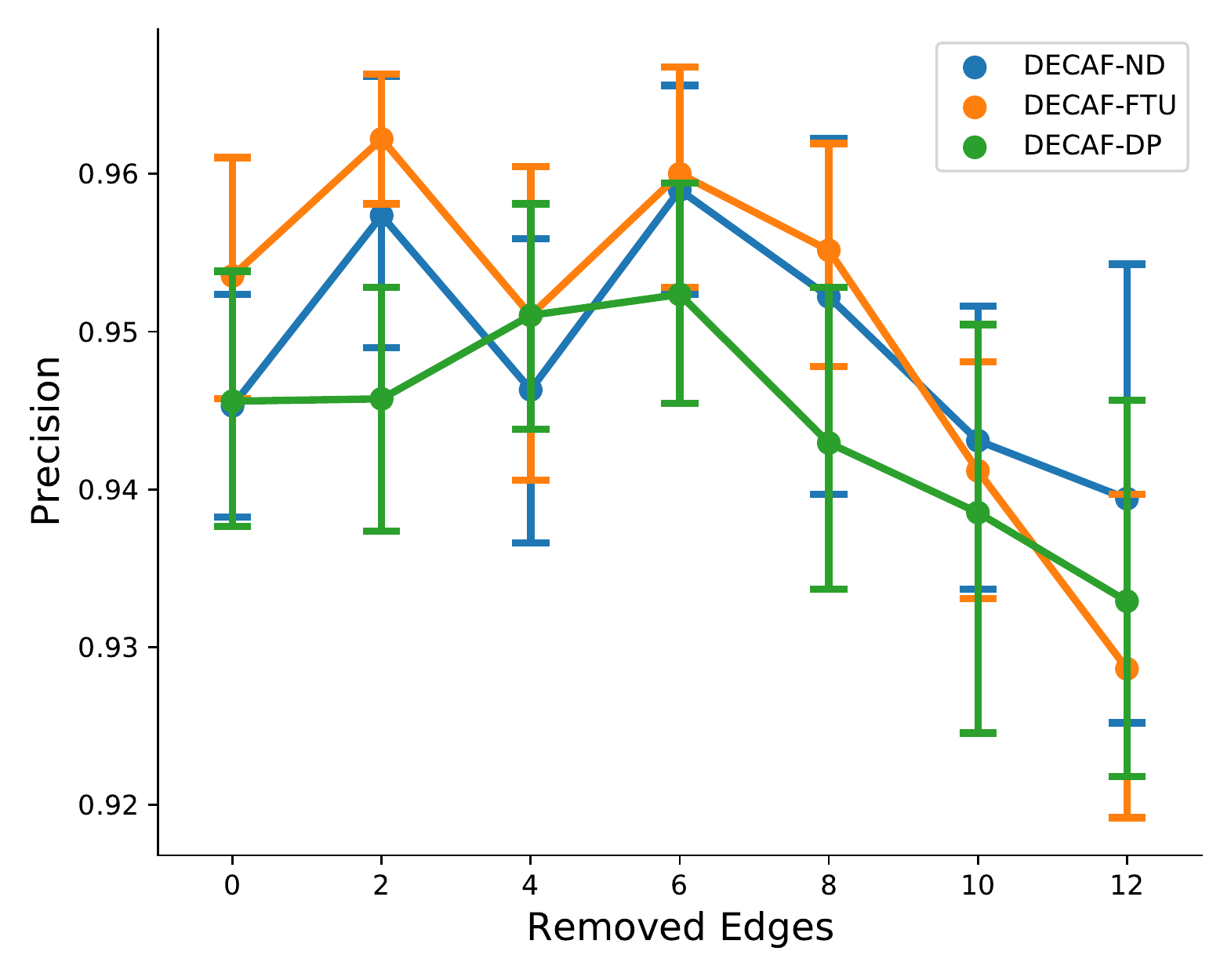}    
         \includegraphics[width=\textwidth]{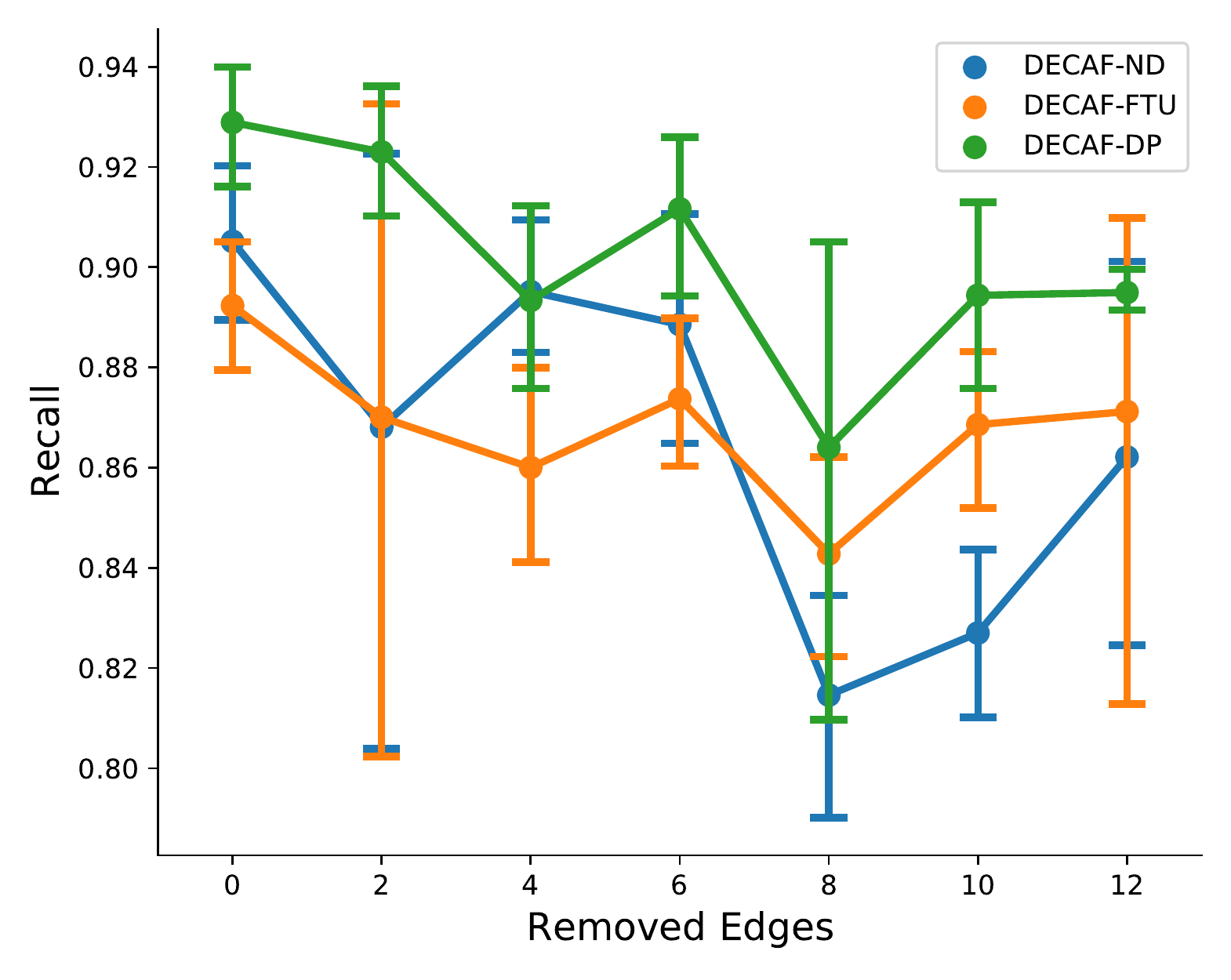}   
         \includegraphics[width=\textwidth]{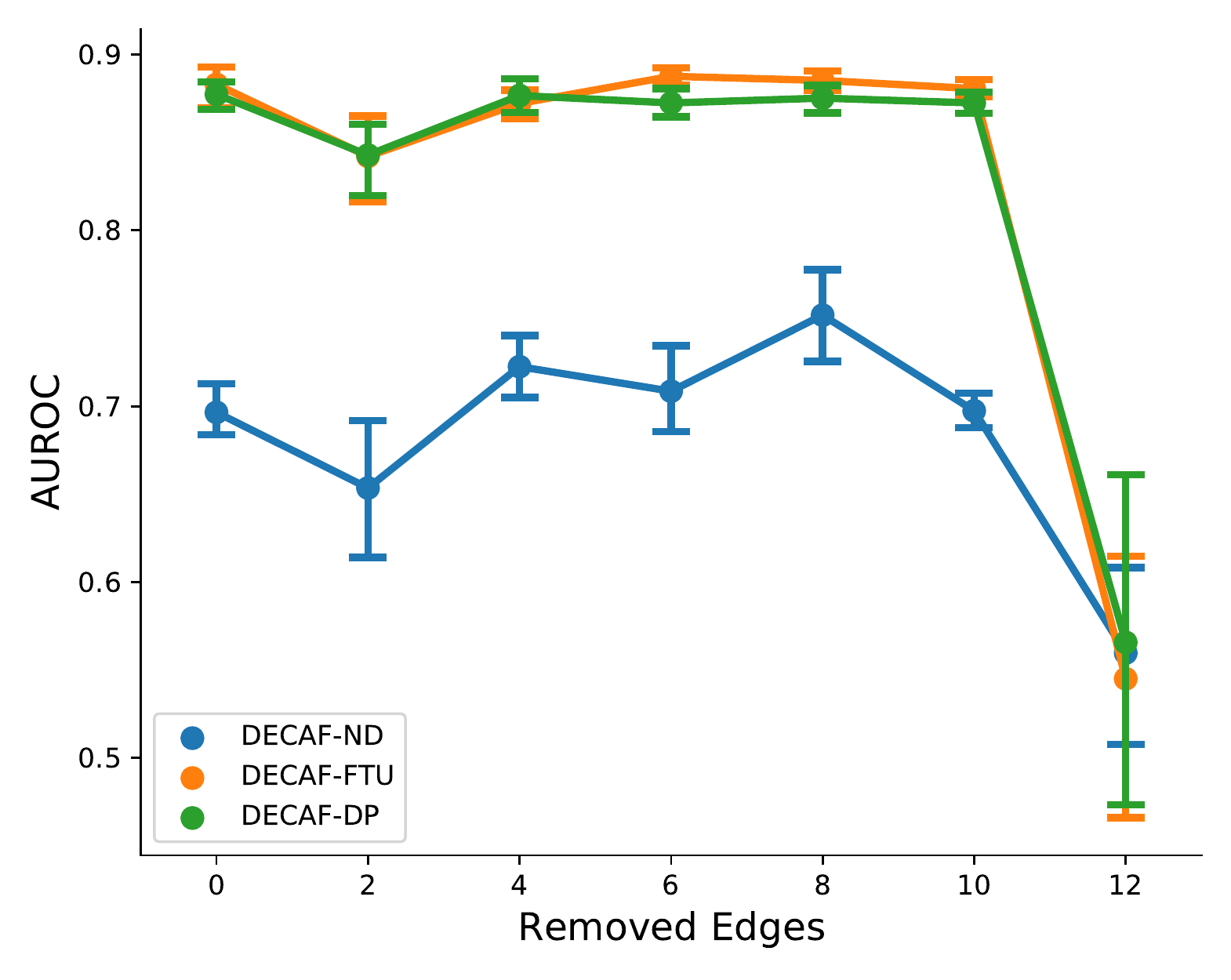} 
         \includegraphics[width=\textwidth]{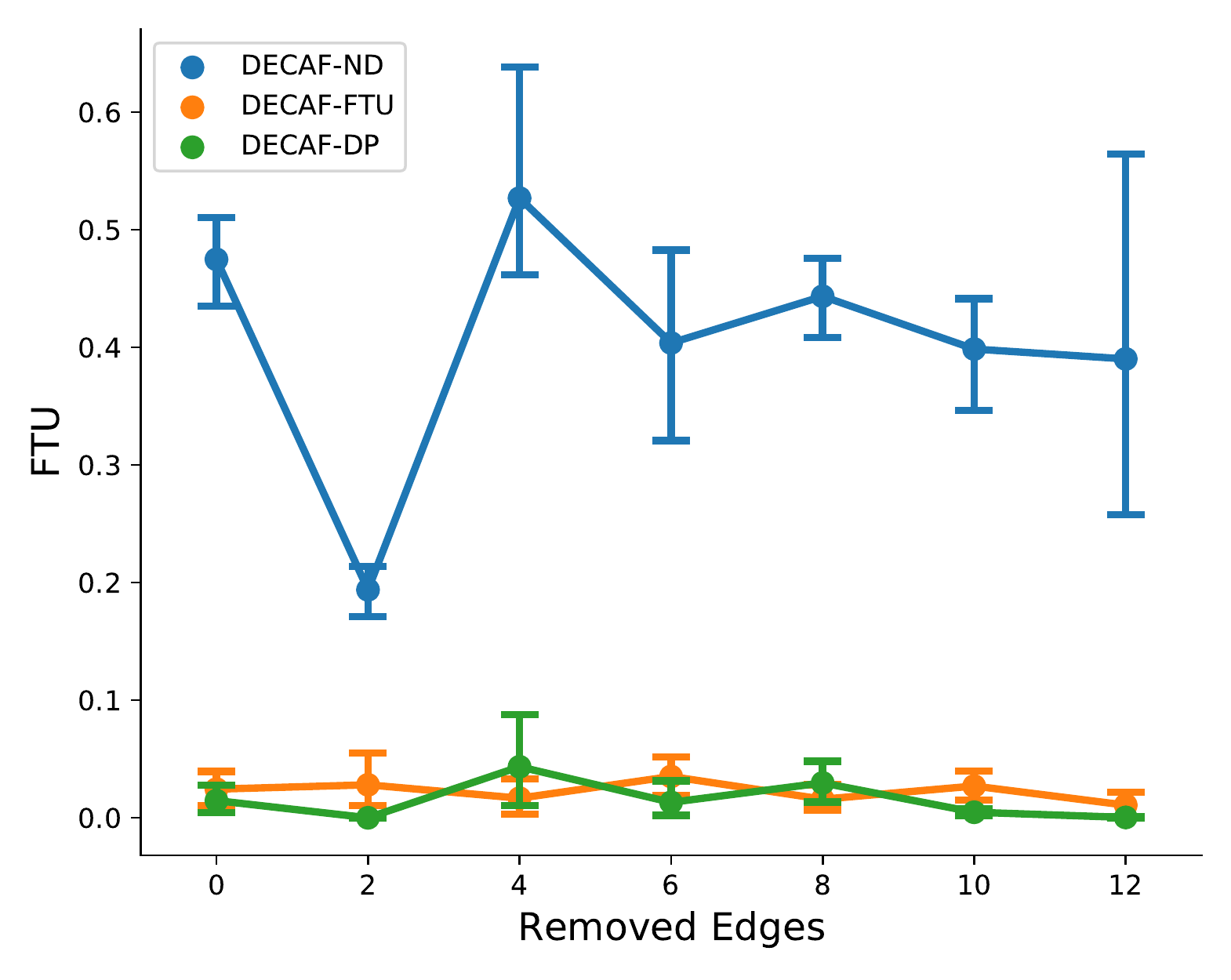} 
         \includegraphics[width=\textwidth]{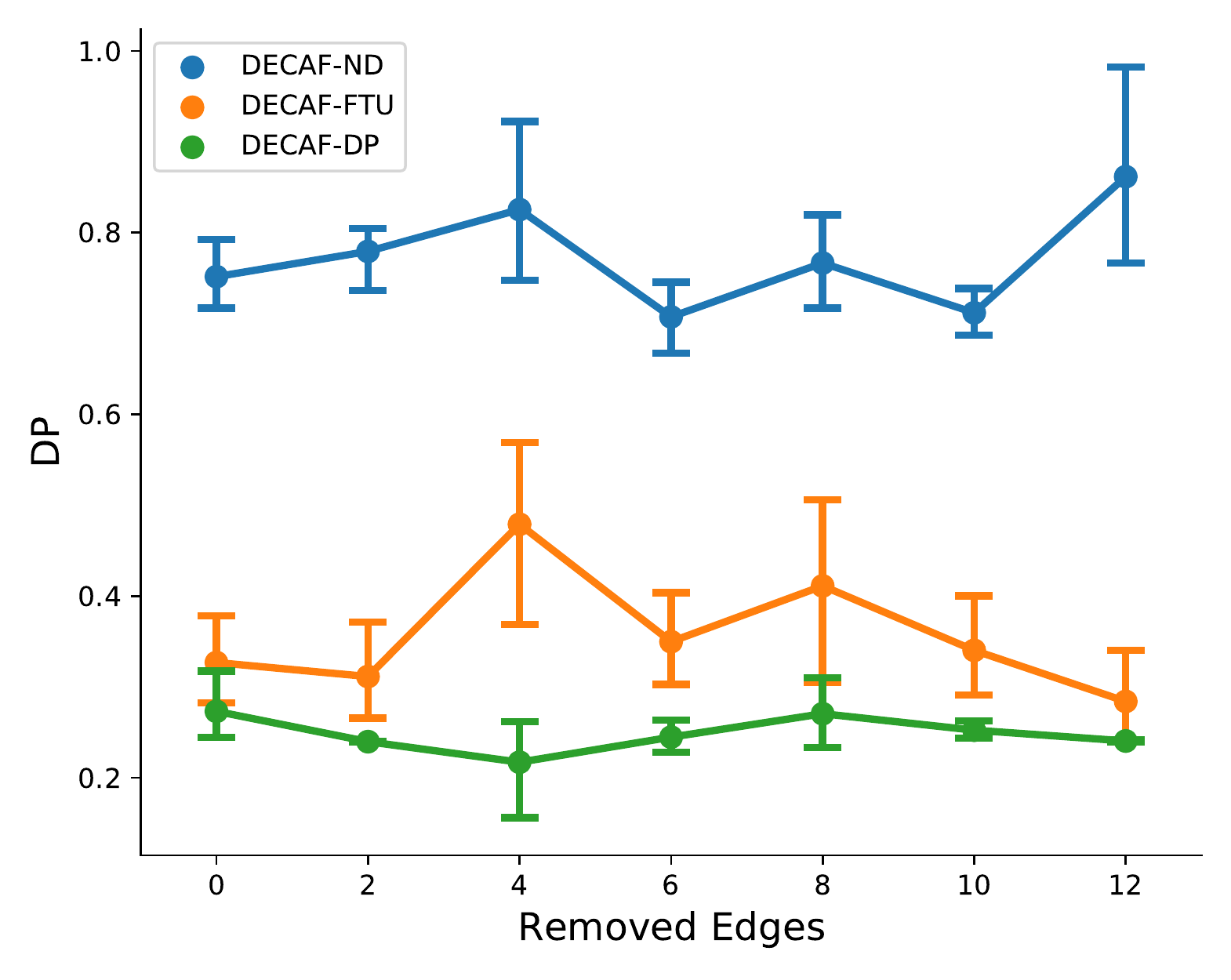} 
         \caption{Edge removal}
     \end{subfigure}
        \begin{subfigure}[b]{0.31\textwidth}
         \centering
         \includegraphics[width=\textwidth]{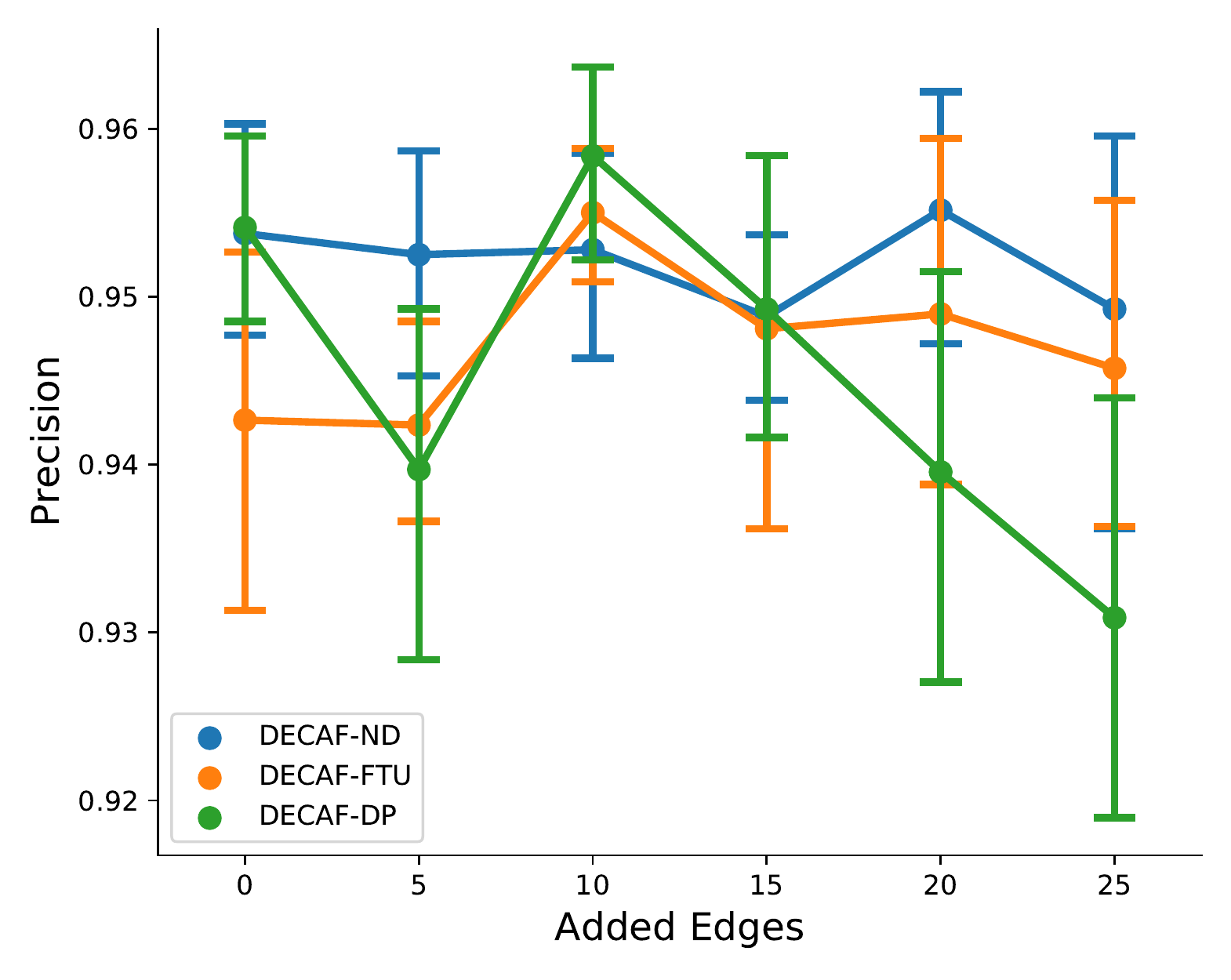}    
         \includegraphics[width=\textwidth]{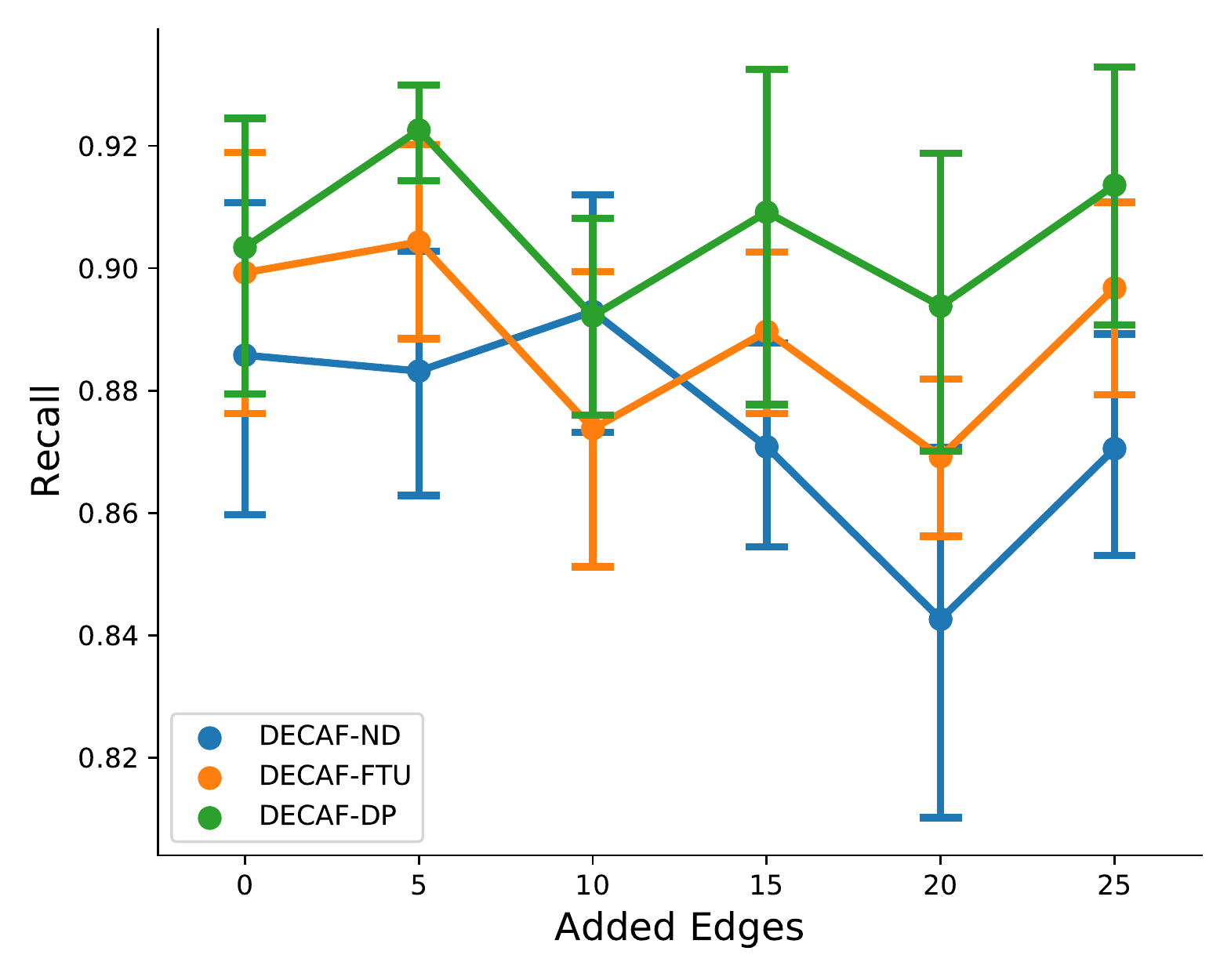}   
         \includegraphics[width=\textwidth]{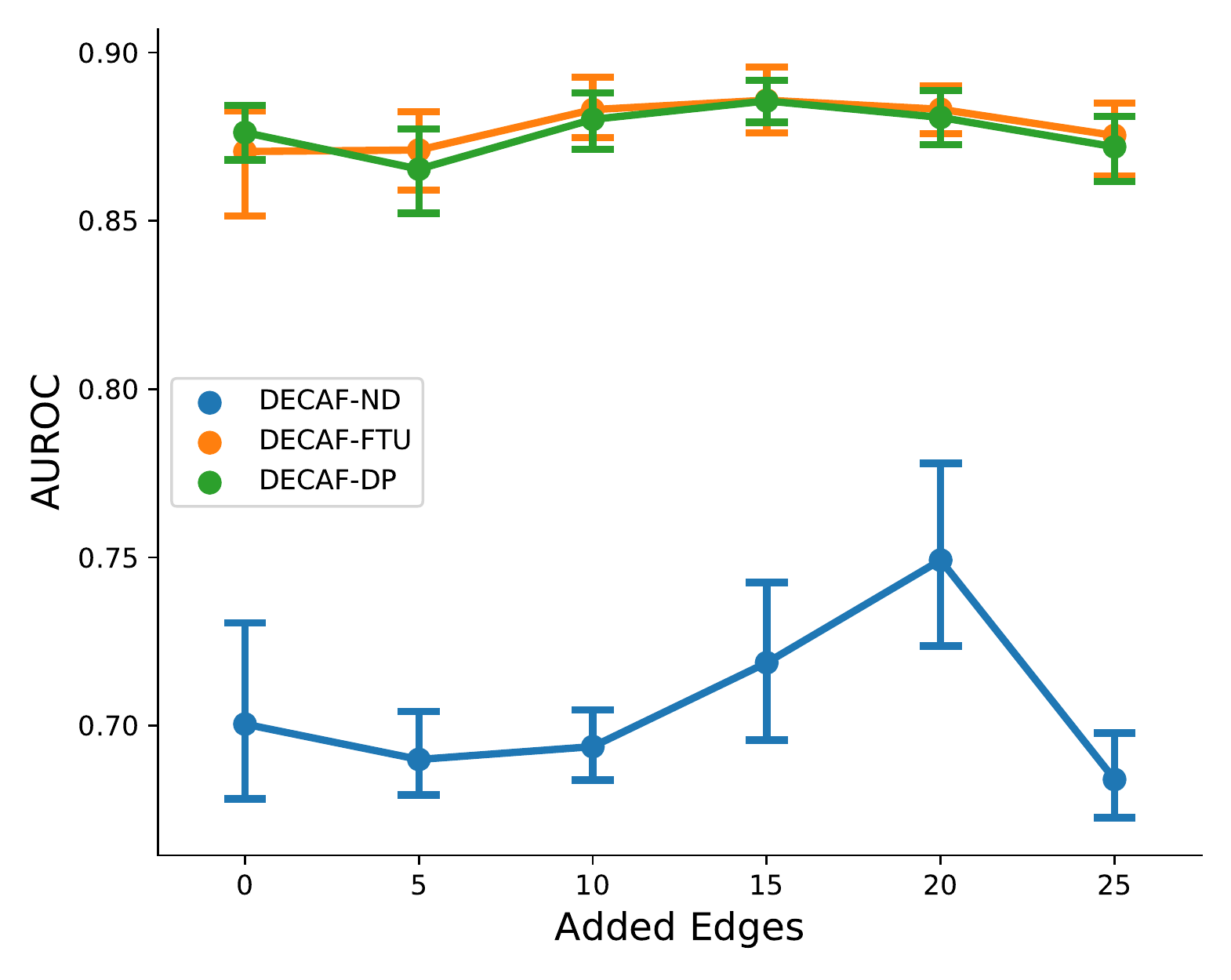} 
         \includegraphics[width=\textwidth]{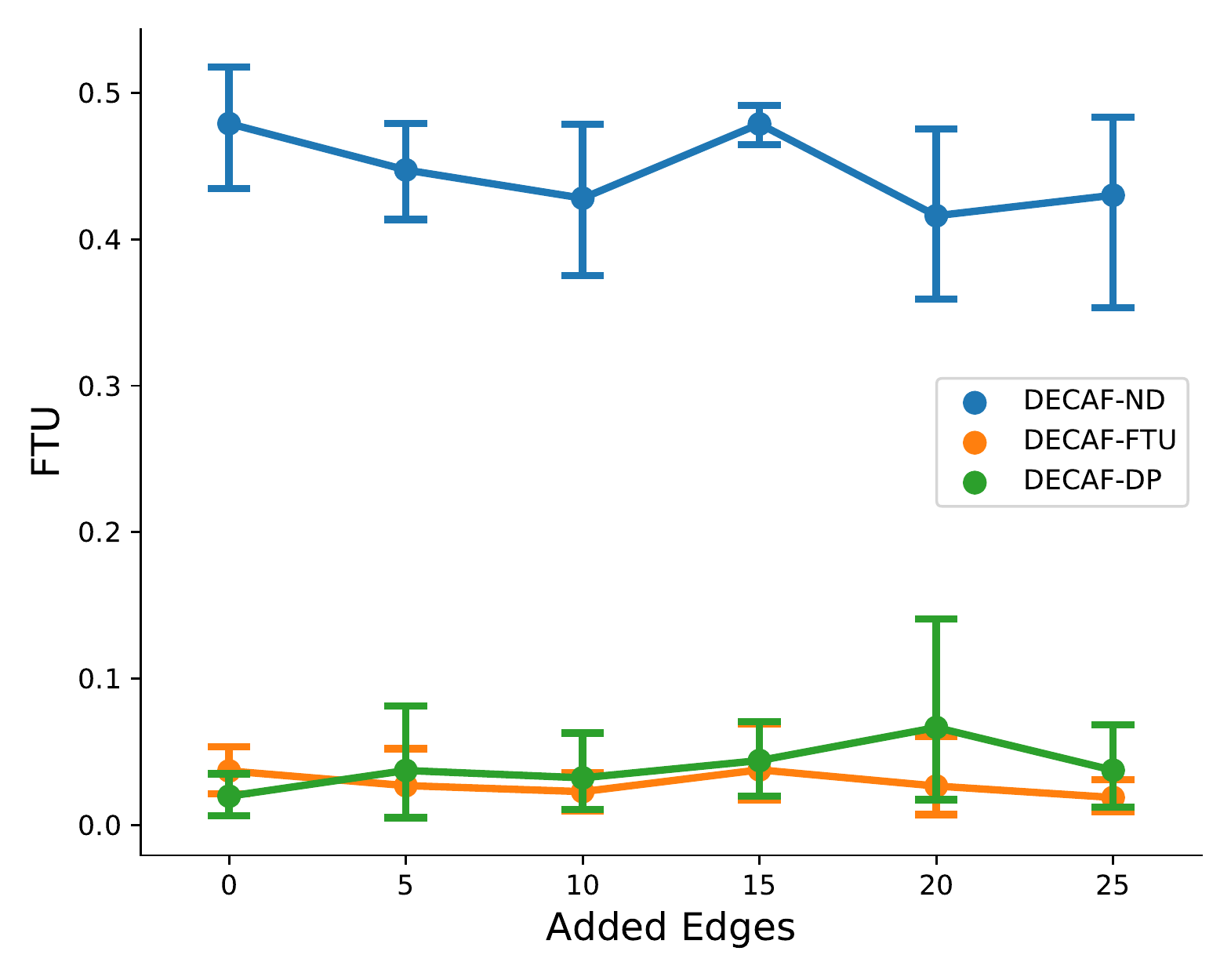} 
         \includegraphics[width=\textwidth]{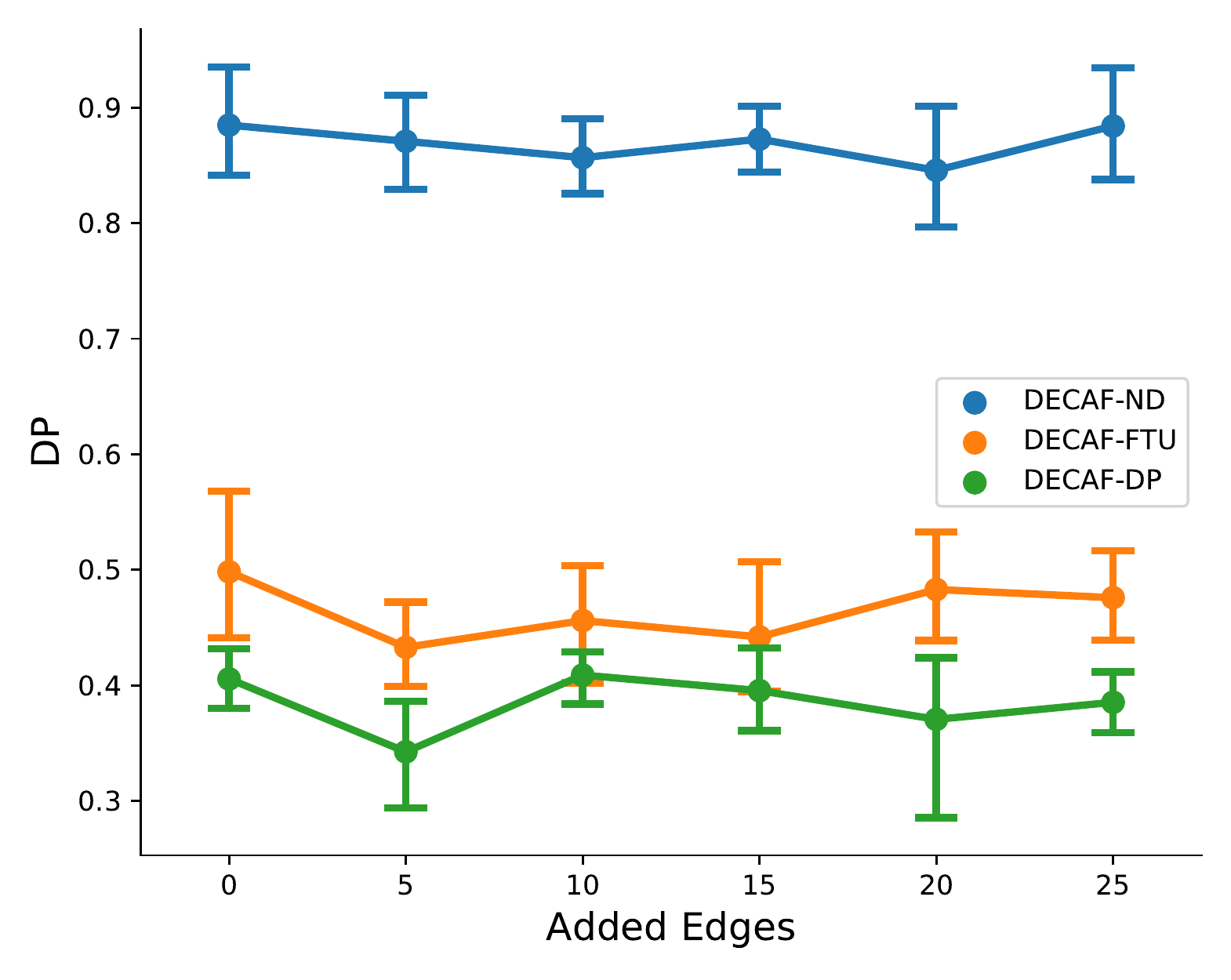} 
         \caption{Edge addition}
     \end{subfigure}
    \begin{subfigure}[b]{0.31\textwidth}
         \centering
         \includegraphics[width=\textwidth]{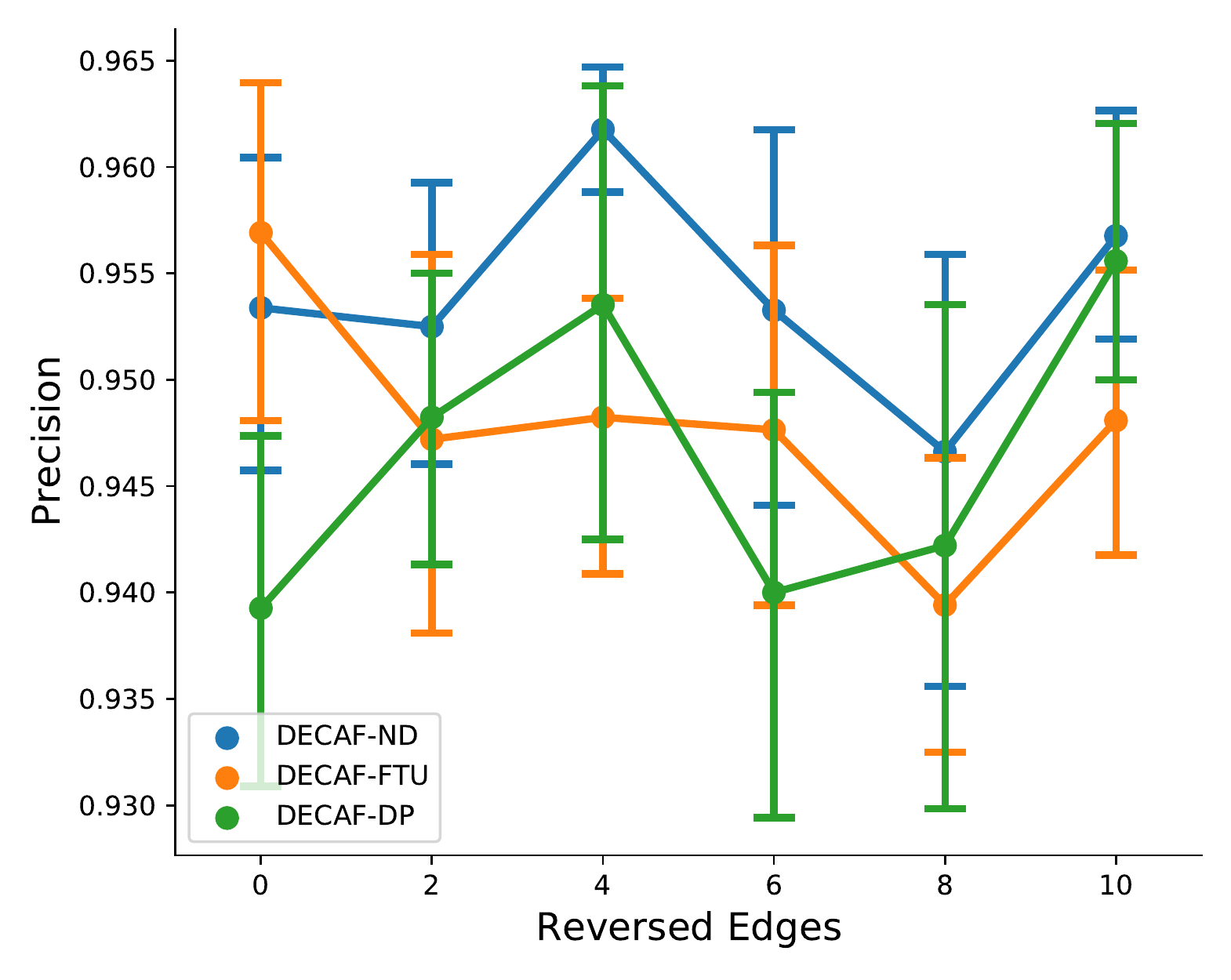}    
         \includegraphics[width=\textwidth]{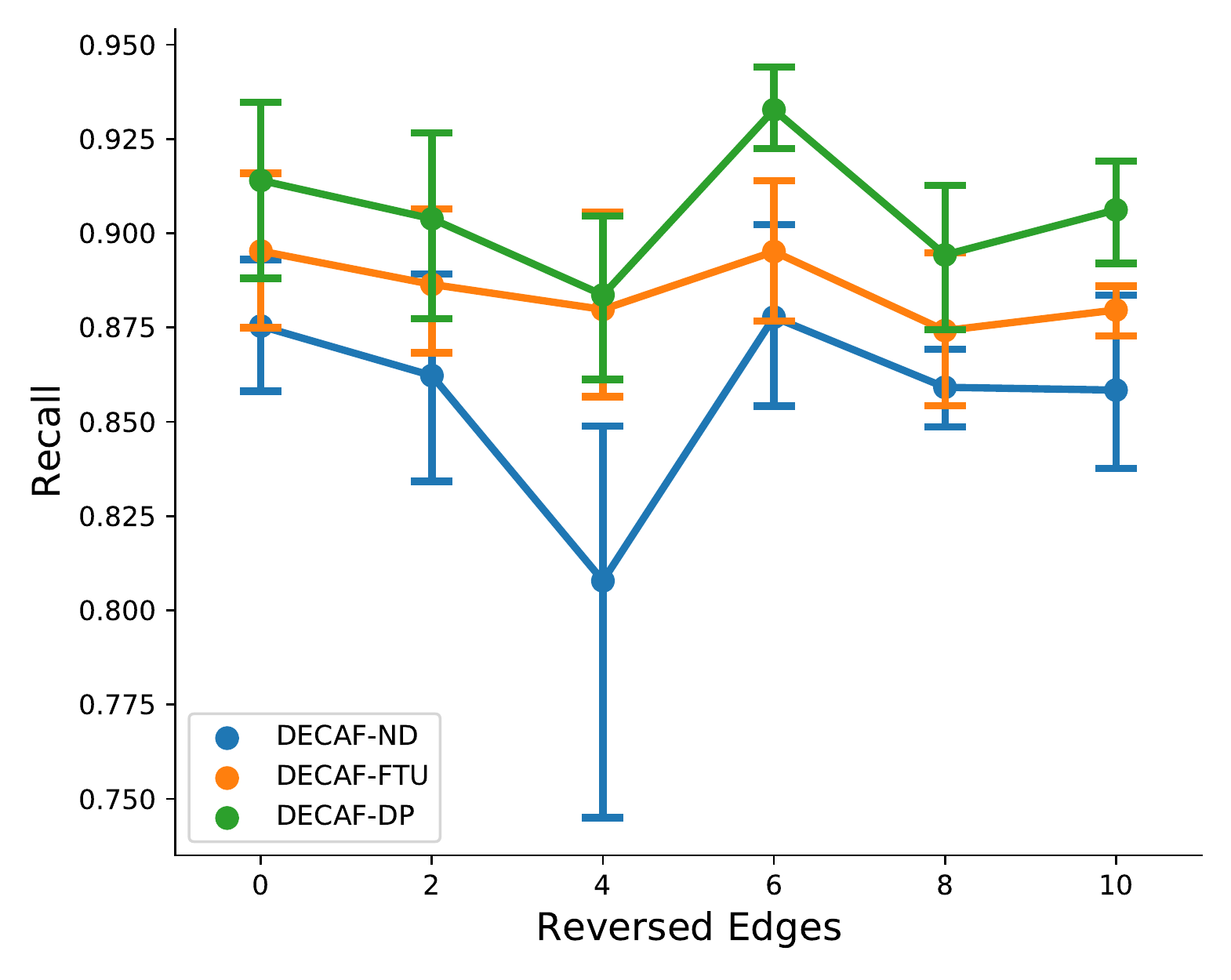}   
         \includegraphics[width=\textwidth]{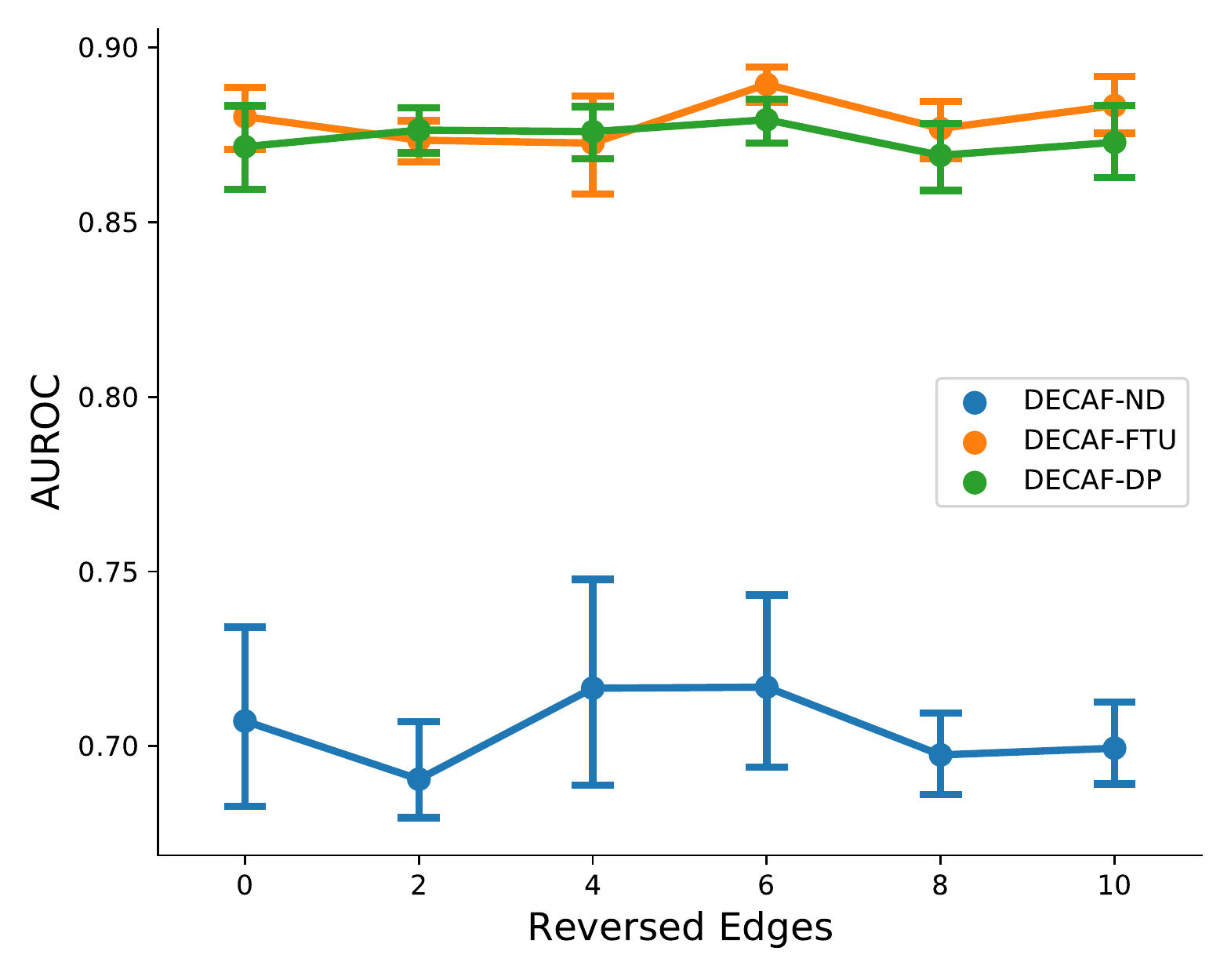} 
         \includegraphics[width=\textwidth]{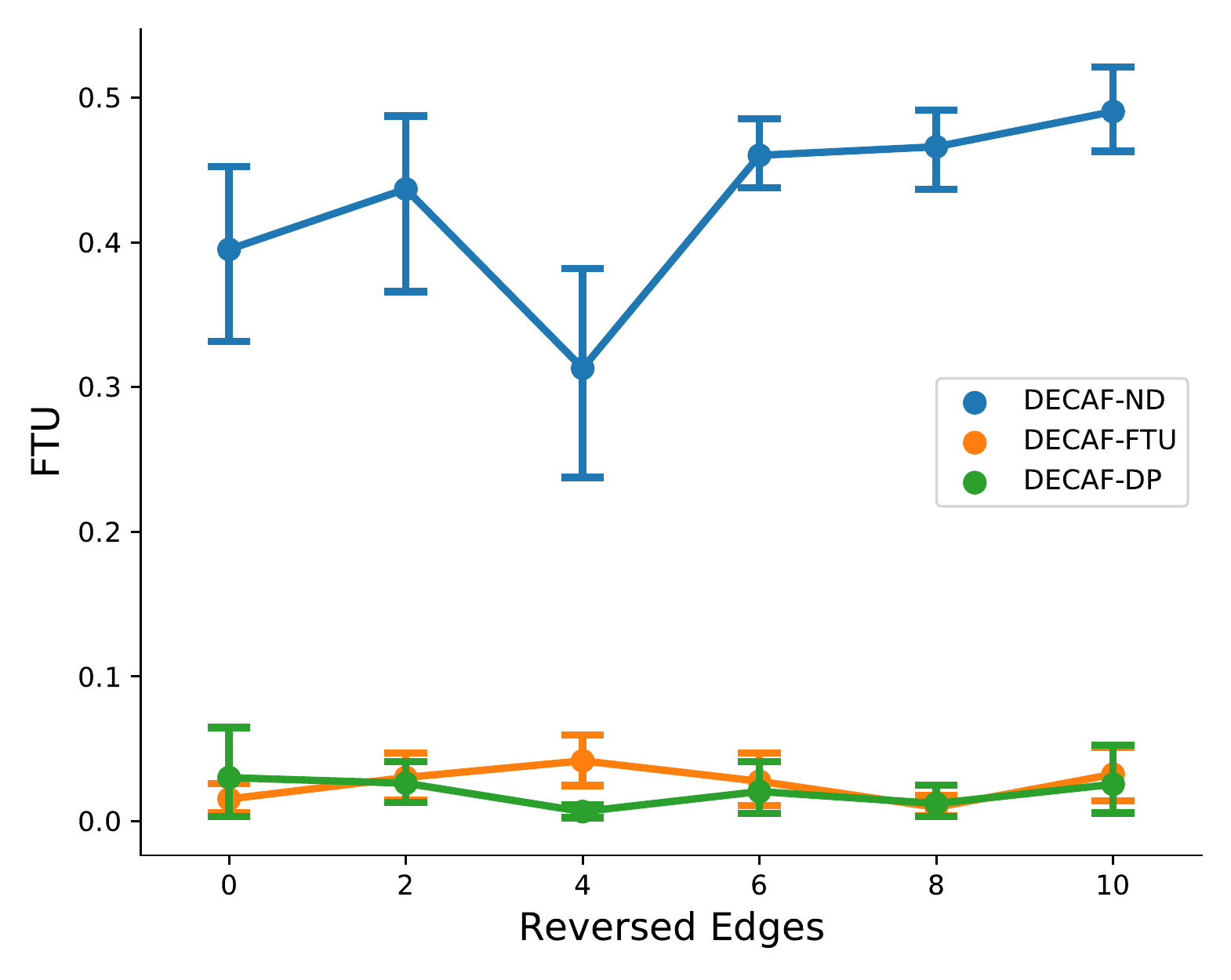} 
         \includegraphics[width=\textwidth]{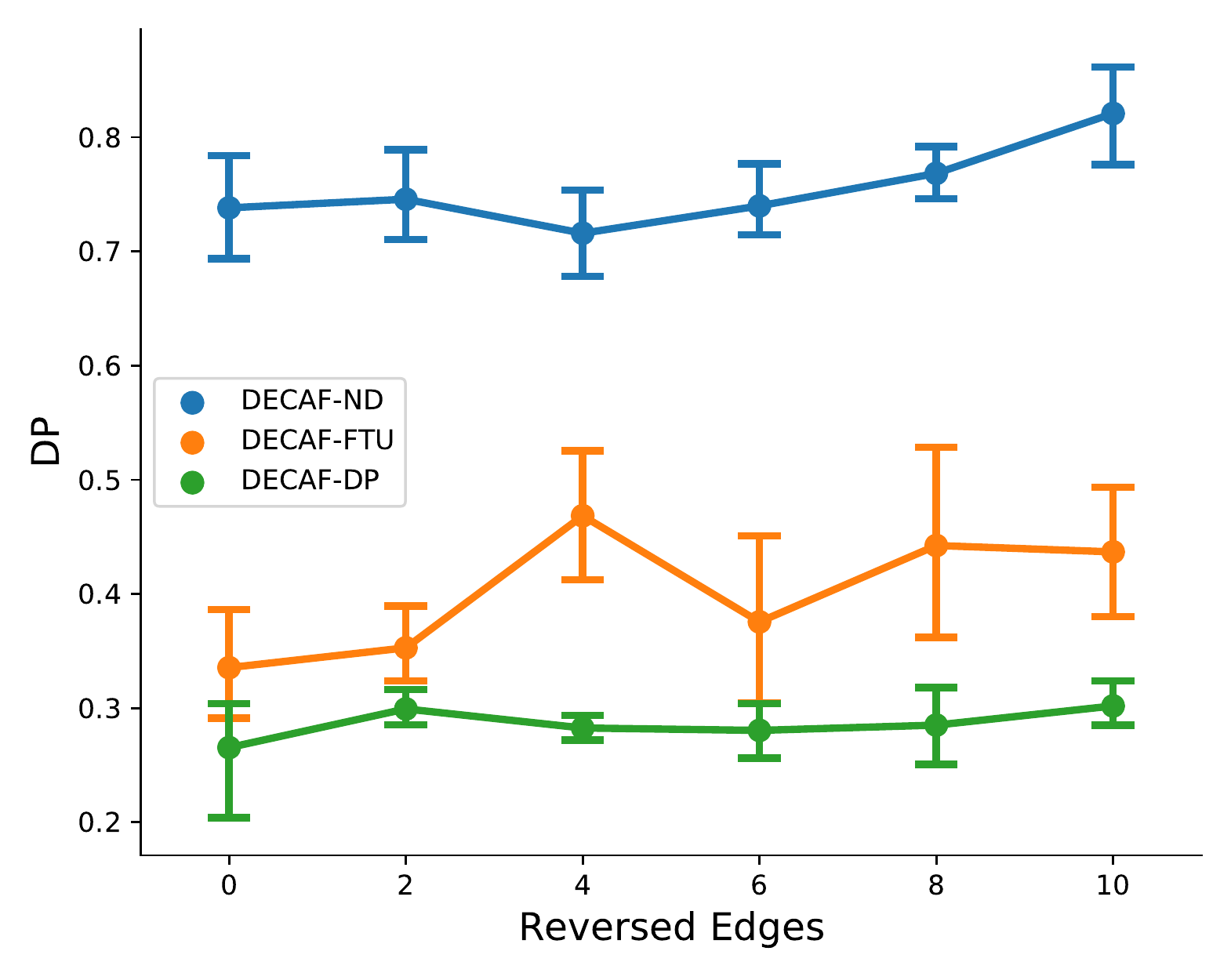}
         \caption{Edge reversal}
     \end{subfigure}
    \caption{Plot of precision, recall, AUROC, FTU, and DP over \textbf{(a)} edge removal, \textbf{(b)} edge addition, and \textbf{(c)} edge reversal on the credit approval dataset.}
    \label{fig:appx_lineplots}
\end{figure}


\section{Hidden Confounders} \label{appx:hiddenconfounder}
\begin{figure}
    \centering

    \begin{subfigure}[b]{0.194\textwidth}
         \centering
         \includegraphics[width=\textwidth]{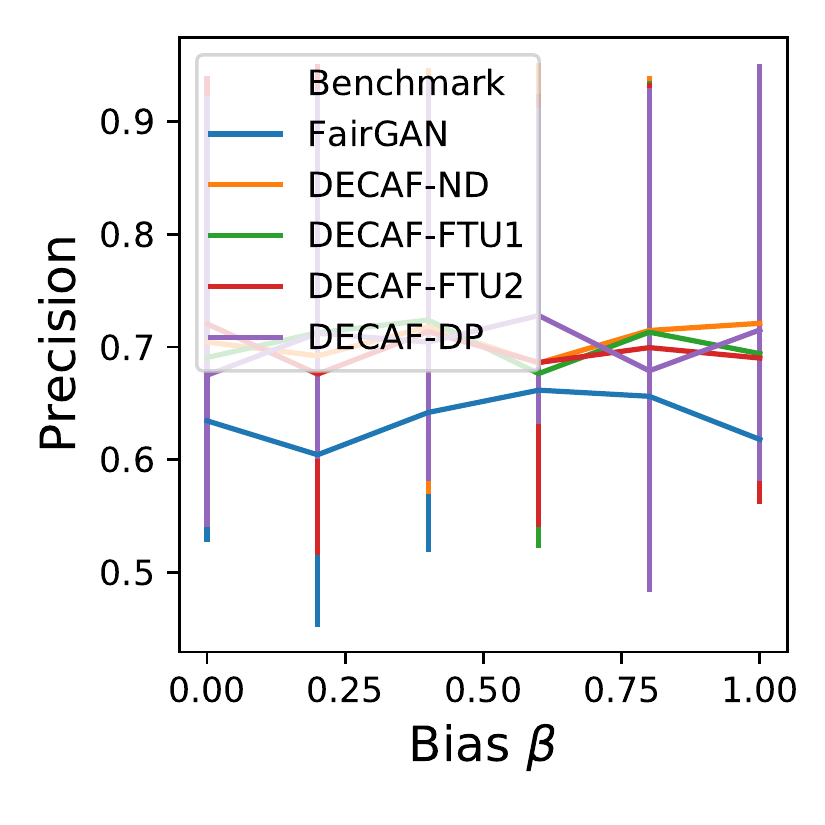}    \vspace{-0.7cm}
         \caption{Precision$\uparrow$}
         \label{fig:prec2}
     \end{subfigure}
        \begin{subfigure}[b]{0.194\textwidth}
         \centering
         \includegraphics[width=\textwidth]{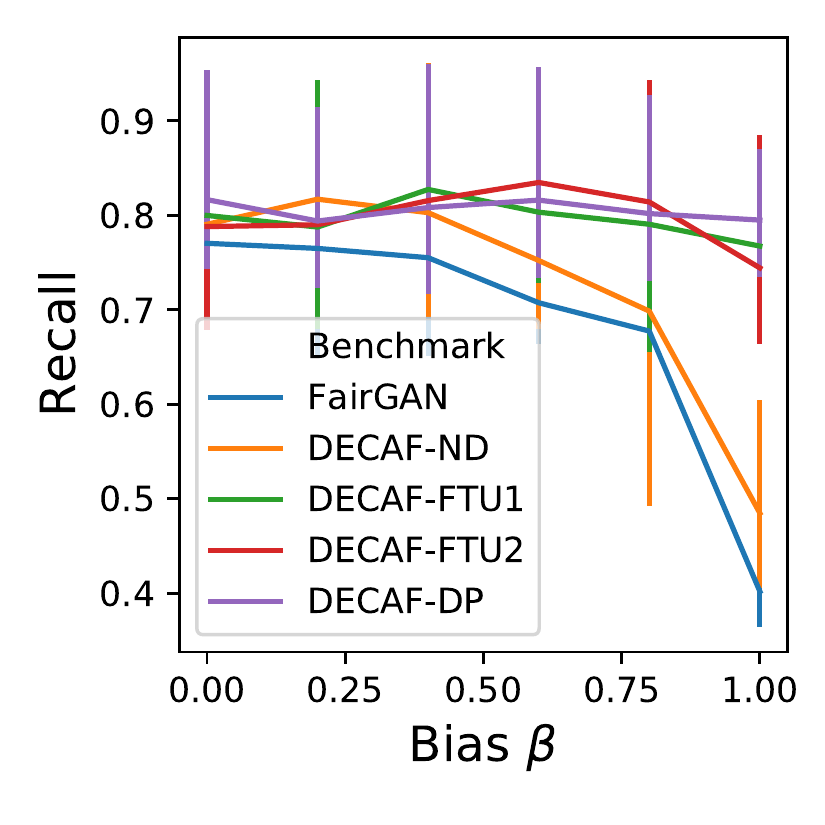}\vspace{-0.3cm}
         \caption{Recall$\uparrow$}
         \label{fig:rec2}
     \end{subfigure}
    \begin{subfigure}[b]{0.194\textwidth}
         \centering
         \includegraphics[width=\textwidth]{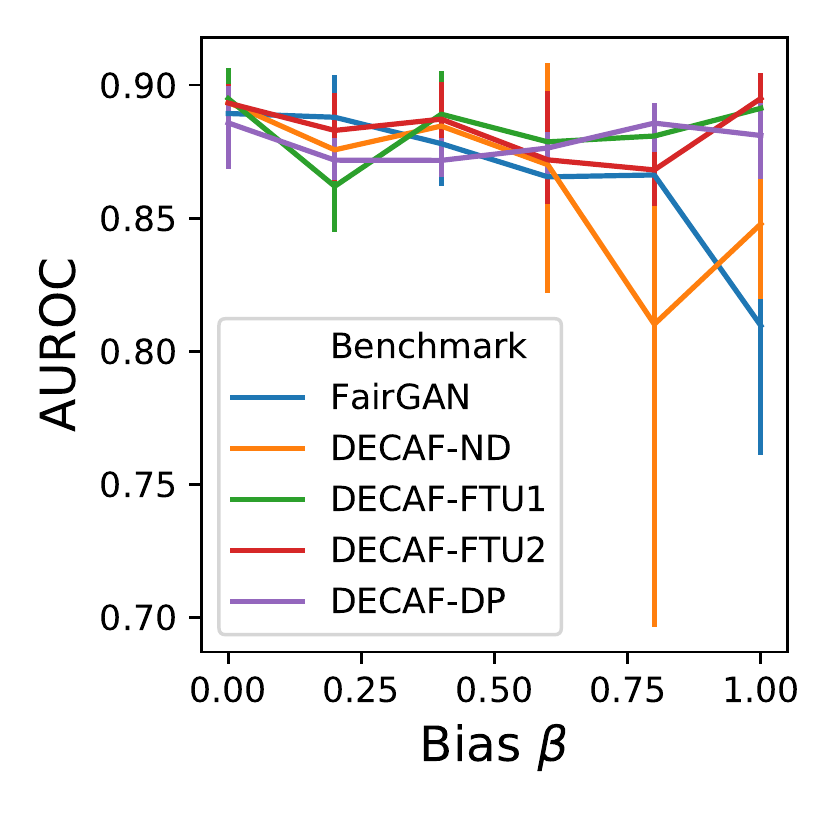}\vspace{-0.3cm}
         \caption{AUROC$\uparrow$}
         \label{fig:auc2}
     \end{subfigure}
        \begin{subfigure}[b]{0.194\textwidth}
         \centering
         \includegraphics[width=\textwidth]{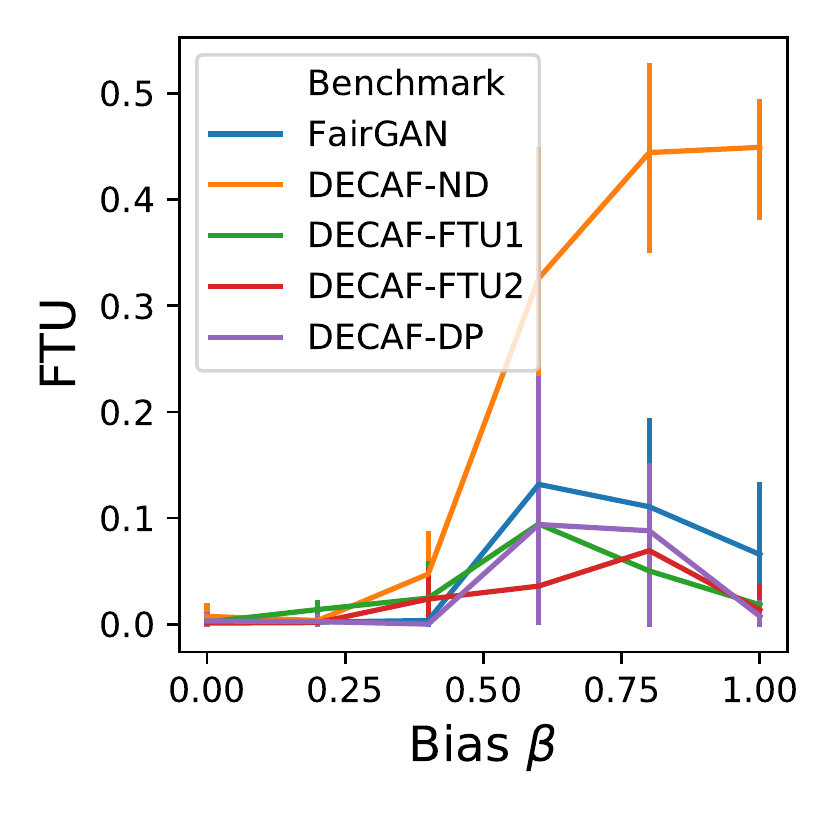}\vspace{-0.3cm}
         \caption{FTU$\downarrow$}
         \label{fig:ftu2}
     \end{subfigure}    
    \begin{subfigure}[b]{0.194\textwidth}
         \centering
         \includegraphics[width=\textwidth]{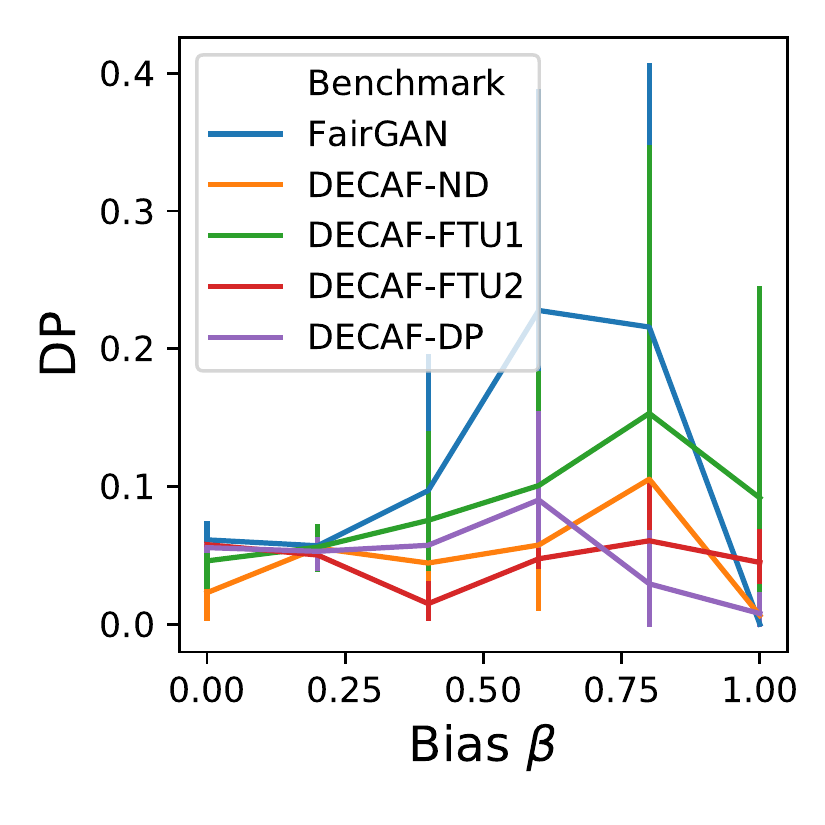}\vspace{-0.3cm}
         \caption{DP$\downarrow$}
         \label{fig:dp2}
     \end{subfigure}  

    \caption{Plot of precision \textbf{(a)}, recall \textbf{(b)}, AUROC \textbf{(c)}, FTU \textbf{(d)}, and DP \textbf{(e)} over bias strength $\beta$ for experiments with hidden confounding.  FairGAN performs similarly in terms of DP and FTU, but DECAF-FTU and DECAF-DP have significantly better data quality as well as down stream prediction capability (AUROC).}

    \label{fig:confounder}
\end{figure}

In this section, we examine DECAF under hidden confounders on the Credit Approval dataset.  Assuming the DAG in Figure~\ref{fig:creditDAG}, we create a hidden confounder by removing the variable for \texttt{education\_level} from the dataset and DAG. 
We then replicate the experimental setup for Section~\ref{sec:exp_ii} under identical conditions.  We present the results in Figure~\ref{fig:confounder}. 

\end{document}